\documentclass[11pt]{article}
\usepackage{geometry}
\geometry{letterpaper,margin=1in,lmargin=1.1in,rmargin=1.1in,tmargin=0.5in,bmargin=0.5in,footskip=0.25in,headsep=0.15in}
\usepackage{microtype}
\usepackage{subcaption}
\usepackage{xspace}
\usepackage{booktabs} 
\pdfoutput=1
%
\usepackage{hyperref}
\usepackage[toc,page]{appendix}
%

\usepackage{amsfonts,bm}

\newcommand{\vol}{{\hbox{\textnormal{vol}}}}

\newcommand{\ex}{{\hbox{ex}}}
\newcommand{\supp}{{\hbox{\textnormal{supp}}}}
\newcommand{\flow}{{\hbox{\textnormal{flow}}}}
\newcommand{\sign}{\mathop{\mathrm{sign}}}
\newcommand{\argmin}{\mathop{\mathrm{argmin}}}

\newcommand{\ddeg}{\vec{\boldsymbol{\mathit{d}}}}
\newcommand\Otil{\widetilde{O}}

\usepackage[utf8]{inputenc}
\usepackage[numbers]{natbib}
\usepackage{amsmath}
\usepackage{amsthm}
\usepackage{adjustbox}
\usepackage{rotating,multirow}
\usepackage{arydshln}
\usepackage{algorithm}
\usepackage{algorithmic}
\usepackage{enumitem}

\usepackage{cleveref}

\usepackage{apptools}
\AtAppendix{\counterwithin{theorem}{section}} 

\newtheorem{theorem}{Theorem}
\newtheorem{proposition}{Proposition}
\newtheorem{corollary}[theorem]{Corollary}
\newtheorem{lemma}[theorem]{Lemma}

\newtheorem{claim}{Claim}

\newtheorem{assumption}{Assumption}

\def\norm#1{\left\| #1 \right\|}

\begin{document}

\title{$p$-Norm Flow Diffusion for Local Graph Clustering\textsuperscript{}\thanks{Equal contribution. Code is available at \url{http://github.com/s-h-yang/pNormFlowDiffusion}.}}

\author{\name Shenghao~Yang\textsuperscript{ \hspace{0.1cm}} \email shenghao.yang@uwaterloo.ca \\ 
        \addr School of Computer Science, \\ University of Waterloo, \\ Waterloo, ON, Canada.
        \AND
        \name Di~Wang\textsuperscript{ \hspace{0.1cm}} \email wadi@google.com \\ 
        \addr Google, \\ Mountain View, CA, US.
        \AND
        \name Kimon~Fountoulakis\textsuperscript{ \hspace{0.1cm}} \email kfountou@uwaterloo.ca \\ 
        \addr School of Computer Science, \\ University of Waterloo, \\ Waterloo, ON, Canada.
}

\author{
        Kimon~Fountoulakis%
        \thanks{School of Computer Science, University of Waterloo, Waterloo, ON, Canada. E-mail: kfountou@uwaterloo.ca.
                KF would like to acknowledge NSERC for providing partial support for this work.
        }
        \and
        Di~Wang%
        \thanks{Google Research, Mountain View, CA, USA. E-mail: wadi@google.com.
        } 
        \and
	Shenghao~Yang%
        \thanks{School of Computer Science, University of Waterloo, Waterloo, ON, Canada. E-mail: s286yang@uwaterloo.ca.
        }
}

\maketitle

\begin{abstract}
\fontsize{10pt}{12pt}\selectfont
Local graph clustering and the closely related seed set expansion problem are primitives on graphs that are central to a wide range of analytic and learning tasks such as local clustering, community detection, semi-supervised learning, nodes ranking and feature inference. Prior work on local graph clustering mostly falls into two categories with numerical and combinatorial roots respectively.\ In this work, we draw inspiration from both fields and propose a family of convex optimization formulations based on the idea of diffusion with $p$-norm network flow for $p\in (1,\infty)$.

In the context of local clustering, we characterize the optimal solutions for these optimization problems and show their usefulness in finding low conductance cuts around input seed set.\ In particular, we achieve quadratic approximation of conductance in the case of $p=2$ similar to the Cheeger-type bounds of spectral methods, constant factor approximation when $p\rightarrow\infty$ similar to max-flow based methods, and a smooth transition for general $p$ values in between.\ Thus, our optimization formulation can be viewed as bridging the numerical and combinatorial approaches, and we can achieve the best of both worlds in terms of speed and noise robustness. We show that the proposed problem can be solved in strongly local running time for $p\ge 2$ and conduct empirical evaluations on both synthetic and real-world graphs to illustrate our approach compares favorably with existing methods.\

\end{abstract}

\section{Introduction}
Graphs are ubiquitous when it comes to modeling relationships among entities, e.g. social~\cite{TMP2012} and biology~\cite{Tuncbag-2016-glioblastoma} networks, and graphs arising from modern applications are massive yet rich in small-scale local structures~\citep{LLDM09_communities_IM,Jeub15,FM16}.\ Exploiting such local structures is of central importance in many areas of machine learning and applied mathematics, e.g. community detection in networks~\cite{NJW01_spectral,WS05_spectralSDM,LLDM09_communities_IM,Jeub15} and PageRank-based spectral ranking in web ranking \cite{PB99,Gle15_SIREV}.
 Somewhat more formally, we consider local graph clustering as the task of finding a community-like cluster around a given set of seed nodes, where nodes in the cluster are densely connected to each other while relatively isolated to the rest of the graph.\
 Moreover, an algorithm is called {\em strongly local} if it runs in time proportional to the size of the output cluster rather than the size of the whole graph.\ 
 
Strongly local algorithms for local graph clustering are predominantly based on the idea of diffusion, which is the generic process of spreading mass among vertices by sending mass along edges.\ 
 This connection has been formalized in many previous results~\citep{ST13,ACL06,WFHM2017}.\
Historically, the most popular diffusion methods are spectral methods (e.g. random walks) based on the connection between graph structures and the algebraic properties of the spectrum of matrices associated with the graph~\cite{LS90,LS93,chung2007-pagerank-heat,Tsiatas12}.
Spectral diffusion methods are widely applied in practice due to the ease of implementation, efficient running time and good performances in many contexts. However, it is also known in theory and in practice that spectral methods can spread mass too aggressively and not find the right cluster when structural heterogeneities exist, and thus are not robust on real-world graphs constructed from very noisy
data~\cite{guatterymiller98,luxburg05_survey,LLDM09_communities_IM,Jeub15}. 
A more recent line of work on diffusion is based on the combinatorial idea of max flow exploiting the canonical duality between flow and cut~\cite{WFHM2017,FDM2017,OZ14}.\ These methods offer improved theoretical guarantees in terms of locating local cuts, and have been shown to be important for pathological cases where spectral methods do not perform well in practice.\ However, combinatorial methods are generally accepted to be more difficult to understand and implement in practice due to the more complicated underlying dynamics.

In this paper, we propose and study a family of primal and dual convex optimization problems for local graph clustering. We call the primal problem $p$-norm flow diffusion, parameterized by the $\ell_p$-norm used in the objective function, and the problem defines a natural diffusion model on graphs using network flow. We refer the dual problem as the $q$-norm local cut problem where $q$ is the dual norm of $p$ (i.e. $1/p+1/q=1$). The optimal solution to the $q$-norm local cut problem can be used to find good local clusters with provable guarantees. Throughout our discussion, we use $p$-norm diffusion to refer both the primal and the dual problems since our main inspiration comes from diffusion, even though our results are technically for the $q$-norm local cut. 

We note that almost all previous diffusion methods are defined with the dynamics of the underlying diffusion procedure, i.e. the step-by-step rules of how to send mass,
 and the analysis of these methods is based on the behaviors of the algorithm.\ Although there are some efforts on interpreting the algorithms with optimization objectives~\citep{FDM2017,FKSCM2017,MOV12_JMLR}, this line of research remains predominantly bottom-up starting with the algorithmic operations.\ On the other hand, our work starts with a clear optimization objective, and analyze the properties of the optimal solution independent from what method is used to solve the problem.\ This top-down approach is distinct in theory, and is also very valuable in practice since the de-coupling of objective and algorithm gives the users the freedom at implementation to choose the most suitable solver based on availability of infrastructure and code-base.\


\subsection{Our Main Contributions}
We refer readers to Section~\ref{sec:notations} for formal discussion. Our first main result is a novel theoretical analysis for local graph clustering using the optimal solution of $p$-norm diffusion.\ In particular, suppose there exists a cluster $B$ with conductance $\phi(B)$, and we are given a set of seed nodes that overlaps reasonably with $B$.\ Then the optimal solution of $p$-norm diffusion can be used to find a cluster $A$ with conductance at most $\mathcal{O}(\phi(B)^{1/q})$.\ For $p=2$, this result resembles the Cheeger-type quadratic guarantees that are well-known in spectral-based local graph clustering literature~\cite{ST13,ACL06}.\
When $p\rightarrow\infty$, our conductance guarantee approaches a constant factor approximation similar to max flow methods, while achieving a smooth transition for general $p$ values in between.\ We observe in practice that our optimization formulation can achieve the best of both worlds in terms of speed and robustness to noise for when $p$ lies in the range of small constants, e.g. $p=4$.

On the algorithm side, we show that a randomized coordinate descent method can obtain an $\epsilon$ accurate solution of $p$-norm diffusion for $p\ge 2$ in strongly local running time.\ 
The running time of our algorithm captures the effect that it takes longer for the algorithm to converge for larger $p$ values.\ 
Although the iteration complexity analysis is not entirely new, we show a crucial result on the monotonicity of the dual variables, which establishes the strongly local running time of the algorithm.\

Our analysis illustrates a natural trade-off as a function of $p$ between robustness to noise and the running time for solving the dual of the $p$-norm diffusion problem.\ In particular, for $p=2$ the diffusion problem can be solved in time linear in the size of the local cluster
, but may have quadratic approximation error $\mathcal{O}(\sqrt{\phi(B)})$.\ On the other hand, the approximation error guarantee improves when $p$ increases, but it also takes longer to converge to the optimal solution. We believe the regime of $p$ being small constants offer the best trade-offs in general.

\subsection{Previous Work}


The local clustering problem is first proposed and studied by~\citep{ST13}.\ Their algorithm Nibble is a truncated power method with early termination, and
their result were later improved to $\tilde{\mathcal{O}}(\sqrt{\phi(B)})$ conductance approximation  and $\tilde{\mathcal{O}}\left(\frac{\vol(B)}{\phi(B)}\right)$ time using 
approximate personalized PageRank~\citep{ACL06}, which is one of the most popular local clustering methods.\ The EvoCut algorithm~\citep{AP09} is the fastest spectral based local clustering method with running time $\tilde{\mathcal{O}}\Big(\frac{\vol(B)}{\sqrt{\phi(B)}}\Big)$.\
In~\citep{ALM13} the authors analyzed the behavior of approximate personalized PageRank under certain intra-cluster well-connected conditions to give strengthened results.\
There are many other spectral based diffusions, examples include local Lanczos spectral approximations \citep{2017-ecml-pkdd}, evolving sets \citep{AGPT2016}, seed expansion methods~\citep{KK2014} and heat-kernel PageRank~\citep{C09,KG14}.\
Note that the above methods based on spectral diffusion are subject to the quadratic approximation error in worst case, informally known as the Cheeger barrier.\

Combinatorial methods for local graph clustering are mostly based on the idea of max flow. Some examples include the flow-improve method~\citep{AL08_SODA}, the local flow-improve method~\citep{OZ14} and the capacity releasing diffusion~\citep{WFHM2017}.\
The former relies on black-box max flow primitives, while the latter two require specialized max flow algorithms with early termination to to achieve running time that is nearly linear in the size of the local cluster.\ These algorithms achieve constant approximation error, i.e., the output cluster has conductance $\mathcal{O}(\phi(B))$, as opposed to quadratic approximation error for spectral methods.\

The first graph clustering method to explicitly incorporate norms beyond $p=2$ is the $p$-spectral clustering proposed by~\citep{BM2009}, based on the notion of graph $p$-Laplacians initially studied in~\citep{Amghibech2003}. It generalizes the standard spectral approach for global graph partitioning and achieves tighter approximations beyond the Cheeger barrier. Similar ideas are later extended to the context of hypergraphs~\citep{LM2018a}. All these methods rely on global analysis and computation of eigenvalues and eigenvectors, and thus do not enjoy the same properties that local methods have.

\section{Preliminaries and Notations}
\label{sec:notations}
We consider un-directed connected graph $G$ with $V$ being the set of nodes and $E$ the set of edges. For simplicity we focus on un-weighted graphs in our discussion, although our result extends to the weighted case in a straightforward manner. 
The degree $\deg(v)$ of a node $v\in V$ is the number of edges incident to it, and we denote $\ddeg$ as the vector of all nodes' degrees and $D=\mbox{diag}(\ddeg)$. We refer to $\vol(C)=\sum_{v\in C}\deg(v)$ as the {\em volume} of $C\subseteq V$. We use subscripts to indicate what graph we are working with, while we omit the subscripts when the graph is clear from context. 

A {\em cut} is treated as a subset $S\subset V$, or a partition $(S, \overline{S})$ where $\overline{S} = V\setminus S$. For any subsets $S,T\subset V$, we denote $E(S,T)=\{\{u,v\}\in E\mid u\in S,v\in T\}$ as the set of edges between $S$ and $T$. The {\em cut-size} of a cut $S$ is $\delta(S)=|E(S,\overline{S})|$. The {\em conductance} of a cut $S$ in $G$ is $\Phi_G(S)= \frac{\delta(S)}{\min(\vol_G(S),\vol_G(V\setminus S))}$. Unless otherwise noted, when speaking of the conductance of a cut $S$, we assume $S$ to be the side of smaller volume. The conductance of a graph $G$ is $\Phi_{G}=\min_{S\subset V}\Phi_G(S)$. 

A {\em routing} (or {\em flow}) is a function $f: E \rightarrow \mathbb{R}$. For each un-directed edge $e$ with endpoints $u,v$, we arbitrarily orient the edge as $e=(u,v)$, i.e. from $u$ to $v$. The magnitude of the flow over $e$ specifies the amount of mass routed over $e$, and the sign indicates whether we send flow in the forward or reverse direction of the orientation of edge $e$, i.e. $f(u,v)$ is positive if mass is sent from $u$ to $v$ and vice versa. We abuse the notation to also use $f(v,u)=-f(u,v)$ for an edge $e=(u,v)$. We denote $B$ as the signed incidence matrix of the graph of size $|E|\times |V|$ where the row of edge $e=(u,v)$ (again, using the arbitrary orientation) has two non-zeros entries, $-1$ at column $u$ and $1$ at column $v$. Throughout our discussion we refer to a function over edges (or nodes) and its explicit representation as an $|E|$-dimensional (or $|V|$-dimensional) vector interchangeably.




\section{Diffusion as Optimization}
\label{sec:lpdiffusion}
Given a graph $G$ with signed incidence matrix $B$, and two functions $\Delta, T:V\rightarrow \mathbb{R}_{\geq 0}$, we propose the following pair of convex optimization problems, which are the $p$-norm flow diffusion
\begin{equation}
    \begin{aligned}
        \mbox{minimize} & \ \|f\|_p \\
        \mbox{s.t.  } & \ B^Tf + \Delta \le T
    \end{aligned}
    \label{eq:primalLp}
\end{equation}
and its dual formulation with $q$ such that $1/q+1/p=1$
\begin{equation}
    \begin{aligned}
        \mbox{maximize} & \ (\Delta - T)^Tx \\
        \mbox{s.t.  } & \ \norm{Bx}_q \le 1 \\
        & \ x \ge 0.
    \end{aligned}
    \label{eq:dualLq}
\end{equation}
The solution to the dual problem $x\in \mathbb{R}^{|V|}_{\geq 0}$ gives an embedding of the nodes on the (non-negative) real line. This embedding is what we actually compute in the context of local clustering, and we use the primal problem and its flow solution $f\in \mathbb{R}^{|E|}$ mostly for insights and analysis purposes. We discuss the interpretation of the primal problem as a diffusion next. 

\paragraph{The Primal Problem. } As mentioned earlier, we consider a diffusion on a graph $G=(V,E)$ as the task of spreading mass from a small set of nodes to a larger set of nodes. More formally, the function $\Delta$ will specify the amount of initial mass starting at each node, and the function $T$ will give the sink capacity of each node, i.e. the most amount of mass we allow at a node after the spreading. We denote the {\em density} (of mass) at a node $v$ as the ratio of the amount of mass at $v$ over $\deg(v)$, and when we use density without any specific node, we mean the maximum density at any node. Naturally in a diffusion, we start with $\Delta$ having small support and high density, and the goal is to reach a state with bounded density enforced by the sink function. This gives a clean physical interpretation where paint (i.e. mass) spills from the source nodes and spreads over the graph and there is a sink at each node where up to a certain amount of paint can settle\footnote{There is also a ``paint spilling'' interpretation for personalized PageRank where instead of sinks holding paint, the paint dries (and settles) at a fixed rate when it pass through nodes. These are two very different mechanisms on how the mass settles.}.

In our work, we will always use the particular sink capacity function where $T(v)=\deg(v)$ for all nodes, i.e. density at most $1$. We extend the notation to write $\Delta(S)=\sum_{v\in S} \Delta(v)$ and $T(S)= \sum_{v\in S} T(v)$ for a subset of nodes $S$, and for our particular choice of sink capacity we have $T(S)=\vol(S)$. We also write $|\Delta|=\Delta(V)$ as the total amount of mass we start with, which remains constant throughout the diffusion as flow routing conserves mass.

Given initial mass $\Delta$ and routing $f$, the vector $\vec{m} = B^Tf+\Delta$ gives the amount of mass $m(v)$ at each node $v$ after the routing $f$, i.e. $m(v)=\Delta(v)+\sum_u f(u,v)$ is the sum of initial mass and the net amount of mass routed to $v$. We say $f$ is a {\em feasible routing} for a diffusion problem when $m(v) \le T(v)$ for all nodes, i.e. the mass obeys the sink capacity at each node. We say $v$'s sink is {\em saturated} if $m(v) \ge T(v)$ and $\ex(v)=\max\left(m(v)-T(v),0\right)$ the {\em excess} at $v$. Note there always exists some feasible routing as long as the total amount of mass $|\Delta|$ is at most $\vol(G)$, i.e. there is enough sink capacity over the entire graph to settle all the mass. This will be the case in the context of local clustering, and we will assume this implicitly through our discussion.

The goal of our diffusion problem is to find a feasible routing with low cost. We consider the $p$-norm of a routing $\norm{f}_p = (\sum_e f^p_e)^{1/p}$ as its cost. For example, when $p=2$ we can view the flow as electrical current, then the cost is the square root of the energy of the electrical flow; when $p=\infty$ the cost corresponds to the most congested edge's congestion of the routing. For $p<\infty$, the cost will naturally encourage the diffusion to send mass to saturate nearby sinks before expanding further, and thus our model inherently looks for local solutions.

For reader familiar with the network flow literature, in the canonical $p$-norm flow problem, we are given the exact amount of mass required at each sink, i.e. the inequality constraint is replaced by equality, and the high level question is {\em how} to route mass efficiently from given source(s) to destination(s). In our diffusion problem, as we have the freedom to choose the destination of mass as long as we obey the sink capacities, the essential question becomes {\em  where} to route the mass so the spreading can be low cost. Despite their similarity and close connection, we believe the distinct challenge of our $p$-norm diffusion problem poses a novel and meaningful direction to the classic problem of network flow.

\paragraph{The Dual Problem. } It is straightforward to check problem~\eqref{eq:dualLq} is the dual of $p$-norm flow diffusion, and strong duality holds for our problems so they have the same optimal value. For the dual problem, we view a solution $x$ as assigning heights to nodes, and the goal is to separate the nodes with source mass (i.e. seed nodes) from the rest of the graph. This is reflected in the objective where we gain by raising nodes with large source mass higher but loss by raising nodes in general. If we consider the height difference between an edge's two endpoints as the {\em length} of an edge, i.e. $|x(u)-x(v)|$, we constrain the separation of nodes with a budget for how much we can stretch the edges. More accurately, the $q$-norm of the vector of edge lengths (i.e. $\norm{Bx}_q$) is at most $1$. This naturally encourages stretching edges along a bottleneck (i.e. cut with small number of edges crossing it) around the seed nodes, since we can stretch each edge more when the number of edges is smaller (and thus raise seed nodes higher). The dual problem also inherently looks for local solutions as raising nodes far away from the source mass only hurts the objective. 

In contrast to random walk based linear operators such as personalized PageRank and heat kernel, even $2$-norm diffusion is non-linear due to the combinatorial constraint $B^Tf+\Delta \le T$. More generally, introducing non-linearity has proved to be very successful in machine learning, most notably in the context of deep neural networks. This may offer some intuition why $2$-norm diffusion achieves better results empirically comparing to personalized PageRank despite the connection between $2$-norm diffusion and spectral methods.\footnote{We note that algorithms (e.g.~\cite{ACL06}) based on random walks nonetheless introduce non-linearity (and also strong locality) to the underlying linear model in the form of approximation or regularization, whereas our model is intrinsically non-linear and strongly local.}




\section{Local Graph Clustering}
\label{sec:clustering}
In this section we discuss the optimal solutions of our diffusion problem (and its dual) in the context of local graph clustering. At a high level, we are given a set of seed nodes $S$ and want to find a cut of low conductance close to these nodes. 
Following prior work in local clustering, we formulate the goal as a promise problem, where we assume the existence of an unknown good cluster $C\subset V$ with $\vol(C)\leq \vol(V)/2$ and conductance $\phi_G(C) = \phi^*$. We consider a generic fixed $G=(V,E)$ and $p\in (1,\infty)$ through our discussion.

\subsection{Diffusion Setup}
To specify a particular diffusion problem and its dual, we need to provide the source mass $\Delta$, and recall we always set the sink capacity $T(v)=\deg(v)$. Given a set of seed nodes $S$, we pick a scalar $\delta$ and let
\begin{equation}
\label{eq:source}
\Delta(v) = \begin{cases} \delta \cdot \deg(v) & \mbox{ if } v\in S,\\
0 & \mbox{ otherwise}. \end{cases}
\end{equation}
Note this gives the total amount of mass $|\Delta|=\delta\cdot\vol(S)$. We will discuss the choice of $\delta$ shortly, but we start with a simple lemma on the locality of the optimal solutions for the primal and dual problems. 
\begin{lemma}
\label{lemma:optimalsupport}
Let $f^*$ and $x^*$ be optimal solutions of~\eqref{eq:primalLp} and~\eqref{eq:dualLq} respectively, $\supp(f^*)$ be the subset of edges with non-zero mass crossing them (i.e. the support of vector $f^*$), and $\supp(x^*)$ be the subset of nodes with strictly positive dual value. We have
\begin{enumerate}
    \item $|\supp(f^*)|\le \delta\cdot\vol(S)$,
	\item $\vol_G(\supp(x^*)) \le \delta\cdot\vol(S)$, and
	\item $x^*(u)>0$ only if $(B^T f^*+\Delta)(u)=\deg(u)$.
\end{enumerate}
\end{lemma}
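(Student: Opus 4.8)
The plan is to establish all three parts by exploiting strong duality together with complementary-slackness-type arguments, and then to use a simple flow-decomposition / support-minimality argument for the combinatorial size bounds. First I would observe that since strong duality holds, the common optimal value is $\opt = \|f^*\|_p = (\Delta - T)^T x^*$, and I would write out the KKT conditions for the primal-dual pair \eqref{eq:primalLp}--\eqref{eq:dualLq}. The dual variable associated with the primal constraint $B^T f + \Delta \le T$ is precisely $x \ge 0$, so complementary slackness immediately gives part (3): if $x^*(u) > 0$ then the $u$-th primal constraint is tight, i.e. $(B^T f^* + \Delta)(u) = T(u) = \deg(u)$. This is the cleanest of the three and I would do it first since it also feeds into the other two.

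For part (1), the idea is that an optimal $p$-norm flow can be taken to be supported on a ``thin'' subgraph. Concretely, the total mass that is actually moved is bounded: because any node $v \notin S$ starts with $\Delta(v) = 0$ and can hold at most $T(v) = \deg(v)$, while feasibility forces $m(v) \le \deg(v)$, the amount of mass that ever flows \emph{into} the complement of $S$ is at most $\vol(V \setminus \supp) $, but more usefully, the net mass leaving $S$ is at most $|\Delta| - (\text{mass retained in } S) \le |\Delta| = \delta \cdot \vol(S)$. The sharper route is: decompose $f^*$ into path/cycle flows from sources to saturated sinks; cycles can be removed without increasing $\|f^*\|_p$ (strict convexity for $p \in (1,\infty)$ actually makes the optimum unique and cycle-free), so $f^*$ is a sum of source-to-sink paths carrying total mass $\le |\Delta|$. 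Each unit of flow on an edge must be ``paid for'' by delivering mass to a sink, and since every saturated sink $v$ absorbs a positive \emph{integer}-bounded amount... — here I need to be a little careful, so the robust argument is: the number of edges with nonzero flow is at most the number of edges needed to connect sources to the mass they deliver, and since total delivered mass is $\delta \vol(S)$ and each "used" edge corresponds to at least one unit of... Actually the clean statement the authors surely intend is a counting bound $|\supp(f^*)| \le |\Delta| = \delta \cdot \vol(S)$, obtained by charging each edge in the support to distinct excess/delivered mass; I would formalize this via the observation that on a path-decomposed optimal flow, contracting along any edge that is not "needed" strictly decreases the $p$-norm, so in the (unique) optimum every support edge carries flow that is witnessed by mass movement totaling at least one unit when summed appropriately. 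Part (2) then follows from part (3) combined with part (1): $\supp(x^*) \subseteq \{u : (B^T f^* + \Delta)(u) = \deg(u)\}$, and the set of saturated nodes has volume bounded by the mass it holds, $\vol_G(\{u \text{ saturated}\}) = \sum_{u \text{ sat}} \deg(u) = \sum_{u \text{ sat}} m(u) \le |\Delta| = \delta \cdot \vol(S)$, where the last inequality uses conservation of mass ($\sum_v m(v) = |\Delta|$) and $m(v) \ge 0$.

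I expect the main obstacle to be part (1): making rigorous the claim that $|\supp(f^*)| \le \delta \cdot \vol(S)$ rather than the weaker $\vol(\supp(f^*)) \le 2\delta\vol(S)$ or similar. The subtlety is that an edge can carry an arbitrarily small amount of flow, so a naive "one edge per unit of mass" charging does not work without using optimality. The right tool is strict convexity of $\|\cdot\|_p^p$ for $p \in (1,\infty)$: I would argue that if the support of an optimal flow induced any cycle, rerouting around it to zero out one edge would strictly reduce the objective — so $\supp(f^*)$ is a forest on the vertices it touches; then on a forest, the number of edges is less than the number of vertices touched, and a leaf of this forest must be either a source node or a saturated sink that received all its mass through its unique support edge, which lets me set up an inductive peeling argument charging each edge to a distinct "unit" of the conserved mass $|\Delta|$. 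Getting the bookkeeping of this peeling argument exactly right — in particular handling source nodes that are also internal to the forest and sinks that receive only fractional mass — is the delicate part; everything else is a direct consequence of duality and mass conservation.
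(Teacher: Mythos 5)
Your parts (2) and (3) are correct and essentially identical to the paper's proof: (3) is complementary slackness for the primal--dual pair, and (2) follows because $\supp(x^*)$ is contained in the set of saturated nodes, whose volume equals the mass those nodes hold and is therefore at most $|\Delta|=\delta\cdot\vol(S)$.

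The genuine gap is in part (1). Your plan rests on the claim that, by strict convexity, $\supp(f^*)$ is a forest (``if the support induced any cycle, rerouting around it to zero out one edge would strictly reduce the objective''). That claim is false for $p\in(1,\infty)$: rerouting around a cycle only helps when the flow is consistently oriented around it (a circulation); when the cycle's edges carry flow in mixed directions, cancelling one edge increases the magnitudes on others, and at the optimum the derivative along the cycle is zero rather than the cycle being absent from the support. Concretely, take a triangle $\{a,b,c\}$ with a pendant node $d$ attached to $c$, put all source mass $\Delta(a)=\vol(G)=8$ at $a$, so every sink constraint is forced tight and the feasible flows form a line parameterized by $t=f(b,c)$ with $f(a,b)=2+t$, $f(a,c)=4-t$, $f(c,d)=1$; the one-sided derivative of the cost at $t=0$ is $p\left(2^{p-1}-4^{p-1}\right)<0$, so the unique optimum has $t^*>0$ and all three triangle edges carry flow. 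Hence the support contains a cycle and no leaf-peeling induction can start. The paper's argument for (1) is much simpler and avoids charging edges to units of delivered mass: the path-truncation argument you already invoke shows that $f^*(u,v)>0$ only if $u$ is saturated, so every support edge leaves a saturated node; therefore $|\supp(f^*)|\le \sum_{u\ \mathrm{saturated}}\deg(u)=\vol(\mathrm{saturated\ nodes})\le|\Delta|=\delta\cdot\vol(S)$, the last inequality holding because each saturated node holds exactly $\deg(u)$ units of the $|\Delta|$ total mass. This is the ``clean counting bound'' you suspected exists, but the charge is edges-to-volume-of-saturated-endpoints, not edges-to-mass-moved, and it needs no acyclicity.
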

\begin{proof}
For $p< \infty$, the optimal routing will never push mass out of a node $u$ unless $u$'s sink is saturated, i.e. $f^*(u,v)>0$ for $u,v$ only if $(B^Tf^*+\Delta)(u) = \deg(u)$. To see this, take the optimal primal solution $f^*$, and consider the decomposition of $f^*$ into flow paths, i.e., the path that the diffusion solution used to send each unit of mass from its source node to the sink at which it settled. If any node other than the last node on this path has remaining sink capacity, we can truncate the path at that node, and strictly reduce the total cost of the diffusion solution. As each unit of mass is associated with the sink of one node, the total amount of mass $\delta\cdot\vol(S)$ upper-bounds the total volume of the saturated nodes since it takes $\deg(v)$ amount of mass to saturate the node $v$. This observation proves the first claim. 

The dual variable $x^*(u)$ corresponds to the primal constraint $(B^T f^*+\Delta)(u)\le \deg(u)$, and it is easy the check the third claim of the lemma is just complementary slackness. The second claim follows from the first and the third claim.
\end{proof}

Now we discuss how to set $\delta$. The intuition is that we want the total amount of source mass starting in $C$ to be a constant factor larger than the volume of $C$, say $\Delta(C)\ge 2\vol(C)$ (any constant reasonably larger than $1$ would work). The reason is that in such scenario, at least $\Delta(C) - \vol(C) \ge \vol(C)$ amount of mass has to be routed out of $C$ since the nodes in $C$ have total sink capacity $\vol(C)$. When $C$ is a cut of low conductance, any feasible routing must incur a large cost since $\vol(C)$ amount of mass has to get out of $C$ using a relatively small number of discharging edges. In this case, the optimal dual solution $x^*$ will certify the high cost of any feasible primal solution. Naturally, the appropriate value of $\delta$ to get $\Delta(C)\ge 2\vol(C)$ depends on how well the seed set $S$ overlaps with $C$. Suppose $\vol(S\cap C)\geq \alpha\vol(C)$, then we can set $\delta = 2/\alpha$. Without loss of generality, we assume the right value of $\delta$ is known since otherwise we can employ binary search to find a good value of $\delta$. 

More formally, we derive a low conductance cut from $x^*$ using the standard sweep cut procedure. In our case, because $x^*$ has bounded support, the procedure can be implemented in $O(\delta\cdot\vol(S))$ total work.
\begin{figure}[h]
\fbox{\parbox{\textwidth}{
\begin{enumerate}
\item Sort the nodes in decreasing order by their values in $x^*$.
\item For each $i\geq 1$ such that the $i$-th node still has strictly positive dual value, consider the cut containing the first $i$ nodes. Among all these cuts (also called {\em level cuts}) output the one with the smallest conductance. 
\end{enumerate}
}}
\caption{The Sweep Cut Procedure.}
\label{fig:sweepcut}
\end{figure}

\subsection{Local Clustering Guarantee}
\begin{theorem}
\label{thm:main}
Given a set of seed nodes $S$, suppose there exists a cluster $C$ such that
\begin{enumerate}
    \item $\vol(S\cap C)\geq \alpha\vol(C)$ for some $\alpha\in (0,1]$,
    \item $\vol(S\cap C)\geq \beta\vol(S)$ for some $\beta\in (0,1]$.
\end{enumerate}
Then the cut $\tilde{C}$ returned by the sweep cut procedure on the optimal dual solution $x^*$ satisfies
\[
\phi(\tilde{C})\leq O\left(\frac{\phi(C)^{1/q}}{\alpha\beta}\right)
\]
where $q\in (1,\infty)$ is the norm used in~\eqref{eq:dualLq}
\end{theorem}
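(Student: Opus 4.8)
The plan is to follow the by-now-standard template for proving Cheeger-type guarantees from a diffusion/embedding, adapted to the $p$-norm setting. The proof has two halves: a lower bound showing the optimal primal cost $\|f^*\|_p$ (equivalently, by strong duality, the optimal objective $(\Delta-T)^Tx^*$) cannot be too small because mass must escape the good cluster $C$; and an upper bound showing that if the sweep cut on $x^*$ had large conductance everywhere, then one could construct a cheap feasible flow, contradicting the lower bound. First I would fix the choice of $\delta$: by hypotheses (1) and (2) we have $\vol(S\cap C)\ge \alpha\vol(C)$ and $\vol(S\cap C)\ge\beta\vol(S)$, and as the paragraph before the theorem indicates, $\delta = \Theta(1/\alpha)$ (say $\delta = 2/\alpha$) ensures $\Delta(C)\ge 2\vol(C)$, while simultaneously $|\Delta|=\delta\vol(S) = O(\vol(S\cap C)/(\alpha\beta)) = O(\vol(C)/(\alpha\beta))$, so the support bound from Lemma~\ref{lemma:optimalsupport} keeps everything strongly local and, more importantly, keeps the total escaping mass comparable to $\vol(C)$ up to the $1/(\alpha\beta)$ factor.

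Next, the lower bound. Since $\Delta(C)\ge 2\vol(C)$ and the total sink capacity inside $C$ is $T(C)=\vol(C)$, any feasible routing $f$ must send at least $\vol(C)$ units of mass across the edge boundary $E(C,\overline C)$, which has only $\delta(C) = \phi(C)\vol(C)$ edges. By Hölder/convexity, the $p$-norm of a flow that pushes total net mass $M$ across a set of $k$ edges is at least $M/k^{1/q}$ (the minimum is achieved by spreading evenly). Hence $\|f^*\|_p \ge \vol(C) / (\phi(C)\vol(C))^{1/q} = \vol(C)^{1/p}/\phi(C)^{1/q}$, up to the $\alpha,\beta$ factors that I will need to track carefully. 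This gives, via strong duality, $(\Delta-T)^Tx^* = \|f^*\|_p \ge \Omega\big(\vol(C)^{1/p}\phi(C)^{-1/q}\big)$ (adjusted by $1/(\alpha\beta)$-type constants).

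For the upper bound / sweep-cut half, I would argue contrapositively. Suppose every level cut $C_i = \{$top $i$ nodes by $x^*$ value$\}$ with $C_i \subseteq \supp(x^*)$ has conductance $\phi(C_i) \ge \gamma$ for the target threshold $\gamma = \Theta(\phi(C)^{1/q}/(\alpha\beta))$. Using the co-area / layer-cake identity $\|Bx^*\|_1 = \sum_{\text{edges }e}|x^*(u)-x^*(v)| = \int_0^\infty \delta(\{x^* > t\})\,dt \ge \gamma \int_0^\infty \vol(\{x^*>t\})\,dt = \gamma\sum_v \deg(v)x^*(v) = \gamma\, \ddeg^T x^*$, together with the constraint $\|Bx^*\|_q\le 1$ and an appropriate $\ell_q$-versus-$\ell_1$ inequality on the $\le|\supp(x^*)|$ nonzero edge-length entries, I can upper bound $\ddeg^Tx^*$ and hence the dual objective $(\Delta-T)^Tx^* \le |\Delta|\cdot\max_v x^*(v)/\!\deg(v)$ or more directly $(\Delta-T)^Tx^* \le \delta\,\ddeg_S^Tx^* \le \delta\,\ddeg^Tx^*$. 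Combining this upper bound with the lower bound from the previous paragraph forces $\gamma \le O(\phi(C)^{1/q}/(\alpha\beta))$, i.e. some level cut already achieves the claimed conductance; since the sweep cut outputs the best level cut, $\phi(\tilde C)$ satisfies the bound.

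The main obstacle I anticipate is getting the exponents and the norm-conversion inequalities to line up cleanly: the co-area argument naturally produces $\|Bx^*\|_1$, but the feasibility constraint controls $\|Bx^*\|_q$, so one must pay a factor of $|\supp(x^*)|^{1/p}$ (or $\delta(\tilde C)^{1/p}$ restricted to boundary edges) when passing between them via Hölder, and this factor has to be exactly the thing that turns $\phi(C)$ into $\phi(C)^{1/q}$ rather than something worse — the bookkeeping must use that the relevant edge set has size $O(\phi(C)\vol(C))$ (from Lemma~\ref{lemma:optimalsupport} and the structure of $C$), not the trivial $|\supp(x^*)|$. A secondary subtlety is handling the ``$\min$'' in the conductance denominator (ensuring the level cuts are the small side, which follows from the volume bound $\vol(\supp(x^*))\le\delta\vol(S)\le\vol(V)/2$ under a mild assumption) and making sure the $\Delta - T$ objective, which is $(\Delta-T)^Tx^* = \sum_{v\in S}(\delta-1)\deg(v)x^*(v) - \sum_{v\notin S}\deg(v)x^*(v)$, is correctly lower-bounded in terms of $\ddeg^Tx^*$ — this is where hypothesis (2) and the factor $\beta$ enter.
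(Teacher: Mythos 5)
Your overall strategy matches the paper's: the lower bound $(\Delta-\ddeg)^Tx^*=\|f^*\|_p\ge \vol(C)/k^{1/q}$ with $k=|E(C,\overline{C})|$ is exactly the paper's Lemma~\ref{lem:top}, and your sweep-cut half is a co-area argument over the level sets of $x^*$, as in Claim~\ref{claim:integral}. Where you genuinely differ is in how the $\ell_1$-versus-$\ell_q$ mismatch is resolved. The paper perturbs the edge lengths, setting $\tilde l(e)=\max\bigl(\vol(C)^{-1/q},l(e)\bigr)$ on positive-length edges, applies co-area to the weighted sum $\sum_e |x^*(u)-x^*(v)|\,\tilde l(e)^{q-1}$ (bounded by $1+O(1/\beta)$ via the support bound), extracts one good level by averaging, and then uses the floor on $\tilde l$ to turn the weighted boundary sum at that level into an edge count. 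You instead apply co-area to $\|Bx^*\|_1$ and $\ddeg^Tx^*$ directly and pay the factor $|\supp(Bx^*)|^{1/p}$ once, globally, via H\"older against the constraint $\|Bx^*\|_q\le 1$. This route works, and the obstacle you flag --- needing the relevant edge set to have size $O(\phi(C)\vol(C))$ --- is a non-issue: the trivial bound $|\supp(Bx^*)|\le|\Delta|=O(\vol(C)/(\alpha\beta))$ from Lemma~\ref{lemma:optimalsupport} suffices, because the $\vol(C)^{1/p}$ it contributes is cancelled by the $\vol(C)^{1/p}$ hidden in the lower bound $\vol(C)/k^{1/q}=\vol(C)^{1/p}\phi(C)^{-1/q}$, leaving exactly $\phi(C)^{1/q}$. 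The only cost of your route is in the $\alpha,\beta$ bookkeeping: chaining $(\Delta-\ddeg)^Tx^*\le\delta\,\ddeg^Tx^*$, the co-area bound, and H\"older yields $O\bigl(\phi(C)^{1/q}/(\alpha(\alpha\beta)^{1/p})\bigr)$, which is not literally $O(\phi(C)^{1/q}/(\alpha\beta))$ but is polynomially equivalent in $1/\alpha,1/\beta$ and gives the same $\Otil(\phi(C)^{1/q})$ conclusion in the regime where the theorem is meaningful; both proofs also share the implicit assumption that each level set is the smaller side of its cut, which you correctly note requires $\vol(\supp(x^*))\le\vol(V)/2$.
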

Note the sweep cut computation only requires the dual solution $x^*$, while the primal solution $f^*$ and the values of $\alpha,\beta$ are only for analysis. Recall we want to set $\delta = 2/\alpha$ in~\eqref{eq:source} to formulate the dual problem, but we assume $\delta$ is known via binary search. We also assume the entire graph is larger than the total amount of source mass so the primal is feasible and the dual is bounded. As summarized below,
\begin{assumption}
\label{assum:source}
The source mass function $\Delta$ in our problem formulation as specified in~\eqref{eq:source} satisfies $\delta = 2/\alpha$, which gives $\Delta(C) \geq 2\vol(C)$ and $|\Delta|=\Delta(S) \le 2\vol(C)/\beta< \vol(G)$.
\end{assumption}
Not surprisingly, our theorem is more meaningful when the given seed set $S$ has a good overlap with some low conductance cut $C$, i.e. when $\alpha,\beta$ are bounded away from $0$. In particular, suppose $\alpha,\beta$ are both at least $\frac{1}{\log^t\left(\vol(C)\right)}$ for some constant $t$, then the bound in our theorem becomes $\Otil(\phi(C)^{1/q})$, where we follow the tradition of using $\Otil$ to hide poly-logarithmic factors. In particular, for $2$-norm diffusion (i.e. $p=q=2$) this matches the bound achieved by spectral and random walk based methods in this setting, and for $p$-norm diffusion with $p$ approaches $\infty$ (i.e. $q$ tends to $1$), this matches the guarantees of previous flow based methods in this setting, e.g.~\cite{WFHM2017,OZ14}. We prove Theorem~\ref{thm:main} in the rest of the section.

We start with the simple lemma stating that the objective value of the optimal dual (and primal) solution must be large.
\begin{lemma}
\label{lem:top}
Suppose $k=|E(C,V\setminus C)|$ is the cut-size of $C$, then 
\[
(\Delta - \ddeg)^Tx^* = \norm{f^*}_p \ge \vol(C)/k^{1/q}.
\]
\end{lemma}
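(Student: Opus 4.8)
The plan is to establish the two parts separately. The equality $(\Delta-\ddeg)^Tx^*=\norm{f^*}_p$ is just strong duality between~\eqref{eq:primalLp} and~\eqref{eq:dualLq} (recall we have fixed $T=\ddeg$), which the text has already asserted holds for our pair of problems. So the real content is the lower bound $\norm{f^*}_p\ge \vol(C)/k^{1/q}$, and for this I would argue in three short steps: (i) a large amount of mass must cross the cut $E(C,V\setminus C)$; (ii) that mass is bounded by the $\ell_1$-norm of $f^*$ restricted to the $k$ cut edges; (iii) H\"older converts the $\ell_1$-norm into $k^{1/q}\norm{f^*}_p$.

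For step (i), let $\vec{m}=B^Tf^*+\Delta$ be the mass vector of the feasible routing $f^*$. Summing $m(v)$ over $v\in C$ and noting that, for any edge with both endpoints in $C$, the two incidence entries of $B$ contribute $+f^*_e$ at one endpoint and $-f^*_e$ at the other and hence cancel, one obtains $m(C)=\Delta(C)-F$, where $F=\sum_{\{u,v\}\in E(C,V\setminus C),\,u\in C} f^*(u,v)$ is the net flow of $f^*$ out of $C$ across the cut. Feasibility gives $m(C)\le T(C)=\vol(C)$, and Assumption~\ref{assum:source} gives $\Delta(C)\ge 2\vol(C)$, so $F=\Delta(C)-m(C)\ge 2\vol(C)-\vol(C)=\vol(C)$. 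For step (ii), each signed term satisfies $f^*(u,v)\le|f^*_e|$, so $F\le\sum_{e\in E(C,V\setminus C)}|f^*_e|=\norm{f^*|_{E(C,V\setminus C)}}_1$. For step (iii), applying H\"older's inequality (equivalently the power-mean inequality) to this $k$-dimensional subvector with exponents $p,q$ yields $\norm{f^*|_{E(C,V\setminus C)}}_1\le k^{1-1/p}\,\norm{f^*|_{E(C,V\setminus C)}}_p\le k^{1/q}\norm{f^*}_p$, using $1-1/p=1/q$. Chaining the three bounds gives $\vol(C)\le F\le k^{1/q}\norm{f^*}_p$, which rearranges to the claim.

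I do not expect a genuine obstacle here; the argument is elementary. The only points that need care are the bookkeeping in step (i) --- tracking the arbitrary edge orientations so that intra-$C$ edges cancel and only the cut edges, counted with the correct sign (pointing out of $C$), survive --- and making sure the H\"older exponent in step (iii) comes out to exactly $k^{1/q}$ and not some other power of $k$.
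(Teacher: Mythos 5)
Your proof is correct and follows essentially the same route as the paper: both arguments observe that at least $\vol(C)$ units of mass must cross the $k$ cut edges and then lower-bound the $p$-norm cost of doing so, with your H\"older step being the rigorous form of the paper's informal ``the cost is minimized by distributing the load evenly'' claim. Your explicit mass-conservation bookkeeping in step (i) and the appeal to strong duality for the equality are slightly more careful than the paper's prose but add no new ideas.
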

\begin{proof}
 By Assumption~\ref{assum:source} there is least $\Delta(C)\ge 2\vol(C)$ amount of source mass trapped in $C$ at the beginning. Since the sinks of nodes in $C$ can settle $\vol(C)$ amount of mass, the remaining at least $\vol(C)$ amount of excess needs to get out of $C$ using the $k$ cut edges. We focus on the cost of $f^*$ restricted to these edges alone. Since $p>1$, the cost is the smallest if we distribute the routing load evenly on the $k$ edges, and it is simple to see this incurs cost 
\[
\left(k\cdot \bigg(\frac{\vol(C)}{k}\bigg)^p\right)^{1/p}= \vol(C)/k^{(p-1)/p}.
\]
The total cost of $f^*$ must be at least the cost incurred just by routing the excess out of $C$.
\end{proof}

Recall that we define the length of an edge $e=(u,v)$ to be $l(e)=|x^*(u)-x^*(v)|$. The actual dual solution may incur edges with tiny non-zero length which causes difficulties in the analysis. Thus, we define the following perturbed edge length so that any non-zero edge length is at least $1/\vol(C)^{1/q}$. Note this is only for analysis purpose and doesn't require changing $x^*$.
\begin{equation}
\label{eq:length}
\tilde{l}(e) = \begin{cases} \max\left(\frac{1}{\vol(C)^{1/q}},l(e)\right) & \mbox{ if } l(e)>0,\\
0 & \mbox{ otherwise}. \end{cases}
\end{equation}
Note the constraint in the dual problem gives
\begin{align*}
\sum_{e=(u,v)}|x^*(u)-x^*(v)|\cdot l(e)^{q-1} \ = \ \sum_{e=(u,v)}|x^*(u)-x^*(v)|^{q}=\norm{Bx^*}^q_q\le 1.
\end{align*}
The next lemma states that the perturbations on edges lengths are small enough so the above quantity remains small.
\begin{lemma}
\label{lem:bottom}
$\sum_{e=(u,v)}|x^*(u)-x^*(v)|\cdot \tilde{l}(e)^{q-1} \le 1+ \frac{2}{\beta}$.
\end{lemma}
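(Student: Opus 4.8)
The plan is to split the edge sum according to which case of the perturbation~\eqref{eq:length} applies. Call an edge $e=(u,v)$ \emph{long} if $l(e)\ge \vol(C)^{-1/q}$ and \emph{short} if $0<l(e)<\vol(C)^{-1/q}$; edges with $l(e)=0$ contribute nothing on either side. For long edges we have $\tilde l(e)=l(e)$ exactly, so restricting the dual-constraint identity
\[
\sum_{e=(u,v)}|x^*(u)-x^*(v)|\cdot l(e)^{q-1}=\norm{Bx^*}_q^q\le 1
\]
to the long edges (all summands being nonnegative) shows that the long edges contribute at most $1$ to $\sum_{e=(u,v)}|x^*(u)-x^*(v)|\cdot \tilde l(e)^{q-1}$.

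For a short edge, $|x^*(u)-x^*(v)|=l(e)<\vol(C)^{-1/q}$ while $\tilde l(e)^{q-1}=\vol(C)^{-(q-1)/q}$, so its contribution is at most $\vol(C)^{-1/q}\cdot\vol(C)^{-(q-1)/q}=\vol(C)^{-1}$. It then remains to count the short edges. Any edge with $l(e)>0$ has $x^*(u)\neq x^*(v)$, hence at least one endpoint lies in $\supp(x^*)$; therefore the number of edges of non-zero length (in particular the number of short edges) is at most the number of edges incident to $\supp(x^*)$, which is bounded by $\vol_G(\supp(x^*))$. By part~2 of Lemma~\ref{lemma:optimalsupport} this is at most $\delta\cdot\vol(S)$, and by Assumption~\ref{assum:source} we have $\delta\cdot\vol(S)=|\Delta|\le 2\vol(C)/\beta$. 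Hence the short edges contribute at most $\frac{2\vol(C)/\beta}{\vol(C)}=\frac{2}{\beta}$, and adding the two bounds gives $1+\frac{2}{\beta}$.

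The argument is essentially bookkeeping; the only place where something non-trivial is invoked is the count of short edges, where one has to notice that an edge of positive length must touch $\supp(x^*)$ and then chain together the support bound from Lemma~\ref{lemma:optimalsupport} with the mass bound $|\Delta|\le 2\vol(C)/\beta$ from Assumption~\ref{assum:source}. One small point to state carefully is that $\vol_G(\supp(x^*))$ over-counts edges internal to $\supp(x^*)$, which is fine since we only need an upper bound on the edge count. No delicate inequality manipulation is needed beyond these steps.
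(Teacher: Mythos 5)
Your proposal is correct and follows essentially the same route as the paper: bound the long (unperturbed) edges by the dual constraint $\norm{Bx^*}_q^q\le 1$, bound each perturbed edge's contribution by $1/\vol(C)$, and count the perturbed edges via $\vol_G(\supp(x^*))\le|\Delta|\le 2\vol(C)/\beta$. The only cosmetic difference is that the paper first replaces $|x^*(u)-x^*(v)|$ by $\tilde{l}(e)$ everywhere and then splits the sum, whereas you split first; the resulting estimates are identical.
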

\begin{proof}
 \begin{equation*}
 \sum_{e=(u,v)}|x^*(u)-x^*(v)|\cdot \tilde{l}(e)^{q-1}
 \le \sum_{e=(u,v)} \tilde{l}(e)^{q}
 = \sum_{e:l(e) = \tilde{l}(e)} l(e)^{q}
 +\sum_{e:l(e) < \tilde{l}(e)} \frac{1}{\vol(C)} \le 1 +  \frac{2}{\beta}.
 \end{equation*}
 The second to last equality follows from the fact that our perturbation only increases the lengths to $\frac{1}{\vol(C)^{1/q}}$. The last inequality follows from that we only increase the length of an edge when its original edge is positive, which means at at least one of its endpoints has positive dual variable value. From Assumption~\ref{assum:source} that $\delta=2/\alpha$ we know that the total amount of mass is at most $\frac{2}{\alpha}\vol(S)$. Together with the conditions in Theorem~\ref{thm:main} we get $\frac{2}{\alpha}\vol(S)\leq \frac{2}{\beta}\vol(C)$. This upper-bounds $\vol_G(\supp(x^*))$ by Lemma~\ref{lemma:optimalsupport}. Thus, the number of edges with positive $l(e)$ is also at most $\frac{2}{\beta}\vol(C)$.
\end{proof}

Consider the sweep cut procedure where we order the nodes by their dual values in $x^*$, and for any $h> 0$ denote the cut $S_h = \left\{u | x^*(u) \ge h\right\}$ to be the set of nodes with dual value larger than $h$. We only need to consider $S_h$ when $h$ equals to the strictly positive dual value of some node in the support of $x^*$, and the sweep cut procedure will examine all such cuts. We proceed to argue that among these level cuts, there must exists some $h$ where $\phi(S_h)$ satisfies the bound in Theorem~\ref{thm:main}, and thus prove the main theorem. 

We start with rewriting the dual objective and constraint using the level cuts.
\begin{claim} 
\label{claim:integral}
With level cuts $S_h$ as defined above, we have
\[
\int_{h=0}^{\infty} \left(\Delta(S_h)-\vol(S_h)\right)dh \ge \vol(C)/k^{1/q},
\]
and
\[
\int_{h=0}^{\infty} \sum_{e\in E(S_h,V\setminus S_h)} \tilde{l}(e)^{q-1}dh \le 1+ \frac{2}{\beta}. 
\]
\end{claim}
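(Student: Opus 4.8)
The plan is to prove both displays by a coarea-type (``layer-cake'') identity that converts each integral over thresholds $h$ into a weighted sum --- over nodes for the first inequality, over edges for the second --- and then to read off the bounds from Lemma~\ref{lem:top} and Lemma~\ref{lem:bottom} respectively. Since $x^*$ has finite support with finite entries (Lemma~\ref{lemma:optimalsupport}), every integral below is really over a bounded range of $h$ and reduces to a finite sum of (gap between consecutive dual values) times (cut quantity), so all interchanges of sum and integral are legitimate (Tonelli, everything nonnegative).

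For the first inequality I would start from the elementary identity $x^*(u)=\int_0^\infty \mathbf{1}[x^*(u)\ge h]\,dh$, valid because $x^*(u)\ge 0$. Writing $\Delta(S_h)=\sum_u \Delta(u)\,\mathbf{1}[x^*(u)\ge h]$ and $\vol(S_h)=\sum_u \deg(u)\,\mathbf{1}[x^*(u)\ge h]$ and exchanging the finite sum with the integral gives
\[
\int_{h=0}^{\infty}\bigl(\Delta(S_h)-\vol(S_h)\bigr)\,dh \;=\; \sum_{u}\bigl(\Delta(u)-\deg(u)\bigr)x^*(u)\;=\;(\Delta-\ddeg)^T x^*.
\]
By Lemma~\ref{lem:top} this equals $\norm{f^*}_p\ge \vol(C)/k^{1/q}$, which is exactly the claimed lower bound.

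For the second inequality I would fix an edge $e=(u,v)$ and assume without loss of generality $x^*(u)\ge x^*(v)$. Then $e\in E(S_h,V\setminus S_h)$ precisely when $x^*(v)<h\le x^*(u)$, so $\int_0^\infty \mathbf{1}[e\in E(S_h,V\setminus S_h)]\,dh = x^*(u)-x^*(v)=l(e)$, regardless of which endpoint is higher. Exchanging sum and integral as before,
\[
\int_{h=0}^{\infty}\sum_{e\in E(S_h,V\setminus S_h)}\tilde{l}(e)^{q-1}\,dh \;=\; \sum_{e=(u,v)}\tilde{l}(e)^{q-1}\,l(e)\;=\;\sum_{e=(u,v)}|x^*(u)-x^*(v)|\cdot\tilde{l}(e)^{q-1},
\]
and the right-hand side is at most $1+\tfrac{2}{\beta}$ by Lemma~\ref{lem:bottom}.

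I do not expect a genuine obstacle here: the claim is essentially Lemmas~\ref{lem:top} and~\ref{lem:bottom} restated in ``integrated'' form. The only points needing care are the threshold-to-edge bookkeeping in the second part (the interval $(x^*(v),x^*(u)]$ has length exactly $l(e)$, and half-open intervals tile without overlap so the indicator integrates cleanly) and the verification that the integrands have compact support in $h$ so that the Tonelli interchange and the equality with the discrete sums used by the sweep cut both hold; both follow from the support bound in Lemma~\ref{lemma:optimalsupport}.
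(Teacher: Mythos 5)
Your proof is correct and follows essentially the same route as the paper: both arguments rewrite the $h$-integrals via the layer-cake decomposition (each node contributes over $h\in(0,x^*(v)]$, each edge over $h\in(x^*(v),x^*(u)]$ of length $l(e)$) and then invoke Lemma~\ref{lem:top} and Lemma~\ref{lem:bottom}. Your added care about Tonelli and the half-open interval bookkeeping is fine but not a different method.
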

\begin{proof} 
Both claims follow from changing the order of summation to get
\[
(\Delta-\ddeg)^Tx^* = \int_{h=0}^{\infty} \left(\Delta(S_h)-\vol(S_h)\right)dh,
\]
and 
\begin{align*}
\sum_{e=(u,v)}|x^*(u)-x^*(v)|\cdot \tilde{l}(e)^{q-1}
=   \int_{h=0}^{\infty} \sum_{e\in E(S_h,V\setminus S_h)} \tilde{l}(e)^{q-1}dh,    
\end{align*}
and then invoke Lemma~\ref{lem:top} and Lemma~\ref{lem:bottom} respectively. To see first claim, pick any node $v$ in the first equation. $v$ contributes $(\Delta(v)-\deg(v))\cdot x^*(v)$ to the left hand side, and the same amount to the right hand side as $v$ is in the level cuts for all $h\in(0,x^*(v)]$.

For the second claim, pick any edge $e=(u,v)$, the edge will cross the level cuts $E\left(S_h,V\setminus S_h\right)$ for all $h\in (x^*(v),x^*(u)]$ (assuming wlog $x^*(u)\ge x^*(v)$), so the contribution from any edge will be the same to both sides of the equation.
\end{proof}

Using Claim~\ref{claim:integral}, we take the ratio to get
\[
\frac{\int_{h=0}^{\infty} \sum_{e\in E(S_h,V\setminus S_h)} \tilde{l}(e)^{q-1}dh}{\int_{h=0}^{\infty} \left(\Delta(S_h)-\vol(S_h)\right)dh} \leq \frac{3 k^{1/q}}{\beta\vol(C)},
\]
which means there must exist some $h$ with $S_h$ non-empty and
\begin{equation}
\label{eq:first}
\frac{\sum_{e\in E(S_h,V\setminus S_h)} \tilde{l}(e)^{q-1}}{\Delta(S_h)-\vol(S_h)} \le \frac{3 k^{1/q}}{\beta\vol(C)}.
\end{equation}
All it remains is to connect the left hand side in the above inequality to the conductance of $S_h$. For the denominator, since the source mass has density at most $\delta=2/\alpha$ at any node, we get 
\begin{equation}
\label{eq:second}
\Delta(S_h)-\vol(S_h)\le \frac{2}{\alpha}\vol(S_h).
\end{equation}
For the numerator, any edge $e=(u,v)$ crossing a level cut $S_h$ must have dual values $x^*(u),x^*(v)$ on different sides of $h$, thus having non-zero length $l(e)$, which means $\tilde{l}(e)$ is at least $1/\vol(C)^{1/q}$. This gives
\begin{equation}
\label{eq:third}
\sum_{e\in E(S_h,V\setminus S_h)} \tilde{l}(e)^{q-1}\ge \frac{|E(S_h,V\setminus S_h)|}{\vol(C)^{(q-1)/q}}.
\end{equation}
Put~\eqref{eq:first},~\eqref{eq:second} and~\eqref{eq:third} together we get
\[
\phi(S_h) = \frac{|E(S_h,V\setminus S_h)|}{\vol(S_h)}\le \frac{6k^{1/q}}{\alpha\beta\vol(C)^{1/q}}=\frac{6\phi(C)^{1/q}}{\alpha\beta},
\]
which proves our main theorem.

\section{Strongly Local Algorithm}
\label{sec:algorithm}
For general constrained convex optimization problem, the state of the art solvers are by interior point techniques. However, these methods start with a fully dense initial point and then iteratively solve a linear system to obtain the update direction, which quickly become prohibitive when the input size is only moderately large. For example, the iterates of CVX~\cite{cvx} fail to converge when solving the dual problem~\eqref{eq:dualLq} on a $60 \times 60$ grid graph with initial mass $|\Delta|=12000$ and $p=4$.

As noted in Lemma~\ref{lemma:optimalsupport}, the support of optimal primal and dual solutions are bounded by the total amount of initial mass $|\Delta|$. 
Hence, we exploit locality and propose a randomized coordinate descent variant that solves an equivalent regularized formulation of~\eqref{eq:dualLq}:
\begin{equation}
	\min_{x\ge0}~F(x) := \tfrac{1}{q}\|Bx\|_q^q - x^T(\Delta - \ddeg).
	\label{eq:dualLqreg}
\end{equation}
Our algorithm is strongly local, that is, its running time depends on $|\Delta|$ rather than the size of the whole graph.

\begin{proposition}[Equivalence]
	\label{lemma:equivalence}
	The solution sets of problems \eqref{eq:dualLq} and \eqref{eq:dualLqreg} are scaled versions of each other.
\end{proposition}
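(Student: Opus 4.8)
The plan is to show that the solution sets of~\eqref{eq:dualLq} and~\eqref{eq:dualLqreg} are related by a single positive scaling factor, namely a fixed power of the optimal value of~\eqref{eq:dualLq}, via a one-dimensional rescaling argument. Write $\vec{c} := \Delta - \ddeg$, so Assumption~\ref{assum:source} gives $\vec{c}^{\,T}\mathbf{1} = |\Delta| - \vol(G) < 0$, and recall $\|B\cdot\|_q$ is a seminorm on $\mathbb{R}^{|V|}$ with kernel exactly $\mathrm{span}\{\mathbf{1}\}$ since $G$ is connected. First I would record two solvability facts and one boundary observation. On the feasible set $\{x\ge 0 : \|Bx\|_q\le 1\}$ of~\eqref{eq:dualLq} every coordinate difference of $x$ is bounded, so the only unbounded feasible direction is $+\mathbf{1}$, along which the objective $\vec{c}^{\,T}x$ strictly decreases; hence the supremum $\sigma$ is attained, and $\sigma\ge\vol(C)/k^{1/q}>0$ by Lemma~\ref{lem:top}. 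For~\eqref{eq:dualLqreg}, along any ray $x_0+td$ with $0\ne d\ge 0$ either $Bd\ne 0$ and $\tfrac1q\|B(x_0+td)\|_q^q$ grows like $t^q$, or $Bd=0$ forces $d\propto\mathbf{1}$ and $-t\,\vec{c}^{\,T}\mathbf{1}\to+\infty$; so $F$ is coercive on $\{x\ge 0\}$ and attains its minimum. Finally, every optimizer $x^{*}$ of~\eqref{eq:dualLq} satisfies $\|Bx^{*}\|_q=1$: the value $0$ would force $x^{*}\propto\mathbf{1}$ and $\vec{c}^{\,T}x^{*}\le 0<\sigma$, while a value in $(0,1)$ would make $x^{*}/\|Bx^{*}\|_q$ strictly better than $\sigma$.

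Next I would carry out the rescaling computation. Let $\mathcal{X}_1$ denote the solution set of~\eqref{eq:dualLq}. For $x^{*}\in\mathcal{X}_1$ we have $\|Bx^{*}\|_q=1$ and $\vec{c}^{\,T}x^{*}=\sigma$, so for $t\ge 0$, $F(tx^{*})=\tfrac{t^{q}}{q}-t\sigma$, which is strictly convex in $t$ and minimized at $t=\sigma^{1/(q-1)}=\sigma^{p/q}$ with value $-\tfrac{1}{p}\sigma^{p}$ (using $\tfrac{q}{q-1}=p$ and $\tfrac{q-1}{q}=\tfrac1p$). Hence $\min_{x\ge 0}F(x)\le-\tfrac1p\sigma^{p}$. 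For the reverse bound, take any $y\ge 0$: if $\|By\|_q=0$ then $F(y)=-\vec{c}^{\,T}y\ge 0>-\tfrac1p\sigma^{p}$; otherwise with $\tau:=\|By\|_q>0$ the point $y/\tau$ is feasible for~\eqref{eq:dualLq}, so $\vec{c}^{\,T}y\le\sigma\tau$ and $F(y)=\tfrac1q\tau^{q}-\vec{c}^{\,T}y\ge\tfrac1q\tau^{q}-\sigma\tau\ge-\tfrac1p\sigma^{p}$ by the same one-variable minimization over $\tau\ge 0$. Therefore $\min_{x\ge 0}F(x)=-\tfrac1p\sigma^{p}$, and tracing the equality conditions, $F(y)=-\tfrac1p\sigma^{p}$ holds iff $\tau=\sigma^{p/q}$ and $\vec{c}^{\,T}(y/\tau)=\sigma$, i.e. iff $y/\sigma^{p/q}\in\mathcal{X}_1$. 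This shows the solution set of~\eqref{eq:dualLqreg} equals $\sigma^{p/q}\,\mathcal{X}_1$, which is the claim.

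I expect the only delicate part to be the degenerate-case bookkeeping forced by the facts that $\|B\cdot\|_q$ is merely a seminorm (kernel $\mathrm{span}\{\mathbf{1}\}$) and that the feasible region of~\eqref{eq:dualLqreg} is unbounded: specifically, confirming $\sigma>0$, that every optimizer of~\eqref{eq:dualLq} saturates $\|Bx\|_q=1$, and that every optimizer of~\eqref{eq:dualLqreg} has $By\ne 0$. All three reduce to the single ingredients $\vec{c}^{\,T}\mathbf{1}<0$ (Assumption~\ref{assum:source}) and $\sigma>0$ (Lemma~\ref{lem:top}); once these are in place, the scaling identity is an immediate consequence of the elementary convex minimizations above.
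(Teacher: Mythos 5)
Your proof is correct, but it takes a genuinely different route from the paper's. The paper argues by matching KKT systems: it writes down the optimality conditions of \eqref{eq:dualLqreg} and of \eqref{eq:dualLq} (with multipliers $y$ for $x\ge 0$ and $\lambda$ for the norm constraint) and exhibits explicit substitutions carrying an optimal triplet of one problem to an optimal triplet of the other. You instead work entirely with objective values: you show the optimal value of \eqref{eq:dualLqreg} equals $-\tfrac1p\sigma^p$ where $\sigma$ is the optimal value of \eqref{eq:dualLq}, by combining the normalization $y\mapsto y/\|By\|_q$ with the elementary one-variable minimization of $\tfrac1q\tau^q-\sigma\tau$, and then read the bijection (with explicit scaling factor $\sigma^{p/q}=\sigma^{p-1}$) off the equality conditions. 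Your approach buys several things the paper's does not: it proves that both problems actually attain their optima (handling the seminorm kernel $\mathrm{span}\{\mathbf{1}\}$ and the unbounded feasible sets explicitly), it sidesteps any appeal to $\nabla\|Bx\|_q^q$, which is not differentiable at points where $Bx$ has zero entries when $q<2$, and it identifies the scaling constant in closed form in terms of $\sigma$. The paper's KKT route is shorter and makes the primal--dual structure (the multipliers $y$, $\lambda$) visible, which is reused later in the algorithmic section. Two small remarks: your citation of Lemma~\ref{lem:top} for $\sigma>0$ could be replaced by the more elementary observation that $\Delta(v)-\deg(v)=(\delta-1)\deg(v)>0$ on the seed set, so a small multiple of the indicator of $S$ already has positive objective; and your claim that attainment follows because the only recession direction is $\mathbf{1}$ is best justified by noting that subtracting $\min_v x(v)\cdot\mathbf{1}$ from any feasible point only improves the objective and lands in a compact slice, or by the standard recession-function argument for convex programs --- either way the gap is cosmetic, not substantive.
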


\begin{proof}
First note that constant non-zero vectors cannot be solutions to any of the problems \eqref{eq:dualLq} and \eqref{eq:dualLqreg}. This is because we pick $\Delta$ such that $ \sum_{i} \Delta(i) \le \sum_{i} \deg(i)$ and an all-zero vector is feasible and has better objective function value than any constant non-zero vector.
Let $x^*$ denote a non-constant solution of \eqref{eq:dualLqreg}. Then $x^*$ satisfies the optimality conditions of \eqref{eq:dualLqreg}
\begin{itemize}
    \item $-(\Delta - \ddeg) - y + \frac{1}{q}\nabla \|Bx\|_q^q=0$
    \item $y(i) x(i) = 0 \ \forall i\in V$
    \item $x,y\ge 0$,
\end{itemize}
for some optimal dual variables $y^*$.

The optimality conditions of problem \eqref{eq:dualLq} are 
\begin{itemize}
    \item $-(\Delta - \ddeg) - y + \lambda \nabla \|Bx\|_q^q=0$
    \item $y(i) x(i) = 0 \ \forall i\in V$
    \item $\lambda (1 - \|Bx\|_q^q) = 0$
    \item $\norm{Bx}_q^q \le 1$
    \item $x,y\ge 0$, $\lambda \ge 0$,
\end{itemize}
where $y$ are the dual variables for the constraint $x\ge 0$, and $\lambda$ is the dual variable for $\norm{Bx}_q^q \le 1$. Observe that by setting $x:=x^*/\|Bx^*\|_q^q$, $\lambda := \tfrac{1}{q}(\|Bx^*\|_q^q)^{q-1}$ and $y:=y^*$, then the triplet $x,\lambda,y$ satisfies the latter optimality conditions.

Let us now prove the reverse. Note that because we pick $\Delta$ such that there exists at least one negative component in $-(\Delta - \ddeg)$, then $\lambda=0$ can never be an optimal dual variable. Thus $\lambda>0$.
Let $\hat{x},\hat{\lambda},\hat{y}$ be a solution to the latter optimality conditions. Observe that $x:=(q \hat{\lambda})^{\frac{1}{q-1}}\hat{x}$ and $y:=\hat{y}$ satisfy the former optimality conditions. This means that the solution sets of the two problems are scaled versions of each other.
\end{proof}

Proposition~\ref{lemma:equivalence} is important because the output of the Sweep Cut procedure is the same for both solutions. This is because the output of Sweep Cut depends only on the ordering of dual variables and not on their magnitude. Therefore, we can use the solution of any of these problems for the local graph clustering problem.

Following Proposition~\ref{lemma:equivalence}, it is easy to see that if one of \eqref{eq:dualLq} or \eqref{eq:dualLqreg} is unbounded, the other must also be unbounded. Moreover, one can easily verify that the $q$-norm regularized problem~\eqref{eq:dualLqreg} is the dual of an equivalent $p$-norm flow diffusion problem
\begin{equation}
	\label{eq:equivprimalLp}
	\begin{split}
		\min & \ \tfrac{1}{p}\|f\|_p^p \\
		\mbox{s.t.}~ & \ B^T f + \Delta \le \ddeg
	\end{split}
\end{equation}
and that strong duality holds, because any $x>0$ is a Slater point for~\eqref{eq:dualLqreg}. Throughout the discussions in this section, we assume that Assumption~\ref{assum:source} holds, i.e. $|\Delta| \le \vol(G)$, so \eqref{eq:equivprimalLp} is feasible, and hence \eqref{eq:dualLqreg} is bounded. Although the two formulations \eqref{eq:dualLq} and \eqref{eq:dualLqreg} are inherently equivalent, the computational advantage of \eqref{eq:dualLqreg} permits the use of many off the shelf first order optimization methods, which is crucial for obtaining strongly local algorithms.

Because the function $F(x)$ in the objective of \eqref{eq:dualLqreg} has non-Lipschitz gradient for any $q < 2$, directly minimizing $F(x)$ requires step-sizes that go to zero to guarantee convergence, which lead to slow practical and worst-case convergence rate. To cope with this, we smooth $F(x)$ by perturbing the $q$-norm term around zero and consider the following {\em globally} smoothed problem
\begin{equation}
	\min_{x\ge0} \ F_\mu(x) :=\frac{1}{q}\sum_{(i,j) \in E} ((x(i)-x(j))^2+\mu^2)^{q/2} - x^T(\Delta-\ddeg),
	\label{eq:globalLq}
\end{equation}
where $\mu > 0$ is a smoothing parameter. The proposed numerical scheme in Algorithm~\ref{alg:coordinatedescent} solves the smoothed problem~\eqref{eq:globalLq}. In fact, we will prove in Theorem~\ref{thm:iterationcomplexity} and Corollary~\ref{cor:runtime} that, by setting out $\mu$ appropriately, Algorithm~\ref{alg:coordinatedescent} finds $\epsilon$ accurate solution to the original regularized dual problem~\eqref{eq:dualLqreg} in strongly local running time.

{\centering
\begin{minipage}{.7\linewidth}
\begin{algorithm}[H]
	\caption{Coordinate solver for smoothed dual problem}
	\quad {\bf Initialize}: $x_0 = 0$ 
	
	\quad {\bf For} $k = 0, 1,2,\ldots,$ {\bf do}
	
	\qquad Set $S_k = \{ i \in V ~|~ \nabla_i F_\mu(x_k) < 0\}$.
	
	\qquad Pick $i_k \in S_k$ uniformly at random.
	
	\qquad Update $x_{k+1} = x_k - \dfrac{\mu^{2-q}}{\deg(i_k)}\nabla_{i_k}F_\mu(x_k) e_{i_k}$.
	
	\qquad {\bf If} $S_k = \emptyset$ {\bf then return} $x_k$.
	\label{alg:coordinatedescent}
\end{algorithm}
\end{minipage}
\par}
\vspace{5mm}

In the context of $p$-norm flow diffusion, coordinate method enjoys a natural combinatorial interpretation: each coordinate update corresponds to sending mass from a node to its neighboring nodes along incident edges. We now formally explain this combinatorial interpretation as it will help us develop a physical intuition on the numerical steps of Algorithm~\ref{alg:coordinatedescent} in terms of diffusing mass in the underlying graph. The basic setting is as follows. Each node $v$ maintains a height $x(v)$. For any fixed $q\in(1,2]$, the amount of flow from node $u$ to node $v$ is determined\footnote{Note that the computation in~\eqref{eq:flowvalue} of flow values from dual variables $x$ comes directly from the primal-dual stationary condition of \eqref{eq:globalLq}.} by the relative heights $x(u)$ and $x(v)$ and the smoothing parameter $\mu \ge0$,
\begin{equation}
\label{eq:flowvalue}
	\flow_q(u, v; x, \mu) := ((x(u)-x(v))^2 + \mu^2)^{q/2-1}(x(u)-x(v)).
\end{equation}
Therefore, given $x$ and $\mu$, the corresponding mass and the excess at node $v$ are
\[
	\begin{split}
		m(v; x, \mu) &= \Delta(v) + \sum_{u \sim v} \flow_q(u, v; x, \mu), \\
		\ex(v; x, \mu) &= \max\{0, m(v; x, \mu) - \deg(v)\}.
	\end{split}
\]
One can easily verify that, under these definitions,
\[
	- \nabla_i F_\mu(x) = m(i; x, \mu) - \deg(i), \quad\forall~i \in V,
\]
and that the steps we layout in Figure~\ref{fig:comb_alg} are indeed equivalent to the steps in Algorithm~\ref{alg:coordinatedescent}.

\begin{figure}[h]
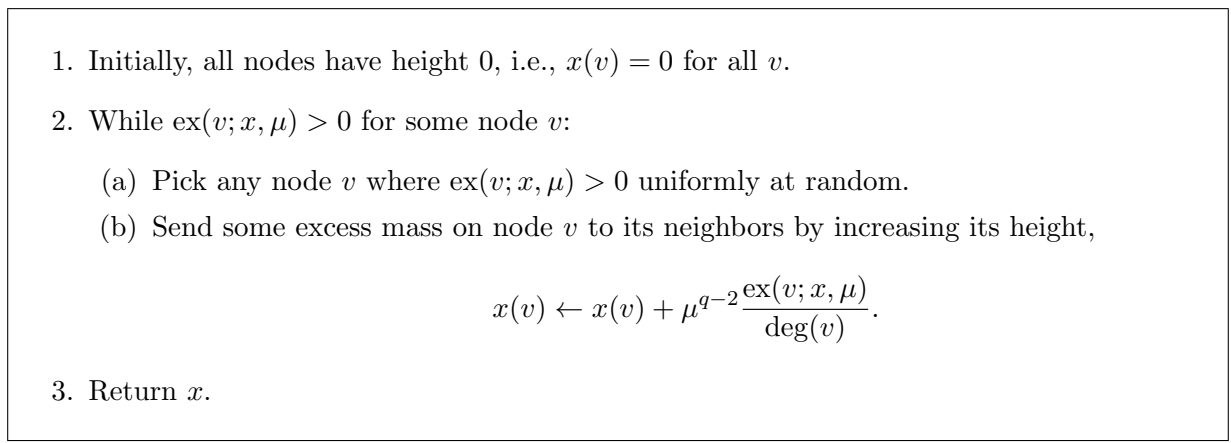

\centering
\fbox{\parbox{\textwidth}{
	\begin{enumerate}
		\item Initially, all nodes have height 0, i.e., $x(v) = 0$ for all $v$.
		\item While $\ex(v; x, \mu) > 0$ for some node $v$:
			\begin{enumerate}[label=(\alph*)]
				\item Pick any node $v$ where $\ex(v; x, \mu) > 0$ uniformly at random.	
				\item Send some excess mass on node $v$ to its neighbors by increasing its height,
					\[
						x(v) \gets x(v) + \mu^{q-2}\frac{\ex(v; x, \mu)}{\deg(v)}.
					\]
			\end{enumerate}
		\item Return $x$.
	\end{enumerate}
}}
\caption{A combinatorial description of Algorithm~\ref{alg:coordinatedescent}}
\label{fig:comb_alg}
\end{figure}

The rest of this section is devoted to proving that Algorithm~\ref{alg:coordinatedescent} solves \eqref{eq:dualLqreg} in strongly local running time. We start by introducing a related problem that will help our analysis.

Let $x_\mu^*$ denote an optimal solution of~\eqref{eq:globalLq}. We define a corresponding {\em locally} smoothed problem
\begin{equation}
\label{eq:localLq}
	\min_{x\ge0} \ F_\mu^l(x),
\end{equation}
where $F_\mu^l(x)$ is obtained by perturbing the $q$-norm term around zero on the edges defined by $\supp(Bx_\mu^*) \subseteq E$,
\[
	F_\mu^l(x) ~:=~ 
	\frac{1}{q}\sum_{(i,j) \in \supp(Bx_\mu^*)} ((x(i)-x(j))^2 + \mu^2)^{q/2} ~+~ 
	\frac{1}{q}\sum_{(i,j) \not\in \supp(Bx_\mu^*)} |x(i)-x(j)|^q ~~-~~ x^T(\Delta-\ddeg).
\]
The following two lemmas, Lemma~\ref{lemma:locality_optsol} and Lemma~\ref{lemma:smoothapprox}, show that the smoothed objective value $F_\mu^l(x)$ is ``absolute locally'' close to the original objective value $F(x)$. That is, for any $x$, the maximum difference in the objective values between $F_\mu^l(x)$ and $F(x)$ depends only on $|\Delta|$ and not the dimension of the ambient space.

\begin{lemma}[Locality]
\label{lemma:locality_optsol}
	The number of edges defined by $\supp(Bx_\mu^*)$ is bounded by the total amount of initial mass, i.e., 
	\[
		|\supp(Bx_\mu^*)| < |\Delta|.
	\]
\end{lemma}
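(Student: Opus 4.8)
The plan is to mimic, in the smoothed setting, the argument of Lemma~\ref{lemma:optimalsupport}: the support of the optimal flow is controlled by the total amount of mass that the sink capacities must absorb, which is in turn bounded by $|\Delta|$. Concretely, I would first pass from the dual optimal solution $x_\mu^*$ to the corresponding primal (flow) quantities via the stationary condition of \eqref{eq:globalLq}. Recall the smoothed flow on edge $(u,v)$ is $\flow_q(u,v;x_\mu^*,\mu)=((x_\mu^*(u)-x_\mu^*(v))^2+\mu^2)^{q/2-1}(x_\mu^*(u)-x_\mu^*(v))$, and the KKT conditions for \eqref{eq:globalLq} give $-\nabla_i F_\mu(x_\mu^*) = m(i;x_\mu^*,\mu)-\deg(i) \le 0$ for all $i$, with equality (saturation) whenever $x_\mu^*(i)>0$ by complementary slackness. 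So the support of $Bx_\mu^*$ is contained in the set of edges incident to $\supp(x_\mu^*)$, and it suffices to bound $\vol(\supp(x_\mu^*))$, or more precisely the number of edges touching it.

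Second, I would set up the mass-conservation bookkeeping. Let $f = \flow_q(\cdot;x_\mu^*,\mu)$ viewed as a vector in $\mathbb{R}^{|E|}$; then $B^Tf + \Delta = \vec m$ is the mass vector, which is coordinate-wise $\le \ddeg$ by feasibility, and mass is conserved: $\sum_v m(v) = |\Delta|$. Now observe that every edge in $\supp(Bx_\mu^*)$ has at least one endpoint in $\supp(x_\mu^*)$ (since an edge with both endpoints at height $0$ carries zero smoothed flow), and every node $u\in\supp(x_\mu^*)$ is saturated, i.e. $m(u)=\deg(u)$. The key quantitative step is that saturated nodes carry a full $\deg(u)$ units of settled mass, so $\sum_{u\in\supp(x_\mu^*)}\deg(u) = \sum_{u\in\supp(x_\mu^*)} m(u) \le \sum_{v\in V} m(v) = |\Delta|$, using $m\ge 0$ — and $m\ge 0$ needs a brief justification, namely that at any node of height $0$ the incoming smoothed flow from higher neighbors is nonnegative so $m(v)\ge\Delta(v)\ge 0$, while saturated nodes have $m(v)=\deg(v)>0$. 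Hence $\vol(\supp(x_\mu^*)) \le |\Delta|$. Finally, since each edge in $\supp(Bx_\mu^*)$ contributes to the degree of at least one node in $\supp(x_\mu^*)$, we get $|\supp(Bx_\mu^*)| \le \vol(\supp(x_\mu^*)) \le |\Delta|$; to get the strict inequality one notes that the all-zero vector is feasible with value $0$ while any nonzero $x_\mu^*$ has negative objective (because $\Delta-\ddeg$ has a strictly positive component on $S$, by Assumption~\ref{assum:source}), so $\supp(x_\mu^*)$ is nonempty and at least one saturated node has $m(v)=\deg(v) < $ its share in the worst case, making the bound strict; alternatively strictness is inherited because the edge-counting inequality $|\supp(Bx_\mu^*)|\le\vol(\supp(x_\mu^*))$ is itself strict unless $\supp(x_\mu^*)=V$, which is impossible as $x_\mu^*$ is non-constant.

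The main obstacle I anticipate is making rigorous the ``mass is conserved and nonnegative'' claims in the smoothed model, i.e. that $\vec m = B^Tf+\Delta$ with $f$ the smoothed flow genuinely behaves like a physical diffusion: $B^T f$ sums to zero by antisymmetry of $\flow_q$ (which is immediate since $\flow_q(u,v)=-\flow_q(v,u)$), but checking $m(v)\ge 0$ everywhere — and in particular that unsaturated nodes still hold nonnegative mass — requires the observation that a node at height $0$ can only receive flow, never send it, because $\flow_q(u,v;x,\mu)$ has the same sign as $x(u)-x(v)$. A secondary subtlety is ensuring the argument does not secretly use $\mu=0$: all steps go through verbatim for $\mu>0$ since $\flow_q$ is still an odd, monotone function of the height difference, so the combinatorial interpretation in Figure~\ref{fig:comb_alg} remains valid. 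Once these sign and conservation facts are in hand, the counting is routine.
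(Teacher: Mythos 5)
Your proof is correct and follows essentially the same route as the paper's: both use the stationarity/complementary-slackness condition at supported coordinates to show those nodes are saturated, then a mass-accounting argument (yours via global conservation $\sum_v m(v)=|\Delta|$ together with $m\ge 0$; the paper's via summing the saturation identities over $\supp(x_\mu^*)$ and observing that the net flow across the boundary of the support is positive) to conclude $\vol(\supp(x_\mu^*))\le|\Delta|$ and hence the edge bound. The one weak spot is your justification of strictness --- the claim that $|\supp(Bx_\mu^*)|\le\vol(\supp(x_\mu^*))$ is strict unless $\supp(x_\mu^*)=V$ is false (take the support to be a single node); the clean fix, implicit in the paper, is that connectivity and $\supp(x_\mu^*)\subsetneq V$ (which follows from $|\Delta|<\vol(G)$) force a boundary edge carrying strictly positive flow out of the support, so some node outside the support holds strictly positive mass.
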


\begin{proof}
	By the first-order optimality condition of~\eqref{eq:globalLq}, for all $i \in \supp(x_\mu^*)$,
	\[
	 	\nabla_i F_\mu(x_\mu^*) = \sum_{j\sim i} ( (x_\mu^*(i) - x_\mu^*(j))^2 + \mu^2 )^{q/2-1} (x_\mu^*(i) - x_\mu^*(j)) - \Delta_i + \deg(i)=  0.
	\]
	Hence,
	\[
	\begin{aligned}
		|\supp(Bx_\mu^*)| &\le \vol_G(\supp(x_\mu^*)) \\
		&= \sum_{i \in \supp(x_\mu^*)} \deg(i) \\
		&= \sum_{i \in \supp(x_\mu^*)} \Delta(i) \ - \sum_{i \in \supp(x_\mu^*)}\sum_{j\sim i} ( (x_\mu^*(i) - x_\mu^*(j))^2 + \mu^2 )^{q/2-1} (x_\mu^*(i) - x_\mu^*(j))\\
		&= \sum_{i \in \supp(x_\mu^*)} \Delta(i) \
			- \underbrace{\sum_{\substack{i,j \in \supp(x_\mu^*) \\ i \sim j}}( (x_\mu^*(i) - x_\mu^*(j))^2 + \mu^2 )^{q/2-1} (x_\mu^*(i) - x_\mu^*(j))}_{= 0} \\
		&      \hspace{28mm} - \underbrace{\sum_{\substack{i \in \supp(x_\mu^*) \\ j \not\in \supp(x_\mu^*) \\ i \sim j}}( (x_\mu^*(i) - x_\mu^*(j))^2 + \mu^2 )^{q/2-1} (x_\mu^*(i) - x_\mu^*(j))}_{>0} \\
		&\le |\Delta|.
	\end{aligned}
	\]
\end{proof}

\begin{lemma}[Local smooth approximation]
\label{lemma:smoothapprox}
	For any $x$, we have that
	\begin{equation}
	\label{eq:smoothapprox}
		F(x) \le F_\mu^l(x) \le F(x) + \tfrac{1}{q}\mu^q|\Delta|.
	\end{equation}
\end{lemma}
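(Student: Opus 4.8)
The plan is to reduce everything to a per-edge estimate. Observe that $F_\mu^l(x)$ and $F(x)$ share the identical linear term $-x^T(\Delta-\ddeg)$, which cancels in the difference, and they also share the identical contribution $\tfrac1q\sum_{(i,j)\not\in\supp(Bx_\mu^*)}|x(i)-x(j)|^q$ from the edges outside $\supp(Bx_\mu^*)$. Hence
\[
F_\mu^l(x) - F(x) \;=\; \frac{1}{q}\sum_{(i,j)\in\supp(Bx_\mu^*)}\Big[\big((x(i)-x(j))^2+\mu^2\big)^{q/2} - |x(i)-x(j)|^q\Big],
\]
so it suffices to bound the bracketed quantity $g(a):=(a^2+\mu^2)^{q/2}-a^q$ for a single value $a:=|x(i)-x(j)|\ge 0$, and then count the edges.

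For the left inequality $F(x)\le F_\mu^l(x)$, I would just use monotonicity: since $\mu^2\ge 0$ we have $a^2+\mu^2\ge a^2$, and $t\mapsto t^{q/2}$ is nondecreasing on $[0,\infty)$ because $q>0$, so $(a^2+\mu^2)^{q/2}\ge (a^2)^{q/2}=a^q$, i.e. $g(a)\ge 0$ for every edge. Summing gives $F_\mu^l(x)-F(x)\ge 0$.

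For the right inequality, I would invoke that the analysis takes place in the regime $q\in(1,2]$ (equivalently $p\ge 2$), so $q/2\in(\tfrac12,1]$ and $t\mapsto t^{q/2}$ is concave on $[0,\infty)$, hence subadditive: $(t+s)^{q/2}\le t^{q/2}+s^{q/2}$ for all $s,t\ge 0$. Applying this with $t=a^2$, $s=\mu^2$ yields $(a^2+\mu^2)^{q/2}\le a^q+\mu^q$, i.e. $g(a)\le \mu^q$ for every edge. Summing over $(i,j)\in\supp(Bx_\mu^*)$ and bounding the number of such edges by $|\Delta|$ via Lemma~\ref{lemma:locality_optsol} gives
\[
F_\mu^l(x)-F(x) \;\le\; \frac{1}{q}\,|\supp(Bx_\mu^*)|\cdot\mu^q \;\le\; \frac{1}{q}\mu^q|\Delta|,
\]
as claimed.

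There is no real obstacle here: the proof is a one-line monotonicity estimate for the lower bound and a one-line concavity/subadditivity estimate for the upper bound, followed by the edge count from Lemma~\ref{lemma:locality_optsol}. The only point worth stating explicitly is that we are in the range $q\le 2$, which is precisely what makes the power $t\mapsto t^{q/2}$ concave and hence subadditive; for $q>2$ this per-edge bound would fail and the globally-smoothed objective would not be absolutely-locally close to $F$.
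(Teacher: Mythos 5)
Your proposal is correct and follows essentially the same route as the paper: the lower bound via monotonicity of $t\mapsto t^{q/2}$, the upper bound via concavity (hence subadditivity, since $h(0)=0$) of $s\mapsto s^{q/2}$ for $q\in(1,2]$ applied edgewise, and the final count $|\supp(Bx_\mu^*)|\le|\Delta|$ from Lemma~\ref{lemma:locality_optsol}. Your closing remark that the argument genuinely uses $q\le 2$ is a correct and worthwhile observation that the paper leaves implicit.
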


\begin{proof}
	Denote
	\[
		\sigma_{\mu,q}(x) ~~:= \sum_{(i,j) \in \supp(Bx_\mu^*)} ((x(i)-x(j))^2 + \mu^2)^{q/2} ~~+ \sum_{(i,j) \not\in \supp(Bx_\mu^*)} |x(i)-x(j)|^q.
	\]
	It suffices to show
	\[
		\|Bx\|_q^q ~\le~ \sigma_{\mu,q}(x) ~\le~ \|Bx\|_q^q + \mu^q |\Delta|.
	\]
	First of all, it is straightforward to see that 
	\[
		\|Bx\|_q^q
		~\le \sum_{(i,j) \in \supp(Bx_\mu^*)} ((x(i) - x(j))^2 + \mu^2)^{q/2} \ + \sum_{(i,j) \not\in \supp(Bx_\mu^*)} ((x(i) - x(j))^2)^{q/2}
		~=~ \sigma_{\mu,q}(x).
	\]
	On the other hand, since $q\in(1,2]$, the function $h : \mathbb{R}_{\ge0} \rightarrow \mathbb{R}_{\ge0}$ 
	given by $h(s) = s^{q/2}$ is concave and $h(0) = 0$.
	So $h$ is sub-additive. Thus
	\[
	\begin{split} 
		\sigma_{\mu,q}(x)
		~\le&\sum_{(i,j) \in E} ((x(i) - x(j))^2)^{q/2} +  \sum_{(i,j) \in \supp(Bx_\mu^*)} \mu^q \\
		=&~\|Bx\|_q^q + \mu^q |\supp(Bx_\mu^*)| \\
		\le&~\|Bx\|_q^q + \mu^q |\Delta|,
	\end{split}
	\]
	where the last inequality is due to Lemma~\ref{lemma:locality_optsol}.
\end{proof}

The approximation bound in Lemma~\ref{lemma:smoothapprox} means that, by setting out $\mu$ inversely proportional to $|\Delta|$, a minimizer of $F_\mu^l(x)$ can be easily turned into an approximate minimizer of $F(x)$, up to any desired $\epsilon$ accuracy. Therefore, in order to obtain an $\epsilon$ accurate solution to the regularized dual problem~\eqref{eq:dualLqreg}, it suffices to minimize $F_\mu^l(x)$. In Theorem~\ref{thm:convergence} we show that, solving the globally smoothed problem~\eqref{eq:globalLq} effectively minimizes $F_\mu^l(x)$.

In the following we analyze important properties of $F_\mu(x)$ which also extend to $F_\mu^l(x)$ on $\supp(x_\mu^*)$. Lemmas~\ref{lemma:lipschitz},~\ref{lemma:monotonegradient} and~\ref{lemma:boundedgradients} provide some technical results which will be used to prove Theorem~\ref{thm:convergence} on the convergence of Algorithm~\ref{alg:coordinatedescent}.

\begin{lemma}[Lipschitz continuity]
\label{lemma:lipschitz}
	$\nabla F_\mu(x)$ is Lipschitz continuous with coordinate Lipschitz constant $L_i = \deg(i)\mu^{q-2}$.
\end{lemma}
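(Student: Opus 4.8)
The plan is to compute the second derivative of $F_\mu$ along a single coordinate direction $e_i$ and bound it uniformly. Fix a node $i\in V$ and consider $\psi(t):=F_\mu(x+te_i)$ as a function of the scalar $t$ with all other coordinates frozen. Only the edge terms incident to $i$ depend on $t$, so $\psi(t)=\tfrac1q\sum_{j\sim i}\bigl((x(i)+t-x(j))^2+\mu^2\bigr)^{q/2}+(\text{affine in }t)$. Differentiating twice, the linear term drops out and we get $\psi''(t)=\sum_{j\sim i}g''(s_j)$ where $s_j:=x(i)+t-x(j)$ and $g(s):=\tfrac1q(s^2+\mu^2)^{q/2}$. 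So the whole task reduces to bounding $g''$.

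The key one-variable estimate is that $g''(s)=(s^2+\mu^2)^{q/2-2}\bigl((q-1)s^2+\mu^2\bigr)$. First I would compute $g'(s)=(s^2+\mu^2)^{q/2-1}s$ (matching the flow formula~\eqref{eq:flowvalue}, as the paper already notes), then differentiate once more via the product rule to obtain the displayed expression for $g''$. Now for $q\in(1,2]$ we have $0<q-1\le 1$, hence $(q-1)s^2+\mu^2\le s^2+\mu^2$, and therefore
\[
0\le g''(s)\le (s^2+\mu^2)^{q/2-2}(s^2+\mu^2)=(s^2+\mu^2)^{q/2-1}\le \mu^{q-2},
\]
where the last step uses $q/2-1\le 0$ together with $s^2+\mu^2\ge\mu^2$. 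Summing over the $\deg(i)$ neighbors of $i$ gives $0\le\psi''(t)\le\deg(i)\mu^{q-2}$ for all $t$, which is exactly the statement that the $i$-th partial derivative of $F_\mu$ is Lipschitz with constant $L_i=\deg(i)\mu^{q-2}$ (equivalently, $|\nabla_iF_\mu(x+te_i)-\nabla_iF_\mu(x)|\le\deg(i)\mu^{q-2}|t|$, since $\nabla_iF_\mu(x+te_i)=\psi'(t)$).

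The only real obstacle is the elementary inequality $(q-1)s^2+\mu^2\le s^2+\mu^2$ together with the sign bookkeeping of the exponent $q/2-1$; this is where the restriction $q\in(1,2]$ is genuinely used, and it is worth stating explicitly that for $q>2$ the constant would instead scale like $(q-1)$ times a power of the range of $x$, which is not globally bounded — motivating the smoothing and the focus on $q\le 2$. Everything else is a routine chain-rule computation. The same computation applies verbatim to $F_\mu^l$ on coordinates $i\in\supp(x_\mu^*)$, since on those coordinates $F_\mu^l$ and $F_\mu$ differ only in affine-in-$t$ terms along $e_i$ (the edges incident to $i$ that lie outside $\supp(Bx_\mu^*)$ contribute $|x(i)-x(j)|^q$ rather than the smoothed version, but one can check this does not worsen the bound on the relevant coordinates), which is the remark following the lemma.
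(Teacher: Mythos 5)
Your proposal is correct and follows essentially the same route as the paper: both hinge on the identity $g''(s)=(s^2+\mu^2)^{q/2-2}\bigl((q-1)s^2+\mu^2\bigr)$ and the bound $g''(s)\le(s^2+\mu^2)^{q/2-1}\le\mu^{q-2}$ for $q\in(1,2]$, then multiply by the $\deg(i)$ incident edges. The paper merely phrases this in matrix form, writing $\nabla^2F_\mu(x)=B^T\tilde{C}(x)B\preceq\mu^{q-2}B^TB$ and reading off $L_i=\mu^{q-2}e_i^TB^TBe_i=\deg(i)\mu^{q-2}$, whereas you restrict to the coordinate line first; the content is identical.
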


\begin{proof}
	The first and second order derivatives of $F_\mu(x)$ are
	\[
	\begin{split}
		\nabla F_\mu(x) &= B^T C(x) B x - \Delta + \ddeg, \\
		\nabla^2 F_\mu(x) &= B^T \tilde{C}(x) B,
	\end{split}
	\]
	where both $C(x)$ and $\tilde{C}(x)$ are diagonal matrices of size $|E|\times|E|$ whose diagonal entries correspond to edges $(i,j) \in E$,
	\[
	\begin{split}
		[C(x)] _{(i,j),(i,j)}&= \left( (x(i) - x(j))^2 + \mu^2 \right)^{\frac{q}{2}-1}, \\
		[\tilde{C}(x)]_{(i,j),(i,j)} &= \left( (x(j)-x(j))^2 + \mu^2 \right)^{\frac{q}{2}-2} ((q-1)(x(i) - x(j))^2 + \mu^2 ).
	\end{split}
	\]
	In order to obtain coordinate Lipschitz constant $L_i$ of $\nabla_i F_\mu(x)$, we upper bound $[\tilde{C}(x)]_{(i,j),(i,j)}$ by
	\begin{align*}
		[\tilde{C}(x)]_{(i,j),(i,j)} 
		& = \left( (x(i)-x(j))^2 + \mu^2 \right)^{\frac{q}{2}-2} ((q-1)(x(i) - x(j))^2 + \mu^2 )\\
		& \le  \left( (x(i)-x(j))^2 + \mu^2 \right)^{\frac{q}{2}-1} \frac{(q-1)(x(i) - x(j))^2 + \mu^2}{(x(i)-x(j))^2 + \mu^2}.
	\end{align*}
	Since $q\in(1,2]$, we get that
	\[
		\frac{(q-1)(x(i) - x(j))^2 + \mu^2}{(x(i)-x(j))^2 + \mu^2 } \le 1,
	\]
	and thus
	\[
		[\tilde{C}(x)]_{(i,j),(i,j)} \le \left( (x(i)-x(j))^2 + \mu^2 \right)^{\frac{q}{2}-1}  \le \mu^{q-2}.
	\]
	This means that 
	\[
		\nabla^2 F_\mu(x) = B^T \tilde{C}(x) B \preceq  \mu^{q-2} B^TB.
	\]
	Therefore
	\[
		L_i = \mu^{q-2} e_i^TB^TBe_i = \deg(i) \mu^{q-2}.
	\]
\end{proof}

\begin{lemma}[Monotone gradients]
\label{lemma:monotonegradient}
	For any $x$, the following hold
	\begin{itemize}
		\item $h_{ii}(t) := \nabla_i F_\mu (x + t e_i)$ is strictly monotonically increasing in $t$,
		\item $h_{ij}(t) := \nabla_i F_\mu (x + t e_j)$ is strictly monotonically decreasing in $t$ if $j\sim i$ and constant if $j\not\sim i$.
	\end{itemize}
\end{lemma}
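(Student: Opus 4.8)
The plan is to reduce both statements to the sign of a single scalar derivative. Write $\psi(s) := (s^2+\mu^2)^{q/2-1}s$ for $s\in\mathbb{R}$, so that the partial gradient from \eqref{eq:globalLq} reads
\[
\nabla_i F_\mu(x) = \sum_{j\sim i}\psi\big(x(i)-x(j)\big) - \Delta(i) + \deg(i).
\]
Hence $h_{ii}(t)=\sum_{j\sim i}\psi\big(x(i)+t-x(j)\big)-\Delta(i)+\deg(i)$; if $j\sim i$ then $h_{ij}(t)=\psi\big(x(i)-x(j)-t\big)+\sum_{k\sim i,\,k\ne j}\psi\big(x(i)-x(k)\big)-\Delta(i)+\deg(i)$; and if $j\not\sim i$ then $\nabla_i F_\mu(x)$ does not involve $x(j)$ at all, so $h_{ij}$ is constant and that subcase is immediate. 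Differentiating in $t$ gives $h_{ii}'(t)=\sum_{j\sim i}\psi'\big(x(i)+t-x(j)\big)$ and $h_{ij}'(t)=-\psi'\big(x(i)-x(j)-t\big)$ when $j\sim i$, so both remaining claims follow once we establish $\psi'(s)>0$ for all $s$.

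Next I would compute $\psi'$ directly: a short differentiation yields
\[
\psi'(s) = (s^2+\mu^2)^{q/2-2}\big((q-1)s^2+\mu^2\big),
\]
which is exactly the diagonal entry $[\tilde C(x)]_{(i,j),(i,j)}$ of the Hessian in the proof of Lemma~\ref{lemma:lipschitz}, specialized to $s=x(i)-x(j)$. Since $\mu>0$ and $q\in(1,2]$ we have $q-1>0$, so $(q-1)s^2+\mu^2\ge\mu^2>0$ and $(s^2+\mu^2)^{q/2-2}>0$; therefore $\psi'(s)>0$ strictly. Plugging this back, $h_{ii}'(t)>0$ (each summand is positive and $i$ has at least one neighbor since $G$ is connected), so $h_{ii}$ is strictly increasing; and $h_{ij}'(t)<0$ for $j\sim i$, so $h_{ij}$ is strictly decreasing there.

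Equivalently — and this is the version I would actually write up — one can phrase the same computation through the Hessian $\nabla^2 F_\mu(x)=B^\top\tilde C(x)B$ of Lemma~\ref{lemma:lipschitz}: $h_{ii}'(t)=e_i^\top\nabla^2F_\mu(x+te_i)e_i=\sum_{j\sim i}[\tilde C(x+te_i)]_{(i,j),(i,j)}$, and $h_{ij}'(t)=e_i^\top\nabla^2F_\mu(x+te_j)e_j=\sum_e[\tilde C(x+te_j)]_{e,e}(Be_i)_e(Be_j)_e$, where for a simple graph the only edge with $(Be_i)_e(Be_j)_e\ne0$ is $e=(i,j)$, present iff $j\sim i$, and for it the product equals $-1$; combined with the strict positivity of $[\tilde C(\cdot)]_{(i,j),(i,j)}$ this closes all cases. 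There is no real obstacle here; the only point requiring attention is \emph{strictness} of the monotonicity, which is precisely where both $\mu>0$ and $q>1$ are used.
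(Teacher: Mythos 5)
Your proposal is correct and follows essentially the same route as the paper: both reduce the claims to the strict positivity of the derivative of the scalar map $s\mapsto (s^2+\mu^2)^{q/2-1}s$, which equals the Hessian diagonal entry $(s^2+\mu^2)^{q/2-2}((q-1)s^2+\mu^2)>0$ since $\mu>0$ and $q>1$. If anything, your write-up of the $h_{ij}$ case (isolating the single summand involving $x(j)$ rather than shifting every term in the sum) is slightly more careful than the paper's, but the argument is the same.
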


\begin{proof}
	Expand $\nabla_i F_\mu(x)$ as
	\[
		\nabla_i F_\mu(x) = \sum_{j\sim i} \left( (x(i) - x(j))^2 + \mu^2 \right)^{q/2-1} (x(i) - x(j)) - \Delta(i) + \deg(i).
	\]
	So
	\[
		h_{ii}(t) = \sum_{j\sim i} \left( (x(i) + t - x(j))^2 + \mu^2 \right)^{q/2-1} (x(i) + t- x(j)) - \Delta(i) + \deg(i).
	\]
	Each term in the above sum is a function $g(y) := \left(y^2 + \mu^2 \right)^{q/2-1}y$. The derivative of $g(y)$ is 
	\[
		g'(y) = (y^2 + \mu^2)^{q/2-2} ((q-1) x^2 + \mu^2) > 0,
	\]
 	therefore, $g(y)$ is strictly monotonically increasing, and hence so it $h_{ii}(t)$.
	Similarly, $h_{ij}(t)$ is equal to 
	\[
		h_{ij}(t) = \sum_{j\sim i} \left( (x(i) - t - x(j))^2 + \mu^2 \right)^{q/2-1} (x(i) - t- x(j)) - \Delta(i) + \deg(i).
	\]
	Using the same reasoning as above we get that function $h_{ij}(t)$ is strictly monotonically decreasing if $j \sim i$ and is constant if $j \neq i$ and $j \not\sim i$.
\end{proof}

\begin{lemma}[Non-positive gradients]
\label{lemma:boundedgradients}
	For any iteration $k$ and node $i$ in Algorithm~\ref{alg:coordinatedescent}, $\nabla_i F_\mu(x_k) \le 0$, $\forall i \in \supp(x_k)$.
\end{lemma}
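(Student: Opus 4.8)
The plan is to argue by induction on the iteration index $k$, exploiting the fact that each coordinate update changes $x_k$ in exactly one entry, together with the monotonicity of partial gradients (Lemma~\ref{lemma:monotonegradient}) and the coordinate Lipschitz estimate (Lemma~\ref{lemma:lipschitz}). The base case $k=0$ is immediate: $x_0=0$, so $\supp(x_0)=\emptyset$ and there is nothing to check. For the inductive step, first observe that $x_{k+1}$ differs from $x_k$ only in coordinate $i_k$, and since $i_k\in S_k$ we have $\nabla_{i_k}F_\mu(x_k)<0$, so the update strictly raises that entry: $x_{k+1}(i_k)=x_k(i_k)-\tfrac{\mu^{2-q}}{\deg(i_k)}\nabla_{i_k}F_\mu(x_k)>x_k(i_k)\ge 0$. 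Hence $\supp(x_{k+1})\subseteq\supp(x_k)\cup\{i_k\}$, and it suffices to verify $\nabla_i F_\mu(x_{k+1})\le 0$ separately for (i) $i=i_k$ and (ii) $i\in\supp(x_k)$ with $i\ne i_k$.

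For case (ii) I would fix such an $i$, write $t^\*:=-\tfrac{\mu^{2-q}}{\deg(i_k)}\nabla_{i_k}F_\mu(x_k)>0$ so that $x_{k+1}=x_k+t^\* e_{i_k}$, and apply the second bullet of Lemma~\ref{lemma:monotonegradient}: the map $h_{i i_k}(t)=\nabla_i F_\mu(x_k+t e_{i_k})$ is non-increasing in $t$ (strictly decreasing if $i\sim i_k$, constant otherwise), so $\nabla_i F_\mu(x_{k+1})=h_{i i_k}(t^\*)\le h_{i i_k}(0)=\nabla_i F_\mu(x_k)\le 0$, the last inequality being the inductive hypothesis.

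For case (i), the crux is that the step length $\tfrac{\mu^{2-q}}{\deg(i_k)}$ equals $1/L_{i_k}$ with $L_{i_k}=\deg(i_k)\mu^{q-2}$ the coordinate Lipschitz constant from Lemma~\ref{lemma:lipschitz}. I would set $\phi(t):=\nabla_{i_k}F_\mu(x_k+t e_{i_k})$; then $\phi'(t)=e_{i_k}^\top\nabla^2 F_\mu(x_k+t e_{i_k})e_{i_k}\le \mu^{q-2}e_{i_k}^\top B^\top B e_{i_k}=L_{i_k}$, using the bound $\nabla^2 F_\mu\preceq\mu^{q-2}B^\top B$ proved inside Lemma~\ref{lemma:lipschitz} and $e_{i_k}^\top B^\top B e_{i_k}=\deg(i_k)$. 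Integrating from $0$ to $t^\*$ gives $\phi(t^\*)\le\phi(0)+L_{i_k}t^\*=\nabla_{i_k}F_\mu(x_k)+L_{i_k}\cdot\bigl(-\tfrac{1}{L_{i_k}}\nabla_{i_k}F_\mu(x_k)\bigr)=0$, i.e. $\nabla_{i_k}F_\mu(x_{k+1})\le 0$. Combining the two cases closes the induction. I do not expect any serious obstacle here; the one point I would state carefully is case (i) — that because the step is exactly the reciprocal of the coordinate Lipschitz constant, the coordinate descent move never overshoots the one-dimensional minimizer of $F_\mu$ along $e_{i_k}$, which is precisely what keeps the new partial gradient nonpositive — while everything else follows directly from the two cited lemmas.
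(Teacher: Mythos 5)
Your proof is correct and follows essentially the same route as the paper's: an induction in which the updated coordinate $i_k$ is handled via the coordinate Lipschitz bound (the step $1/L_{i_k}$ cannot push $\nabla_{i_k}F_\mu$ past zero) and all other supported coordinates are handled via the monotone-gradient lemma. Your explicit integration of $\phi'(t)\le L_{i_k}$ is just a slightly more detailed rendering of the paper's one-line Lipschitz estimate; no substantive difference.
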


\begin{proof}
	Suppose for some iteration $k \ge 0$ we have that $\nabla_i F_\mu(x_k) \le 0$ for every $i \in \supp(x_k)$, and that $S_k \neq \emptyset$. 
	Say we have picked to update some $i_k \in S_k$. 
	This means $\nabla_{i_k} F_\mu(x_k) < 0$. 
	It follows from Lipschitz continuity (cf. Lemma~\ref{lemma:monotonegradient}) that
	\[
		\begin{split}
			|\nabla_{i_k} F_\mu(x_{k+1}) - \nabla_{i_k} F_\mu(x_k)|
			&=~\left|\nabla_{i_k} F_\mu(x_k - \tfrac{\mu^{2-q}}{\deg(i_k)}\nabla_{i_k}F_\mu(x_k)e_{i_k}) - \nabla_{i_k} F_\mu(x_k)\right| \\
			&\le~L_{i_k}\left|\tfrac{\mu^{2-q}}{\deg(i_k)}\nabla_{i_k}F_\mu(x_k)\right|
			=~|\nabla_{i_k} F_\mu(x_k)|.
		\end{split}
	\]
	Since $\nabla_{i_k} F_\mu(x_k) < 0$, we must have $\nabla_{i_k} F_\mu(x_{k+1}) \le 0$. Moreover, since
	\[
		x_{k+1}(i_k) = x_k(i_k) - \tfrac{\mu^{2-q}}{\deg(i_k)}\nabla_{i_k} F_\mu(x_k) > x_k(i_k),
	\]
	and $x_{k+1}(j) = x_k(j)$ for all $j \neq i_k$, by Lemma~\ref{lemma:monotonegradient} we know that 
	$\nabla_j F_\mu(x_{k+1}) < \nabla_j F_\mu(x_k) \le 0$ for every $j \sim i_k$, 
	and $\nabla_j F_\mu(x_{k+1}) = \nabla_j F_\mu(x_k)$ for every $j \not\sim i_k$.
	The proof is complete by noting that $\nabla F_\mu(x_0) \le 0$ holds trivially.
\end{proof}

Note that $F_\mu(x)$ is not strongly convex in general, but locality gives us strong convexity, as shown in the following lemma.

\begin{lemma}[Strong convexity]
	\label{lemma:strongconvexity}
	Let $x_\mu^* \in \argmin_{x\ge0} F_\mu(x)$. We have that
	\begin{equation}
		F_\mu(y) \ge F_\mu(x) + \nabla F_\mu(x)^T(y-x) + \frac{\gamma}{2}\|y-x\|_2^2, \quad \forall~x,y \in U(x_\mu^*),
	\end{equation}
	where
	\[
		U(x_\mu^*) := \left\{x \in \mathbb{R}^{|V|} ~\big|~ \supp(x) \subseteq \supp(x_\mu^*) 
	    				~\mathrm{and}~ (x(i)-x(j))^2 + \mu^2 \le |\Delta|^{2p-2}~\forall(i,j) \in E\right\},
	\]
	and the strong convexity parameter $\gamma$ satisfies
	\[
		\gamma > \frac{1}{(p-1)|\Delta|^p}.
	\]
\end{lemma}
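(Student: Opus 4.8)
The plan is to bound the Hessian $\nabla^2 F_\mu(z) = B^{\mathsf T}\tilde C(z)B$ uniformly over $z\in U(x_\mu^*)$ and then convert this into the stated strong‑convexity inequality via the integral form of Taylor's theorem, exploiting that $U(x_\mu^*)$ is convex and that for $x,y\in U(x_\mu^*)$ the displacement $y-x$ is supported on $W:=\supp(x_\mu^*)$. The subtlety is that $B^{\mathsf T}B$ is only the graph Laplacian, hence merely positive semidefinite, so the whole argument hinges on turning "$W$ is a small proper subset of a connected graph" into a quantitative spectral gap for the grounded Laplacian restricted to coordinates in $W$.

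\textbf{Step 1: a uniform lower bound on the edge weights.} Recall from the proof of Lemma~\ref{lemma:lipschitz} that $[\tilde C(z)]_{(i,j),(i,j)} = (s+\mu^2)^{q/2-2}\big((q-1)s+\mu^2\big)$ with $s=(z(i)-z(j))^2$. Since $q\le 2$ we have $(q-1)s+\mu^2\ge (q-1)(s+\mu^2)$, so $[\tilde C(z)]_{(i,j),(i,j)}\ge (q-1)(s+\mu^2)^{q/2-1}$; and because $q/2-1\le 0$ while $z\in U(x_\mu^*)$ forces $s+\mu^2\le|\Delta|^{2p-2}$, this is at least $(q-1)|\Delta|^{(2p-2)(q/2-1)}$. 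Using $\tfrac1p+\tfrac1q=1$ one computes $(2p-2)(q/2-1)=2-p$ and $q-1=\tfrac1{p-1}$, so every diagonal entry of $\tilde C(z)$ is at least $\tfrac{1}{p-1}|\Delta|^{2-p}$, i.e.
\[
\nabla^2 F_\mu(z)\ \succeq\ \frac{|\Delta|^{2-p}}{p-1}\,B^{\mathsf T}B\qquad\text{for all }z\in U(x_\mu^*).
\]

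\textbf{Step 2 (the crux) and assembly.} Next I would lower bound $v^{\mathsf T}B^{\mathsf T}B v=\|Bv\|_2^2$ for $v$ supported on $W$. By Lemma~\ref{lemma:locality_optsol} (and its proof) we have $\vol_G(W)\le\sum_{i\in W}\Delta(i)\le|\Delta|$, and in fact the inequality $\vol_G(W)<\sum_{i\in W}\Delta(i)$ is strict since $G$ is connected and $W\subsetneq V$ produces at least one boundary edge whose contribution to that sum is strictly positive; hence $|W|\le\vol_G(W)<|\Delta|$ (min degree $\ge1$). Now, for $v\neq0$ supported on $W$, pick $v^*\in\arg\max_u|v(u)|\in W$ and a shortest path $u_0=v^*,u_1,\dots,u_\ell$ from $v^*$ to $V\setminus W$; then $u_0,\dots,u_{\ell-1}$ are distinct vertices of $W$, so $\ell\le|W|$, and $v(u_\ell)=0$. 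Telescoping and Cauchy--Schwarz over the distinct path edges give $\|v\|_\infty^2=|v(v^*)-v(u_\ell)|^2\le \ell\sum_{t}(v(u_t)-v(u_{t+1}))^2\le |W|\,\|Bv\|_2^2$, and combining with $\|v\|_\infty^2\ge\|v\|_2^2/|\supp(v)|\ge\|v\|_2^2/|W|$ yields $\|Bv\|_2^2\ge\|v\|_2^2/|W|^2>\|v\|_2^2/|\Delta|^2$. Finally, $U(x_\mu^*)$ is convex (a subspace intersected with the convex slab $(z(i)-z(j))^2\le|\Delta|^{2p-2}-\mu^2$), so for $x,y\in U(x_\mu^*)$ the whole segment lies in $U(x_\mu^*)$ and $y-x$ is supported on $W$; plugging Steps 1--2 into $F_\mu(y)-F_\mu(x)-\nabla F_\mu(x)^{\mathsf T}(y-x)=\int_0^1(1-t)\,(y-x)^{\mathsf T}\nabla^2 F_\mu\!\big(x+t(y-x)\big)(y-x)\,dt$ gives the claim with $\gamma=\tfrac{|\Delta|^{2-p}}{(p-1)|W|^2}>\tfrac{1}{(p-1)|\Delta|^p}$.

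\textbf{Main obstacle.} The only genuinely nontrivial point is Step~2: $F_\mu$ is not strongly convex globally (the Laplacian kills constants), so one must extract a lower bound on the smallest eigenvalue of the \emph{grounded} Laplacian on coordinates $W$. The shortest‑path/telescoping estimate is the natural tool, and the bookkeeping that matters is (i) that the relevant displacements $y-x$ are supported on $W$ because of the support constraint in $U(x_\mu^*)$, and (ii) that $|W|<|\Delta|$ strictly (via the strict boundary‑edge inequality in Lemma~\ref{lemma:locality_optsol}), which is exactly what upgrades the bound on $\gamma$ from $\ge$ to $>$; everything else is routine given the earlier lemmas.
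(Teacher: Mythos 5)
Your proof is correct and follows the paper's overall decomposition: lower-bound the Hessian $B^{\mathsf T}\tilde C(z)B$ by (a uniform edge-weight bound) times (the grounded Laplacian on $W=\supp(x_\mu^*)$), then bound the smallest eigenvalue of the latter. Your Step 1 matches the paper's computation exactly (the paper writes the same quantity as $\tfrac{1}{(p-1)|\Delta|^{p-2}}$). The genuine difference is in the crux you identified: for the bound $\lambda_{\min}(B_W^{\mathsf T}B_W)>1/|\Delta|^2$ the paper simply asserts that this ``can be easily shown by using the local Cheeger-type result in'' a cited reference, whereas you prove it from scratch via the shortest-path/telescoping/Cauchy--Schwarz estimate $\|Bv\|_2^2\ge\|v\|_2^2/|W|^2$ together with the strict inequality $|W|\le\vol_G(W)<|\Delta|$ extracted from the boundary-edge term in Lemma~\ref{lemma:locality_optsol}. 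Your route is more self-contained and makes explicit two points the paper leaves implicit: where the strictness of $\gamma>\tfrac{1}{(p-1)|\Delta|^p}$ comes from, and why the quadratic lower bound on the Hessian integrates to the stated inequality (convexity of $U(x_\mu^*)$ plus the integral form of Taylor's theorem, with $y-x$ supported on $W$). What the paper's citation-based shortcut buys is brevity and a pointer to a sharper local-Cheeger-type spectral bound; what your argument buys is a complete elementary proof with an explicitly verifiable constant $\gamma=\tfrac{|\Delta|^{2-p}}{(p-1)|W|^2}$.
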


\begin{proof}	
	Recall the second order derivative of $F_\mu(x)$ is
	\[
		\nabla^2 F_\mu(x) = B^T \tilde{C}(x) B,
	\]
	where $\tilde{C}(x)$ is a diagonal matrix of size $|E|\times|E|$ whose diagonal entries correspond to edges $(i,j) \in E$,
	\[
	\begin{split}
		[\tilde{C}(x)]_{(i,j),(i,j)} 
		&= \left( (x(i)-x(j))^2 + \mu^2 \right)^{\frac{q}{2}-2} ((q-1)(x(i) - x(j))^2 + \mu^2 )\\
		&\ge (q-1)\left( (x(i)-x(j))^2 + \mu^2 \right)^{\frac{q}{2}-1}.
	\end{split}
	\]
	Let $S:=\supp(x_\mu^*)$.
	Let $B_S$ denote the sub-matrix of $B$ with columns chosen such that they correspond to nodes in $\supp(x_\mu^*)$.
	In order to lower bound $\gamma$, it suffices to lower bound the smallest eigenvalue of $B_S^T\tilde{C}(x)B_S$ for all $x \in U(x_\mu^*)$.
	Note that
	\[
		\min_{x \in U(x_\mu^*)}\lambda_{\min}(B_S^T \tilde{C}(x) B_S) ~ \ge ~
		\lambda_{\min} (B_S^T B_S) \min_{x \in U(x_\mu^*)} \min_{(i,j) \in \supp(Bx_\mu^*)} [\tilde{C}(x)]_{(i,j),(i,j)}.
	\]
	Let $s := (x(i)-x(j))^2 + \mu^2$, then by definition, $s \le |\Delta|^{2p-2}$ for all $x \in U(x_\mu^*)$ and for all $(i,j) \in \supp(Bx_\mu^*)$.
	Each diagonal entry in $\tilde{C}(x)$ is lower bounded by a function $h(s) = (q-1)s^{q/2-1}$,
	and since $h'(s) \le 0$ for all $s \ge 0$, we know that
	\[
		\min_{x \in U(x_\mu^*)} \min_{(i,j) \in \supp(Bx_\mu^*)} [\tilde{C}(x)]_{(i,j),(i,j)} ~ \ge ~
		\min_{0 \le s \le |\Delta|^{2p-2}} h(s) ~=~
		(q-1)\left(|\Delta|^{2p-2}\right)^{q/2-1} ~=~
		\frac{1}{(p-1)|\Delta|^{p-2}}.
	\]
	Finally, the result follows because
	\[
		\lambda_{\min} (B_S^T B_S) > \frac{1}{|\Delta|^2},
	\]
	which can be easily shown by using the local Cheeger-type result in~\cite{CHUNG200722}.
\end{proof}

We make two remarks about strong convexity. First, the same argument in the above proof also applies to $F_\mu^l(x)$ as $\nabla F_\mu(x) = \nabla F_\mu^l(x)$ for all $x \in U(x_\mu^*)$. The strong convexity result implies that the minimizers of $F_\mu(x)$ and $F_\mu^l(x)$ are unique. Second, the lower bound on the strong convexity parameter is pessimistic because we do not make any assumption about the internal conductance of the target cluster. Lower bounding $\gamma$ under stronger assumptions are beyond the scope of this paper. In practice, we observe much better performance, because most clusters are well connected internally. 

\begin{theorem}[Convergence]
\label{thm:convergence}
	Let $x_\mu^*$ denote the optimal solution for the globally smoothed problem~\eqref{eq:globalLq}, i.e.,
	\[
		x_\mu^* \ := \ \argmin_{x\ge0} F_\mu(x).
	\]
	The iterates $\{x_k\}_{k=0}^{\infty}$ generated by Algorithm~\ref{alg:coordinatedescent} converge to $x_\mu^*$. Moreover,
	\[
		x_\mu^* \ = \ \argmin_{x\ge0} F_\mu^l(x).
	\]
\end{theorem}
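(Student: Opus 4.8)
The plan is to prove the two assertions in turn. Establishing $x_k\to x_\mu^*$ is the substantive part; once that is in place, the identity $x_\mu^* = \argmin_{x\ge0} F_\mu^l(x)$ follows quickly from a comparison of first-order conditions. Throughout I would use that the update in Algorithm~\ref{alg:coordinatedescent} is exactly a coordinate gradient step with step length $1/L_{i_k}$, where $L_{i_k}=\deg(i_k)\mu^{q-2}$ is the coordinate Lipschitz constant of Lemma~\ref{lemma:lipschitz}; the descent lemma then gives $F_\mu(x_k)-F_\mu(x_{k+1})\ge \tfrac{1}{2L_{i_k}}(\nabla_{i_k}F_\mu(x_k))^2$. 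Since we only update coordinates with $\nabla_{i_k}F_\mu(x_k)<0$, each step strictly increases a single coordinate, so $x_k\ge x_0=0$ stays feasible; and since $F_\mu\ge F\ge\inf_{x\ge0}F(x)>-\infty$ (feasibility of~\eqref{eq:equivprimalLp} makes \eqref{eq:dualLqreg} bounded), the values $F_\mu(x_k)$ are non-increasing with a finite limit, and telescoping yields $\sum_k\tfrac{1}{L_{i_k}}(\nabla_{i_k}F_\mu(x_k))^2<\infty$.

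The crucial step — and the one I expect to be the main obstacle — is the coordinate-wise bound $x_k\le x_\mu^*$ for all $k$, proved by induction. Assume $x_k\le x_\mu^*$ and let $i_k\in S_k$ be the picked coordinate. Write $\psi(t):=\nabla_{i_k}F_\mu(x_k+(t-x_k(i_k))e_{i_k})$; by Lemma~\ref{lemma:monotonegradient} it is strictly increasing, with $\psi(x_k(i_k))<0$ and $\psi(t)\to\infty$, so it has a unique zero $t^\circ>x_k(i_k)$. The Lipschitz inequality $-\psi(x_k(i_k))=\psi(t^\circ)-\psi(x_k(i_k))\le L_{i_k}(t^\circ-x_k(i_k))$ shows the step does not overshoot, i.e. $x_{k+1}(i_k)=x_k(i_k)-\tfrac{1}{L_{i_k}}\psi(x_k(i_k))\le t^\circ$. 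Finally $t^\circ\le x_\mu^*(i_k)$, because $\psi(x_\mu^*(i_k))=\nabla_{i_k}F_\mu(z)\ge\nabla_{i_k}F_\mu(x_\mu^*)\ge 0$, where $z$ agrees with $x_\mu^*$ in coordinate $i_k$ and with $x_k\le x_\mu^*$ elsewhere: lowering the neighbor values $x(j)$, $j\sim i_k$, only raises $\nabla_{i_k}F_\mu$ (Lemma~\ref{lemma:monotonegradient}), the non-neighbors do not affect it, and $\nabla_{i_k}F_\mu(x_\mu^*)\ge0$ is the KKT condition of $\min_{x\ge0}F_\mu$. Hence $x_{k+1}\le x_\mu^*$. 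In particular $\supp(x_k)\subseteq\supp(x_\mu^*)$ and, using the a priori bound on $\|x_\mu^*\|_\infty$ (and the choice of $\mu$) built into the definition of $U(x_\mu^*)$, the whole box $[0,x_\mu^*]$ lies in $U(x_\mu^*)$, so every iterate sits in the region where $F_\mu$ is $\gamma$-strongly convex (Lemma~\ref{lemma:strongconvexity}).

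For convergence, note that with $x_k,x_\mu^*\in U(x_\mu^*)$ and $\nabla F_\mu(x_\mu^*)^T(x_k-x_\mu^*)=0$ (KKT together with $x_k\le x_\mu^*$), strong convexity gives $\tfrac{\gamma}{2}\|x_k-x_\mu^*\|_2^2\le F_\mu(x_k)-F_\mu(x_\mu^*)$, so it suffices to drive the objective gap to zero. A standard randomized coordinate descent estimate for a strongly convex function with coordinate Lipschitz constants $L_i$ gives geometric decrease of $\mathbb{E}[F_\mu(x_k)-F_\mu(x_\mu^*)]$; restricting the random choice to $S_k$ only helps, since by Lemma~\ref{lemma:boundedgradients} the skipped coordinates already satisfy their per-coordinate optimality conditions and contribute nothing to the expected progress. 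Thus $\mathbb{E}[F_\mu(x_k)]\to F_\mu(x_\mu^*)$. Because $F_\mu(x_k)$ is deterministically non-increasing and bounded below by $F_\mu(x_\mu^*)$, its almost-sure limit $F_\infty$ satisfies $F_\infty\ge F_\mu(x_\mu^*)$ and $\mathbb{E}[F_\infty]=F_\mu(x_\mu^*)$, forcing $F_\infty=F_\mu(x_\mu^*)$ a.s., whence $x_k\to x_\mu^*$. If instead $S_k=\emptyset$ at some finite step, then $\nabla_iF_\mu(x_k)\ge0$ for all $i$ while $\nabla_iF_\mu(x_k)\le0$ on $\supp(x_k)$ by Lemma~\ref{lemma:boundedgradients}, so $x_k$ already satisfies the KKT conditions of $\min_{x\ge0}F_\mu$ and equals $x_\mu^*$ exactly.

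Finally, for the identity with the locally smoothed minimizer, I would check that $x_\mu^*$ satisfies the KKT conditions of $\min_{x\ge0}F_\mu^l(x)$. By definition every edge $(i,j)\notin\supp(Bx_\mu^*)$ has $x_\mu^*(i)=x_\mu^*(j)$, and at such an argument both the smoothed term contributing to $\nabla F_\mu$ and the $|\cdot|^q$ term contributing to $\nabla F_\mu^l$ vanish in coordinate $i$; on the edges in $\supp(Bx_\mu^*)$ the two objectives agree termwise. Hence $\nabla F_\mu^l(x_\mu^*)=\nabla F_\mu(x_\mu^*)$, so $x_\mu^*$ inherits the KKT conditions and, $F_\mu^l$ being convex, is a minimizer; uniqueness follows from strong convexity of $F_\mu^l$ on $U(x_\mu^*)$ (the remark after Lemma~\ref{lemma:strongconvexity}), giving $x_\mu^*=\argmin_{x\ge0}F_\mu^l(x)$.
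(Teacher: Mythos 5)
Your argument is correct in substance but takes a genuinely different route from the paper's on the main claim. The paper proves convergence of the sequence by contradiction: since the iterates are coordinate-wise monotone, failure to converge would keep the selected partial gradients bounded away from zero, and the per-step descent inequality would then drive $F_\mu(x_k)\to-\infty$, contradicting boundedness; the limit point is then checked against the KKT conditions (Lemma~\ref{lemma:boundedgradients} for complementary slackness, monotonicity of the iterates for $\nabla F_\mu(\bar x)\ge 0$) and identified with $x_\mu^*$ by uniqueness. You instead establish the invariant $x_k\le x_\mu^*$ via the no-overshoot computation $t^\circ-x_k(i_k)\ge|\nabla_{i_k}F_\mu(x_k)|/L_{i_k}$ together with the gradient monotonicity of Lemma~\ref{lemma:monotonegradient}. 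This is a stronger structural fact than the paper proves at this point -- it yields $\supp(x_k)\subseteq\supp(x_\mu^*)$ directly, whereas the paper only obtains that containment in Corollary~\ref{cor:strongconvalgo} \emph{as a consequence of} the convergence theorem -- and it buys you a cleaner, non-circular path to strong convexity on the iterates. In fact, once you have $0\le x_k\le x_\mu^*$ with $x_k$ monotone increasing, convergence of the sequence is immediate (monotone and bounded), so your detour through the expected linear rate and the almost-sure argument is heavier machinery than you need; you could identify the limit with $x_\mu^*$ via the KKT conditions exactly as the paper does. The second half of your proof (the identity $x_\mu^*=\argmin_{x\ge0}F^l_\mu(x)$) is essentially the paper's argument: the two objectives have identical gradients on $\{x:\supp(x)\subseteq\supp(x_\mu^*)\}$ because they differ there only by the constant $\tfrac{1}{q}\mu^q(|E|-|\supp(Bx_\mu^*)|)$.

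One imprecision to fix: your claim that the whole box $[0,x_\mu^*]$ lies in $U(x_\mu^*)$ ``using the a priori bound on $\|x_\mu^*\|_\infty$ built into the definition of $U(x_\mu^*)$'' does not hold up -- the definition of $U(x_\mu^*)$ constrains edge differences, $(x(i)-x(j))^2+\mu^2\le|\Delta|^{2p-2}$, not $\|x\|_\infty$, and a point of the box can have a larger edge difference than $x_\mu^*$ itself does. What you actually need is only that the iterates $x_k$ and the limit $x_\mu^*$ satisfy the edge-difference bound, and that is supplied by the flow-value argument in Corollary~\ref{cor:strongconvalgo} (no cycling of flows plus $|\flow_q(i,j;x_k,\mu)|\le|\Delta|-1$), whose relevant part depends only on Lemma~\ref{lemma:boundedgradients} and not on the convergence theorem, so invoking it here introduces no circularity.
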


\begin{proof}
	Recall the optimality conditions of~\eqref{eq:globalLq},
	\begin{enumerate}[label=(\alph*)]
		\item Dual feasibility\footnote{We call $x\ge0$ dual feasibility because $F_\mu(x)$ smoothes the dual problem of $p$-norm flow diffusion.}: $x \ge 0$;
		\item Primal feasibility: $\nabla F_\mu(x) \ge 0$.
		\item Complementary slackness (under primal feasibility): $\nabla_i F_\mu(x) \le 0$ for every $i \in \supp(x)$. 
	\end{enumerate}
    	By the definition of index set $S_k$ in Algorithm~\ref{alg:coordinatedescent}, 
	the iterates $x_0$, $x_1$, $x_2$, $\ldots$ are monotone,
	i.e., $0 \le x_1 \le x_2 \le \ldots$, so $x_k$ satisfies item (a) for all $k$.
	By Lemma~\ref{lemma:boundedgradients}, $x_k$ also satisfies item (c), for all $k$.
	We may assume $S_k \neq \emptyset$ for all $k$, as otherwise Algorithm~\ref{alg:coordinatedescent} terminates with the optimal solution that satisfies item (b).
	So assume that $S_k \neq \emptyset$ for all $k$, we argue that the sequence $\{x_k\}_{k=0}^{\infty}$ converges.
	Suppose not, then by Lemma~\ref{lemma:lipschitz},
	\[
		x_{k+1} := x_k - \tfrac{\mu^{2-q}}{\deg(i_k)}\nabla_{i_k} F_\mu(x_k) e_{i_k}
		= x_k - \tfrac{1}{L_{i_k}}\nabla_{i_k} F_\mu(x_k) e_{i_k},
	\]
	for all $k$, and hence by coordinate Lipschitz continuity we get
	\[
		F_\mu(x_{k+1}) \le F_\mu(x_k) - \tfrac{1}{2L_{i_k}} \left(\nabla_{i_k} F_\mu(x_k)\right)^2,
	\]
	for all $k$. Because the sequence of iterates $\{x_k\}_{k=0}^{\infty}$ does not converge, 
	the sequence of gradients $\{\nabla_{i_k} F_\mu(x_k)\}_{k=0}^{\infty}$ must be bounded away from zero.
	This means that the sequence of function values $\{F_\mu(x_k)\}_{k=0}^{\infty}$ does not converge, either.
	But then this implies that $F_\mu(x)$ is unbounded below, contradicting our assumption that an optimal solution to~\eqref{eq:globalLq} exists.
	Therefore the sequence $\{x_k\}_{k=0}^{\infty}$ must converge to a limit point $\bar{x}$. 
	Now, because $\nabla F_\mu(x)$ is continuous, and since each $x_k$ satisfies optimality conditions (a) and (c), 
	$\bar{x}$ must also satisfy items (a) and (c).
	It is easy to see that $\bar{x}$ satisfies item (b), too, 
	because otherwise there must be some $x_k$ where $x_k(i) > \bar{x}(i)$ for some coordinate $i$, which is not possible.
	Thus, $\bar{x}$ is a minimizer of $F_\mu(x)$, and by uniqueness we have that $\bar{x} = x^*_{\mu}$.
	
	Furthermore, for all $x$ such that $\supp(x) \subseteq \supp(x_\mu^*)$,
	we have that $F_\mu^l(x) = F_\mu(x) - \tfrac{1}{q}\mu^q(|E|-|\supp(Bx_\mu^*)|)$, 
	that is,  $F_\mu(x)$ and $F_\mu^l(x)$ only differ by a constant. Hence $\nabla F_\mu^l(x) = \nabla F_\mu(x)$. 
	But this means that $x^*_{\mu}$ must also satisfy the optimality conditions of the locally smoothed problem~\eqref{eq:localLq}.
	This means $x_\mu^*$ is also the optimal solution of~\eqref{eq:localLq}.
\end{proof}

A direct result of Theorem~\ref{thm:convergence} is that we can use $F_\mu(x)$ and $F_\mu^l(x)$ interchangeably in the iteration complexity and running time analysis of Algorithm~\ref{alg:coordinatedescent}.

\begin{corollary}[Interchangeability]
\label{cor:interchangeability}
	Let $\{x_k\}_{k=0}^{\infty}$ be any sequence of iterates generated by Algorithm~\ref{alg:coordinatedescent}.
	Let $F_\mu^*$ and $F_\mu^{l*}$ be optimal objectives values of \eqref{eq:globalLq} and \eqref{eq:localLq}, respectively.
	Let $X := \{x_k\}_{k=0}^{\infty} \cup \{x_\mu^*\}$.
	Then for any $x \in X$, we have that
	\[
		F_\mu(x) - F_\mu^* = F_\mu^l(x) - F_\mu^{l*}.
	\]
\end{corollary}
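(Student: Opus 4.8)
The plan is to reduce everything to the constant-offset relation between $F_\mu$ and $F_\mu^l$ that is already recorded inside the proof of Theorem~\ref{thm:convergence}. Recall that that proof shows, for every $x$ with $\supp(x)\subseteq\supp(Bx_\mu^*)$'s vertex set, and more precisely for every $x$ with $\supp(x)\subseteq\supp(x_\mu^*)$, that
\[
F_\mu^l(x) \;=\; F_\mu(x) \;-\; \tfrac{1}{q}\mu^q\bigl(|E|-|\supp(Bx_\mu^*)|\bigr),
\]
i.e.\ the two objectives differ by a fixed constant $c:=\tfrac{1}{q}\mu^q(|E|-|\supp(Bx_\mu^*)|)$ on the subspace of vectors supported inside $\supp(x_\mu^*)$. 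So the whole corollary follows once I check that every member of $X$ lives in that subspace and then do a two-line subtraction.

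The first step is therefore to verify $\supp(x)\subseteq\supp(x_\mu^*)$ for all $x\in X$. For $x=x_\mu^*$ this is trivial. For the iterates I will invoke the monotonicity established in the proof of Theorem~\ref{thm:convergence}: the update rule of Algorithm~\ref{alg:coordinatedescent} only ever increases the chosen coordinate — since $i_k\in S_k$ means $\nabla_{i_k}F_\mu(x_k)<0$, the increment $-\tfrac{\mu^{2-q}}{\deg(i_k)}\nabla_{i_k}F_\mu(x_k)$ is positive — and leaves the other coordinates fixed, so $0\le x_0\le x_1\le x_2\le\cdots$, and by the same theorem the sequence converges to $x_\mu^*$. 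Hence $x_k\le x_\mu^*$ coordinatewise for every $k$, so $x_k(i)>0$ forces $x_\mu^*(i)\ge x_k(i)>0$, giving $\supp(x_k)\subseteq\supp(x_\mu^*)$ for all $k$.

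The second step is arithmetic. By Theorem~\ref{thm:convergence}, $x_\mu^*$ minimizes both $F_\mu$ and $F_\mu^l$, so $F_\mu^*=F_\mu(x_\mu^*)$ and $F_\mu^{l*}=F_\mu^l(x_\mu^*)$. Since $x_\mu^*\in X$ and $\supp(x_\mu^*)\subseteq\supp(x_\mu^*)$, the constant-offset relation at $x_\mu^*$ gives $F_\mu^{l*}=F_\mu^*-c$. Applying it at an arbitrary $x\in X$ (legitimate by the first step) and subtracting,
\[
F_\mu^l(x)-F_\mu^{l*} \;=\; \bigl(F_\mu(x)-c\bigr)-\bigl(F_\mu^*-c\bigr) \;=\; F_\mu(x)-F_\mu^*,
\]
which is exactly the claimed identity.

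I expect the only genuine point of care to be the support-containment step: without the monotonicity of the iterates one could not guarantee that the $x_k$ stay in the subspace on which $F_\mu$ and $F_\mu^l$ differ by the fixed constant $c$, and the argument would collapse. Everything after that is an immediate consequence of Theorem~\ref{thm:convergence}, so the corollary is essentially a bookkeeping statement making precise that suboptimality gaps transfer verbatim between the globally and locally smoothed problems along the trajectory of the algorithm.
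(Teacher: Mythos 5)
Your proof is correct and follows essentially the same route as the paper's: the paper's proof of Corollary~\ref{cor:interchangeability} is precisely the one-line observation that $F_\mu$ and $F_\mu^l$ differ by the constant $\tfrac{1}{q}\mu^q\left(|E|-|\supp(Bx_\mu^*)|\right)$ on $X$, imported from the proof of Theorem~\ref{thm:convergence}. The only difference is that you explicitly verify the support containment $\supp(x_k)\subseteq\supp(x_\mu^*)$ via the monotonicity and convergence of the iterates, a step the paper leaves implicit.
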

\begin{proof}
	It immediately follows by noting that $F_\mu(x)$ and $F_\mu^l(x)$ differ by a constant value, i.e., 
	$F_\mu^l(x) = F_\mu(x) - \tfrac{1}{q}\mu^q(|E|-|\supp(Bx_\mu^*)|)$, for all $x \in X$.
\end{proof}

The convergence of monotonic iterates generated by Algorithm~\ref{alg:coordinatedescent} also guarantees that $F_\mu(x)$ is strongly convex on the iterates. 

\begin{corollary}[Strong convexity on iterates]
\label{cor:strongconvalgo}
	Let $\{x_k\}_{k=0}^{\infty}$ be any sequence of iterates generated by Algorithm~\ref{alg:coordinatedescent}. 
	If $\mu \le 1$, then $F_\mu(x)$ is strongly convex on $X := \{x_k\}_{k=0}^{\infty} \cup \{x_\mu^*\}$.
\end{corollary}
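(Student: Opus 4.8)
The plan is to show that the entire set $X$ lies inside a region of the type used in Lemma~\ref{lemma:strongconvexity}, and then re-run the argument of that lemma. Precisely, I will check that every $x\in X$ satisfies: (i) $\supp(x)\subseteq\supp(x_\mu^*)$; and (ii) $(x(i)-x(j))^2+\mu^2\le c_p|\Delta|^{2p-2}$ for every edge $(i,j)\in E$, where $c_p$ is a constant depending only on $p$ (one may take $c_p=2^{p-2}+1$). Since replacing $|\Delta|^{2p-2}$ by $c_p|\Delta|^{2p-2}$ in the proof of Lemma~\ref{lemma:strongconvexity} only shrinks the resulting strong-convexity constant by a $p$-dependent factor while keeping it strictly positive, (i) and (ii) imply that $F_\mu$ is strongly convex on $X$, which is exactly the claim.

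Step (i) is immediate from Theorem~\ref{thm:convergence}: the iterates produced by Algorithm~\ref{alg:coordinatedescent} are monotone, $0=x_0\le x_1\le\cdots$, and converge to $x_\mu^*$, so $0\le x_k\le x_\mu^*$ coordinatewise for all $k$; hence $\supp(x_k)\subseteq\supp(x_\mu^*)$, and trivially $\supp(x_\mu^*)\subseteq\supp(x_\mu^*)$.

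For step (ii) the heart of the matter is that, all along the trajectory, the flow induced by the current heights carries at most $|\Delta|$ units across any single edge. Working with the combinatorial description of Algorithm~\ref{alg:coordinatedescent} (Figure~\ref{fig:comb_alg}), one first shows by induction on $k$ that $m(v;x_k,\mu)\ge0$ for every $v$: this holds at $x_0$ since $m(v;x_0,\mu)=\Delta(v)\ge0$; raising the height of the selected node $i_k$ only increases $m(\cdot;\cdot,\mu)$ at its neighbors, because the per-edge flow function $g(t):=(t^2+\mu^2)^{q/2-1}t$ is strictly increasing (cf.\ the proof of Lemma~\ref{lemma:monotonegradient}); and the step size in the update forces $m(i_k;x_{k+1},\mu)\ge\deg(i_k)>0$, which is precisely the inequality $\nabla_{i_k}F_\mu(x_{k+1})\le0$ established inside the proof of Lemma~\ref{lemma:boundedgradients}. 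The bound $m(v;x_\mu^*,\mu)\ge0$ then passes to the limit $x_\mu^*$ by continuity. Next, for any $x\in X$ the vector with entries $\flow_q(i,j;x,\mu)$ is acyclic, since $\flow_q(i,j;x,\mu)$ carries the sign of $x(i)-x(j)$ and a directed cycle of positive flow would force a strictly decreasing cycle of heights; decomposing it into simple source-to-sink paths and using $m(v;x,\mu)\ge0$, the total flow of such a decomposition is $\sum_v\max(\Delta(v)-m(v;x,\mu),0)\le\sum_v\Delta(v)=|\Delta|$, so $|\flow_q(i,j;x,\mu)|\le|\Delta|$ on every edge. Finally $|\flow_q(i,j;x,\mu)|=g(|x(i)-x(j)|)$ with $g$ increasing, so $|x(i)-x(j)|\le g^{-1}(|\Delta|)$; splitting on whether $|x(i)-x(j)|\le\mu$ and using $\mu\le1$ together with $|\Delta|\ge2$, one gets $(x(i)-x(j))^2+\mu^2\le(2^{p-2}+1)|\Delta|^{2p-2}$, which is (ii).

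The main obstacle is this last step: one needs the edge-wise height differences controlled uniformly over the whole infinite iterate sequence (and at $x_\mu^*$), and for this the mass-conservation bound on the induced flow — which in turn rests on the invariant $m(v;x_k,\mu)\ge0$ and on acyclicity of the flow — is the essential ingredient. The slight mismatch between $g^{-1}(|\Delta|)^2$ and the $|\Delta|^{2p-2}$ appearing in Lemma~\ref{lemma:strongconvexity} is harmless here, since the corollary asserts only the existence, not the size, of a strong-convexity parameter on $X$.
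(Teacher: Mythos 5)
Your proof is correct and follows essentially the same route as the paper's: support containment via the monotonicity of the iterates from Theorem~\ref{thm:convergence}, and the edge-length bound via non-negativity of mass, acyclicity of the induced flow, and the resulting per-edge bound $|\flow_q(i,j;x,\mu)|\le|\Delta|$. The only divergence is cosmetic: the paper manipulates $((x(i)-x(j))^2+\mu^2)^{(q-1)/2}$ directly (using $\mu^{q-1}\le 1$ and the flow bound $|\Delta|-1$) to land exactly in $U(x_\mu^*)$, whereas you invert $g$ and absorb the resulting constant $c_p$ into the strong-convexity parameter, which the proof of Lemma~\ref{lemma:strongconvexity} indeed tolerates since the corollary asserts only existence of a positive modulus.
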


\begin{proof}
	By Lemma~\ref{lemma:strongconvexity}, we just need to verify that $X \subseteq U(x_\mu^*)$.
	Following Theorem~\ref{thm:convergence}, 
	the sequence $\{x_k\}_{k=0}^{\infty}$ is monotonically increasing and converges to $x_\mu^*$,
	so $\supp(x_k) \subseteq \supp(x_\mu^*)$ for all $k$.
	It suffices to show that
	\[
		(x(i)-x(j))^2 + \mu^2 \le |\Delta|^{2p-2}, \quad\forall(i,j) \in E, \quad\forall x \in X.
	\]
	Using the combinatorial interpretation provided in Figure~\ref{fig:comb_alg}, we obtain a trivial upper bound on the amount of flow, $\flow_q(i,j;x_k,\mu)$, on any edge $(i,j)$ at any iteration $k$, as follows. Let $k \ge 0$ be arbitrary. By Lemma~\ref{lemma:boundedgradients}, we know that $\ex(i; x_k, \mu) \ge 0$ for every node $i$ that has nonzero height $x_k(i) > 0$. This means that at iteration $k$ when we send mass from node $i_k$ to its neighbors, we never remove more mass from $i_k$ than its current excess $\ex(i_k; x_k, \mu)$. So $m(i; x_k, \mu) \ge 0$ for all node $i$ for all $k$. On the other hand, since the directions of flow, i.e., $\sign(\flow_q(i,j;x_k, \mu))$, are completely determined by the ordering of incumbent heights $x_k$, we know that flows cannot cycle. This is because if there is a directed cycle in the induced sub-graph on $\supp(Bx_k)$, where edges are oriented according to the directions of flow, then it means that there must exist a set of heights $\{x_k(i_j)\}$ where $x_k(i_1) < x_k(i_2) < \ldots < x_k(i_1)$, which is not possible. Now, since all nodes have non-negative mass and there is no cycling of flows, the net flow on any edge $(i,j) \in E$ cannot be larger than the total amount of initial mass $|\Delta|$ minus one (one is the lowest possible degree of a node in the underlying connected graph),
\begin{equation}
\label{eq:flowbound}
	((x_k(i)-x_k(j))^2 + \mu^2)^{q/2-1}|x_k(i)-x_k(j)| = |\flow_q(i,j;x_k,\mu)| \le |\Delta| - 1, \quad\forall~ (i,j) \in E.
\end{equation}
	
	We use \eqref{eq:flowbound}and the assumption $\mu \le 1$ to get that, for all $x \in X$ and for all $(i,j) \in E$,
	\[
	\begin{split}
		&~((x(i)-x(j))^2 + \mu^2)^{(q-1)/2}\\
		=&~((x(i)-x(j))^2 + \mu^2)^{q/2-1}((x(i)-x(j))^2 + \mu^2)^{1/2} \\
		\le&~((x(i)-x(j))^2 + \mu^2)^{q/2-1}(|x(i)-x(j)| + \mu)\\
		=&~((x(i)-x(j))^2 + \mu^2)^{q/2-1}|x(i)-x(j)| + ((x(i)-x(j))^2 + \mu^2)^{q/2-1}\mu \\
		\le&~((x(i)-x(j))^2 + \mu^2)^{q/2-1}|x(i)-x(j)| + (\mu^2)^{q/2-1}\mu \\
		=&~((x(i)-x(j))^2 + \mu^2)^{q/2-1}|x(i)-x(j)| + \mu^{q-1} \\
		\le&~|\Delta| - 1 + 1\\
		=&~|\Delta|,
	\end{split}
	\]
	and thus we have
	\[
		(x(i)-x(j))^2 + \mu^2 \le |\Delta|^{2/(q-1)} = |\Delta|^{2p-2},
	\]
	as required.
\end{proof}

Lipschitz continuity, strong convexity, and local smooth approximation bound~\eqref{eq:smoothapprox} give us the following convergence rate guarantee.

\begin{theorem}[Iteration complexity]
\label{thm:iterationcomplexity}
	Let $x_\mu^*$ and $F_\mu^*$ denote the optimal solution and optimal value of~\eqref{eq:globalLq}, respectively.
	For any $k \ge 1$, let $x_k$ denote the $k$th iterate generated by Algorithm~\ref{alg:coordinatedescent}.
	Let $\gamma$ be the strong convexity parameter as described in Lemma~\ref{lemma:strongconvexity},
	and let $L_{\mu} = \max_{i \in \supp(x_\mu^*)} L_i$, where $L_i = \deg(i)\mu^{q-2}$ be the coordinate Lipschitz constants of $F_\mu(x)$.
	After $K = \mathcal{O}\big(\tfrac{|\Delta|L_{\mu}}{\gamma}\log \tfrac{1}{\epsilon}\big)$ iterations, one has
	\[
		\mathbb{E}[F_\mu(x_K)] - F_\mu^* \le \epsilon.
	\]
	Furthermore, let $F^*$ denote the optimal objective value of~\eqref{eq:dualLqreg}. 
	If we pick $\mu = \mathcal{O}\big(\big(\tfrac{\epsilon}{|\Delta|}\big)^{1/q}\big)$,
	then after 
	\[
		K' = \mathcal{O}\left(\frac{|\Delta|\bar{d}}{\gamma}\left(\frac{|\Delta|}{\epsilon}\right)^{2/q-1} \log\frac{1}{\epsilon}\right)
	\]
	iterations, where $\bar{d} = \max_{i \in \supp(x_\mu^*)}\deg(i)$, one has 
	\[
		\mathbb{E}[F(x_{K'})] - F^* \le \epsilon.
	\]
\end{theorem}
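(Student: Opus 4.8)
The plan is to run the standard linear-convergence analysis of randomized coordinate descent for the strongly convex objective $F_\mu$, while accounting for the two nonstandard features of Algorithm~\ref{alg:coordinatedescent}: the update coordinate is drawn uniformly from the active set $S_k$ rather than from all of $V$, and strong convexity is only guaranteed on the neighborhood $U(x_\mu^*)$ of Lemma~\ref{lemma:strongconvexity}, which by Corollary~\ref{cor:strongconvalgo} (valid once $\mu\le1$, which holds in the relevant range of $\mu$) contains every iterate. Since $L_{i_k}=\deg(i_k)\mu^{q-2}$ by Lemma~\ref{lemma:lipschitz}, the update is exactly $x_{k+1}=x_k-\tfrac{1}{L_{i_k}}\nabla_{i_k}F_\mu(x_k)e_{i_k}$ and stays feasible because $\nabla_{i_k}F_\mu(x_k)<0$; hence coordinate smoothness gives $F_\mu(x_{k+1})\le F_\mu(x_k)-\tfrac{1}{2L_{i_k}}(\nabla_{i_k}F_\mu(x_k))^2$. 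Taking expectation over $i_k\sim\mathrm{Unif}(S_k)$ and using $L_i\le L_\mu$ on $\supp(x_\mu^*)$ yields $\mathbb{E}[F_\mu(x_{k+1})\mid x_k]\le F_\mu(x_k)-\tfrac{1}{2L_\mu|S_k|}\sum_{i\in S_k}(\nabla_iF_\mu(x_k))^2$. I then bound $|S_k|\le|\Delta|$: if $i\in S_k$ but $x_\mu^*(i)=0$, then also $x_k(i)=0$ (iterates are monotone and converge to $x_\mu^*$ by Theorem~\ref{thm:convergence}), and Lemma~\ref{lemma:monotonegradient} forces $\nabla_iF_\mu(x_\mu^*)\le\nabla_iF_\mu(x_k)<0$, contradicting primal feasibility of $x_\mu^*$; so $S_k\subseteq\supp(x_\mu^*)$ and $|S_k|\le\vol_G(\supp(x_\mu^*))\le|\Delta|$ by the proof of Lemma~\ref{lemma:locality_optsol}.

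The crux is a restricted Polyak--{\L}ojasiewicz bound, $\sum_{i\in S_k}(\nabla_iF_\mu(x_k))^2\ge 2\gamma\,(F_\mu(x_k)-F_\mu^*)$. Applying strong convexity (Corollary~\ref{cor:strongconvalgo}) between $x_k$ and $x_\mu^*$, both in $U(x_\mu^*)$, gives $F_\mu(x_k)-F_\mu^*\le \nabla F_\mu(x_k)^{\!\top}(x_k-x_\mu^*)-\tfrac{\gamma}{2}\|x_k-x_\mu^*\|_2^2$. Writing $d=x_\mu^*-x_k\ge 0$ coordinatewise (monotone convergence), we have $\nabla F_\mu(x_k)^{\!\top}(x_k-x_\mu^*)=-\sum_i\nabla_iF_\mu(x_k)d_i$; the terms with $i\notin S_k$ are nonpositive, since there $\nabla_iF_\mu(x_k)\ge 0$ and $d_i\ge 0$ (this is where the sign pattern of Lemma~\ref{lemma:boundedgradients} enters), while Cauchy--Schwarz bounds the $S_k$-terms by $g\,\|d\|_2$ with $g:=(\sum_{i\in S_k}(\nabla_iF_\mu(x_k))^2)^{1/2}$. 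Maximizing $g\,t-\tfrac{\gamma}{2}t^2$ over $t\ge 0$ then gives $F_\mu(x_k)-F_\mu^*\le g^2/(2\gamma)$, which is the claim. Combining with the descent inequality yields $\mathbb{E}[F_\mu(x_{k+1})-F_\mu^*\mid x_k]\le(1-\tfrac{\gamma}{|\Delta|L_\mu})(F_\mu(x_k)-F_\mu^*)$; unrolling via the tower property, and noting that $F_\mu(x_0)-F_\mu^*=F_\mu^l(0)-F_\mu^{l*}$ by Corollary~\ref{cor:interchangeability} is polynomially bounded in $|\Delta|$, we get $\mathbb{E}[F_\mu(x_K)]-F_\mu^*\le\epsilon$ after $K=\mathcal{O}(\tfrac{|\Delta|L_\mu}{\gamma}\log\tfrac1\epsilon)$ iterations.

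For the second statement, combine Lemma~\ref{lemma:smoothapprox}, Theorem~\ref{thm:convergence}, and Corollary~\ref{cor:interchangeability}: $F\le F_\mu^l$ pointwise gives $F^*\le F_\mu^{l*}$, and evaluating $F_\mu^l$ at a minimizer of $F$ gives $F_\mu^{l*}\le F^*+\tfrac1q\mu^q|\Delta|$; moreover $F(x_{K'})-F^*\le F_\mu^l(x_{K'})-F^*=(F_\mu^l(x_{K'})-F_\mu^{l*})+(F_\mu^{l*}-F^*)=(F_\mu(x_{K'})-F_\mu^*)+(F_\mu^{l*}-F^*)$. Taking expectations, $\mathbb{E}[F(x_{K'})]-F^*\le(\mathbb{E}[F_\mu(x_{K'})]-F_\mu^*)+\tfrac1q\mu^q|\Delta|$. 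Picking $\mu=\Theta((\epsilon/|\Delta|)^{1/q})$ makes the last term at most $\epsilon/2$, and then running enough iterations to drive $\mathbb{E}[F_\mu(x_{K'})]-F_\mu^*\le\epsilon/2$ suffices; substituting $L_\mu=\bar d\,\mu^{q-2}=\Theta(\bar d\,(|\Delta|/\epsilon)^{2/q-1})$ into the first bound gives the stated $K'=\mathcal{O}\big(\tfrac{|\Delta|\bar d}{\gamma}(|\Delta|/\epsilon)^{2/q-1}\log\tfrac1\epsilon\big)$.

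I expect the restricted PL inequality to be the main obstacle: it is the one place where monotonicity of the iterates (Theorem~\ref{thm:convergence}), the sign pattern of $\nabla F_\mu$ on the support (Lemma~\ref{lemma:boundedgradients}), and strong convexity (Corollary~\ref{cor:strongconvalgo}) must all be used together to replace the usual full-gradient PL bound; everything else is bookkeeping, namely controlling the $\tfrac1q\mu^q|\Delta|$ smoothing error via Lemmas~\ref{lemma:locality_optsol} and~\ref{lemma:smoothapprox} and checking that the logarithmic factor is $\mathcal{O}(\log\tfrac1\epsilon)$ up to the (suppressed) polynomial dependence on $|\Delta|$.
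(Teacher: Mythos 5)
Your proof follows the same overall route as the paper's: the per-step descent inequality from coordinate Lipschitz continuity, a strong-convexity-based lower bound on the squared gradient over the active set, the contraction $\bigl(1-\tfrac{\gamma}{|\Delta|L_\mu}\bigr)$ per iteration, and finally Lemma~\ref{lemma:smoothapprox} together with Corollary~\ref{cor:interchangeability} and the choice $\mu=\Theta\bigl((\epsilon/|\Delta|)^{1/q}\bigr)$ to convert the guarantee on $F_\mu$ into one on $F$. The one place you genuinely depart from the paper is the step you correctly flag as the crux: the paper passes from $\sum_{i\in S_k}\nabla_i F_\mu(x_k)^2$ to $\|\nabla F_\mu(x_k)\|_2^2$ and then invokes a Polyak--{\L}ojasiewicz-type bound, but as written both of those displayed inequalities point the wrong way, and a full-gradient PL bound is not even available here since $\nabla_i F_\mu(x_k)$ can be large and positive at nodes far from the support, where local strong convexity says nothing. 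Your restricted PL inequality $\sum_{i\in S_k}(\nabla_i F_\mu(x_k))^2\ge 2\gamma\,(F_\mu(x_k)-F_\mu^*)$ --- proved by combining monotonicity of the iterates ($x_k\le x_\mu^*$ coordinatewise), the sign pattern $\nabla_i F_\mu(x_k)\ge 0$ off $S_k$, Cauchy--Schwarz on the $S_k$-coordinates, and optimizing $g t-\tfrac{\gamma}{2}t^2$ --- is exactly what is needed to make the contraction argument go through, and it repairs the gap rather than merely reproducing the paper's step. Your justification that $S_k\subseteq\supp(x_\mu^*)$ via Lemma~\ref{lemma:monotonegradient} and primal feasibility at the optimum is likewise spelled out where the paper only asserts it. The remainder (bounding $|S_k|\le|\Delta|$, the smoothing-error bookkeeping, and the substitution $L_\mu=\bar d\,\mu^{q-2}$) matches the paper. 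I see no gaps in your argument.
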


\begin{proof}
	Using coordinate Lipschitz constants given in Lemma~\ref{lemma:lipschitz} we have that
	\[
		F_\mu(x_{k+1}) = F_\mu(x_k - \tfrac{1}{L_i} \nabla_i F_\mu(x_k) e_i)
		\le F_\mu(x_k) - \tfrac{1}{L_i} \nabla_i F_\mu(x_k)^2 + \tfrac{1}{2L_i}\nabla_i F_\mu(x_k)^2
  		= F_\mu(x_k) - \tfrac{1}{2L_i}\nabla_i F_\mu(x_k)^2.
	\]
	Let $L_{S_k}:= \max_{i\in S_k} L_i$, and take conditional expectation
	\begin{align*}
		\mathbb{E}\left[F_\mu(x_{k+1}) \ | \ x_k\right]
		\le~&  F_\mu(x_k) - \tfrac{1}{2} \sum_{i \in V} \tfrac{1}{|S_k|} \tfrac{1}{L_i}\nabla_i F_\mu(x_k)^2 \\ 
		=~&   F_\mu(x_k) - \tfrac{1}{2 |S_k|} \sum_{i\in S_k} \tfrac{1}{L_i}\nabla_i F_\mu(x_k)^2 \\
		\le~& F_\mu(x_k) - \tfrac{1}{2 |S_k| L_{S_k}} \|\nabla_{S_k} F_\mu(x_k)\|_2^2 \\
		\le~& F_\mu(x_k) - \tfrac{1}{2 |S_k| L_{S_k}} \|\nabla F_\mu(x_k)\|_2^2.
	\end{align*}
	From strong convexity of $F_\mu(x)$ on $\{x_k\}_{k=0}^{\infty} \cup \{x^*_{\mu}\}$ (Lemma~\ref{lemma:strongconvexity} and Corollary~\ref{cor:strongconvalgo}) we have that 
	\[
		\|\nabla F_\mu(x_k)\|_2^2 \le 2 \gamma (F_\mu(x_k) - F_\mu^*).
	\]
	Therefore, we get 
	\[
		\mathbb{E}\left[F_\mu(x_{k+1}) \ | \ x_k\right] \le F_\mu(x_k)  - \tfrac{\gamma}{|S_k| L_{S_k}} (F_\mu(x_k) - F_\mu^*),
	\]
	and so
	\[
		\mathbb{E}\left[F_\mu(x_{k+1}) \ | \ x_k\right]  - F_\mu^* \le \left(1 - \tfrac{\gamma}{|S_k| L_{S_k}} \right)(F_\mu(x_k) - F_\mu^*).
	\]
	Note that $S_k \subseteq \supp(x_\mu^*)$, thus $|S_k| \le |\supp(x_\mu^*)| \le |\supp(Bx_\mu^*)| \le |\Delta|$. 
	Let $L_{\mu}:= \max_{i\in \supp(x_\mu^*)} L_i$, then $L_{\mu} \ge L_{S_k}$. Using these simple inequalities we get
	\[
		\mathbb{E}\left[F_\mu(x_{k+1}) \ | \ x_k\right]  - F_\mu^* \le \left(1 - \tfrac{\gamma}{|\Delta| L_{\mu}} \right)(F_\mu(x_k) - F_\mu^*).
	\]
	Take conditional expectations over all $x_{k-1}$, $x_{k-2}$, $\ldots$, $x_1$, $x_0$ we get
	\[
		\mathbb{E}\left[F_\mu(x_{k+1}) \right]  - F_\mu^* \le \left(1 - \tfrac{\gamma}{|\Delta| L_{\mu}} \right)^k(F_\mu(x_0) - F_\mu^*).
	\]
	Therefore, after $K = \mathcal{O}\big(\tfrac{|\Delta|L_{\mu}}{\gamma}\log \tfrac{1}{\epsilon}\big)$ iterations, one has
	\[
		\mathbb{E}[F_\mu(x_k)] - F_\mu^* \le \tfrac{\epsilon}{2}.
	\]
	Using Corollary~\ref{cor:interchangeability} we get that, after $K$ iterations,
	\[
		\mathbb{E}[F_\mu^l(x_k)] - F_\mu^{l*}  = \mathbb{E}[F_\mu(x_k)] - F_\mu^* \le \tfrac{\epsilon}{2},
	\]
	where $F_\mu^{l*}$ is the optimal objective value of \eqref{eq:localLq}. 
	It then follows from Lemma~\ref{lemma:smoothapprox} that
	\[
		\mathbb{E}[F(x_K)] - F^* \ \le \ \mathbb{E}[F_\mu^l(x_K)] - F_\mu^{l*} + \tfrac{1}{q}\mu^q|\Delta| \ \le \ \tfrac{\epsilon}{2} + \tfrac{1}{q}\mu^q|\Delta|.
	\]
	Hence setting $\mu = \mathcal{O}\big(\big(\tfrac{\epsilon}{|\Delta|}\big)^{1/q}\big)$ gives the required iteration complexity.
\end{proof}

\begin{corollary}[Running time]	
\label{cor:runtime}
	If we pick $\mu = \mathcal{O}\big(\big(\tfrac{\epsilon}{|\Delta|}\big)^{1/q}\big)$, 
	the total running time of Algorithm~\ref{alg:coordinatedescent} to obtain an $\epsilon$ accurate solution of \eqref{eq:dualLqreg} is 
	$\mathcal{O}\big(\tfrac{|\Delta|\bar{d}^2}{\gamma}\big(\frac{|\Delta|}{\epsilon}\big)^{2/q-1} \log\frac{1}{\epsilon}\big)$.
\end{corollary}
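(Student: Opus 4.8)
The plan is to read off the iteration count $K'$ from Theorem~\ref{thm:iterationcomplexity} and multiply it by a bound of $\mathcal{O}(\bar d)$ on the work performed in each iteration, where $\bar d = \max_{i\in\supp(x_\mu^*)}\deg(i)$. With the prescribed choice $\mu=\mathcal{O}\big((\epsilon/|\Delta|)^{1/q}\big)$, running Algorithm~\ref{alg:coordinatedescent} for $K' = \mathcal{O}\big(\tfrac{|\Delta|\bar d}{\gamma}\big(\tfrac{|\Delta|}{\epsilon}\big)^{2/q-1}\log\tfrac1\epsilon\big)$ iterations already guarantees $\mathbb{E}[F(x_{K'})]-F^*\le\epsilon$ by that theorem (and the algorithm may only stop earlier, when $S_k=\emptyset$), so it suffices to show that each iteration costs $\mathcal{O}(\bar d)$ and that the one-time initialization cost is dominated; multiplying then gives the claimed $\mathcal{O}\big(\tfrac{|\Delta|\bar d^2}{\gamma}\big(\tfrac{|\Delta|}{\epsilon}\big)^{2/q-1}\log\tfrac1\epsilon\big)$.

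For the per-iteration cost, the key points are locality and the monotone-gradient structure of Lemma~\ref{lemma:monotonegradient}. I would maintain across iterations the gradient entries $\nabla_i F_\mu(x_k)$ for the relevant nodes together with the set $S_k=\{i:\nabla_i F_\mu(x_k)<0\}$, stored so that a uniform sample from $S_k$ is drawn in $\mathcal{O}(1)$ time. In iteration $k$ we pick $i_k\in S_k$, perform the single-coordinate update $x_{k+1}(i_k)=x_k(i_k)-\tfrac{\mu^{2-q}}{\deg(i_k)}\nabla_{i_k}F_\mu(x_k)$, and leave every other coordinate fixed. Since $\nabla_i F_\mu(x)$ is a sum over the edges incident to $i$, only $\nabla_{i_k}F_\mu$ and the entries $\nabla_j F_\mu$ with $j\sim i_k$ change (this is exactly Lemma~\ref{lemma:monotonegradient}): recomputing $\nabla_{i_k}F_\mu(x_{k+1})$ from scratch costs $\mathcal{O}(\deg(i_k))$, each neighbor $j$ needs an $\mathcal{O}(1)$ correction to the one term of $\nabla_j F_\mu$ coming from edge $(i_k,j)$, and then the membership of $i_k$ and of each such $j$ in $S_{k+1}$ is refreshed in $\mathcal{O}(1)$ each. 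Hence iteration $k$ takes $\mathcal{O}(\deg(i_k))$ time.

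It remains to bound $\deg(i_k)$ by $\bar d$. Because $\nabla_{i_k}F_\mu(x_k)<0$, the update strictly increases $x(i_k)$ above its nonnegative value, so $i_k\in\supp(x_{k+1})$; and by the monotonicity of the iterates and their convergence to $x_\mu^*$ (Theorem~\ref{thm:convergence}, also used in the proof of Corollary~\ref{cor:strongconvalgo}), $\supp(x_{k+1})\subseteq\supp(x_\mu^*)$. Thus every node ever updated lies in $\supp(x_\mu^*)$ and has degree at most $\bar d$, so iteration $k$ costs $\mathcal{O}(\bar d)$ and the $K'$ iterations together cost $\mathcal{O}(K'\bar d)=\mathcal{O}\big(\tfrac{|\Delta|\bar d^2}{\gamma}\big(\tfrac{|\Delta|}{\epsilon}\big)^{2/q-1}\log\tfrac1\epsilon\big)$. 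Finally, at $x_0=0$ the smoothing terms vanish, giving $\nabla_i F_\mu(0)=\deg(i)-\Delta(i)$ and $S_0=\{i:\Delta(i)>\deg(i)\}\subseteq S$, which is assembled from the seed set's adjacency lists in $\mathcal{O}(\vol(S))=\mathcal{O}(|\Delta|)$ time; since $|\Delta|$ is dominated by the iteration cost in every regime (e.g.\ plugging the worst-case $\gamma=\Theta(|\Delta|^{-p})$ from Lemma~\ref{lemma:strongconvexity}), this completes the bound. The one point requiring care is the bookkeeping argument, namely that a single coordinate step disturbs only $\mathcal{O}(\deg(i_k))$ gradient entries and set memberships and that all of these can be maintained incrementally in time proportional to $\deg(i_k)$; once that is in place the statement is a direct substitution of Theorem~\ref{thm:iterationcomplexity} into the per-iteration cost.
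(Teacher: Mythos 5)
Your proof is correct and follows essentially the same route as the paper's (much terser) argument: multiply the iteration count $K'$ from Theorem~\ref{thm:iterationcomplexity} by an $\mathcal{O}(\bar d)$ per-iteration cost, justified by the fact that a coordinate update at $i_k$ only touches $i_k$ and its neighbors when refreshing the gradient and the set $S_{k+1}$. Your additional details --- bounding $\deg(i_k)\le\bar d$ via $\supp(x_k)\subseteq\supp(x_\mu^*)$ and accounting for the initialization cost --- are correct elaborations of what the paper leaves implicit.
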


\begin{proof}
	This is straightforward by noticing that at each step in Algorithm~\ref{alg:coordinatedescent}, we touch only the nodes $j$ such that $j \sim i_k$, 
	for updating gradient vector to $\nabla F_\mu(x_{k+1})$ and for obtaining $S_{k+1}$.
\end{proof}

Finally, we remark that Algorithm~\ref{alg:coordinatedescent} is easily parallelizable, as all coordinates in $S_k$ can be updated at the same time, without sacrificing locality.

\section{Empirical Set-up and Results}

\subsection{Computing platform and implementation detail}

We implemented Algorithm~\ref{alg:coordinatedescent} in Julia\footnote{Our code is available at \url{http://github.com/s-h-yang/pNormFlowDiffusion}.}. When $p = q = 2$, the objective function of the regularized $q$-norm cut problem~\eqref{eq:dualLqreg} has coordinate Lipschitz constants $L_i = \deg(i)$, therefore we can directly solve~\eqref{eq:dualLqreg} in linear and strongly local running time. As discussed earlier, coordinate methods enjoy a natural combinatorial interpretation as routing mass in the underlying graph. Algorithm~\ref{alg:combinatorialL2} provides a direct specialization of Algorithm~\ref{alg:coordinatedescent} to the case $q=2$, where we describe the algorithmic steps in the equivalent combinatorial setting as diffusing excess mass in the graph.

{\centering
\begin{minipage}{.8\linewidth}
\begin{algorithm}[H]
	\noindent\fbox{
		\begin{varwidth}{\dimexpr\linewidth-2\fboxsep-2\fboxrule\relax}
			\begin{algorithmic}
				\STATE 
					\begin{enumerate}[itemsep=0mm]
						\item Initially, $x(v)  = 0$ and $\ex(v) = \max\{\Delta(v) - \deg(v), 0\}$ for all $v \in V$.
						\item While $\ex(v) > 0$ for some node $v$:
							\begin{enumerate}[label=(\alph*),itemsep=0mm]
								\item Pick any $v$ where $\ex(v) > 0$.
								\item Apply $\texttt{push}(v)$.
							\end{enumerate}
						\item Return $x$.
					\end{enumerate}
			\end{algorithmic}
		\end{varwidth}
	}
	\noindent\fbox{
		\begin{varwidth}{\dimexpr\linewidth-2\fboxsep-2\fboxrule\relax}
			\begin{algorithmic}
				\STATE $\texttt{push}(v)$:
				\vspace{2mm}
				\STATE Make the following updates:
					\begin{enumerate}[itemsep=0mm]
						\item $x(v) \gets x(v) + \ex(v)/\deg(v)$.
						\item $\ex(v) \gets 0$.
						\item For each node $u \sim v$:
							~$\ex(u) \gets \ex(u)+ \ex(v)/\deg(v)$.
					\end{enumerate}
			\end{algorithmic}
		\end{varwidth}
	}
	\caption{Coordinate solver for \eqref{eq:dualLqreg} when $q=2$}
	\label{alg:combinatorialL2}
\end{algorithm}
\end{minipage}
\par}
\vspace{5mm}

For general $p > 2$ and $1 < q < 2$, our implementation adds an additional line-search step. More specifically, instead of using the fixed step-sizes $\mu^{2-q}/\deg(i_k)$ given in Algorithm~\ref{alg:coordinatedescent}, we use binary search to find step-sizes $\alpha_k$ such that
\[
	\nabla_{i_k}F_\mu\big(x_k - \alpha_k\nabla_{i_k}F_\mu(x_k)e_{i_k}\big) = 0.
\]
This leads to coordinate minimization steps that can improve practical convergence. Computing the required step-sizes $\alpha_k$ through binary line-search is possible because the partial gradients are monotone (cf. Lemma~\ref{lemma:monotonegradient}).

Finally, for efficient implementation that avoids iteratively sampling {\em with} replacement the indices for coordinate updates, we adopt a sampling {\em without} replacement approach that is seen in random-permutation cyclic coordinate updates~\cite{LW2018}. That is, every time an index set $S_k$ is constructed, we loop over all coordinates in $S_k$ randomly without replacement, before computing a new index set $S_{k+1}$.

\subsection{Diffusion on a dumbbell}

The best way to visualize $p$-norm flow diffusions for various $p$ values is to start the diffusion processes on the same graph and the same set of seed nodes, with equal amount of initial mass. To this end, we run $p$-norm diffusions for $p \in \{2,4,8\}$ on a synthetic tiny ``dumbbell'' graph obtained by removing edges from a $7 \times 7$ grid graph. We pick a single seed node which locates on one side of the ``bridge'', and set $|\Delta| = 121$. For each $p$, we plot optimal dual variables in Figure~\ref{fig:dumbbell}, where we use color intensities and circle sizes to indicate the relative magnitude of dual values, i.e., brighter colors and larger circles size represent higher dual values for a fixed $p$, and no circle means the corresponding node has zero dual value.

\begin{figure}[ht]
	\centering
	\begin{subfigure}{.33\textwidth}
  		\centering
  		\includegraphics[width=.75\textwidth]{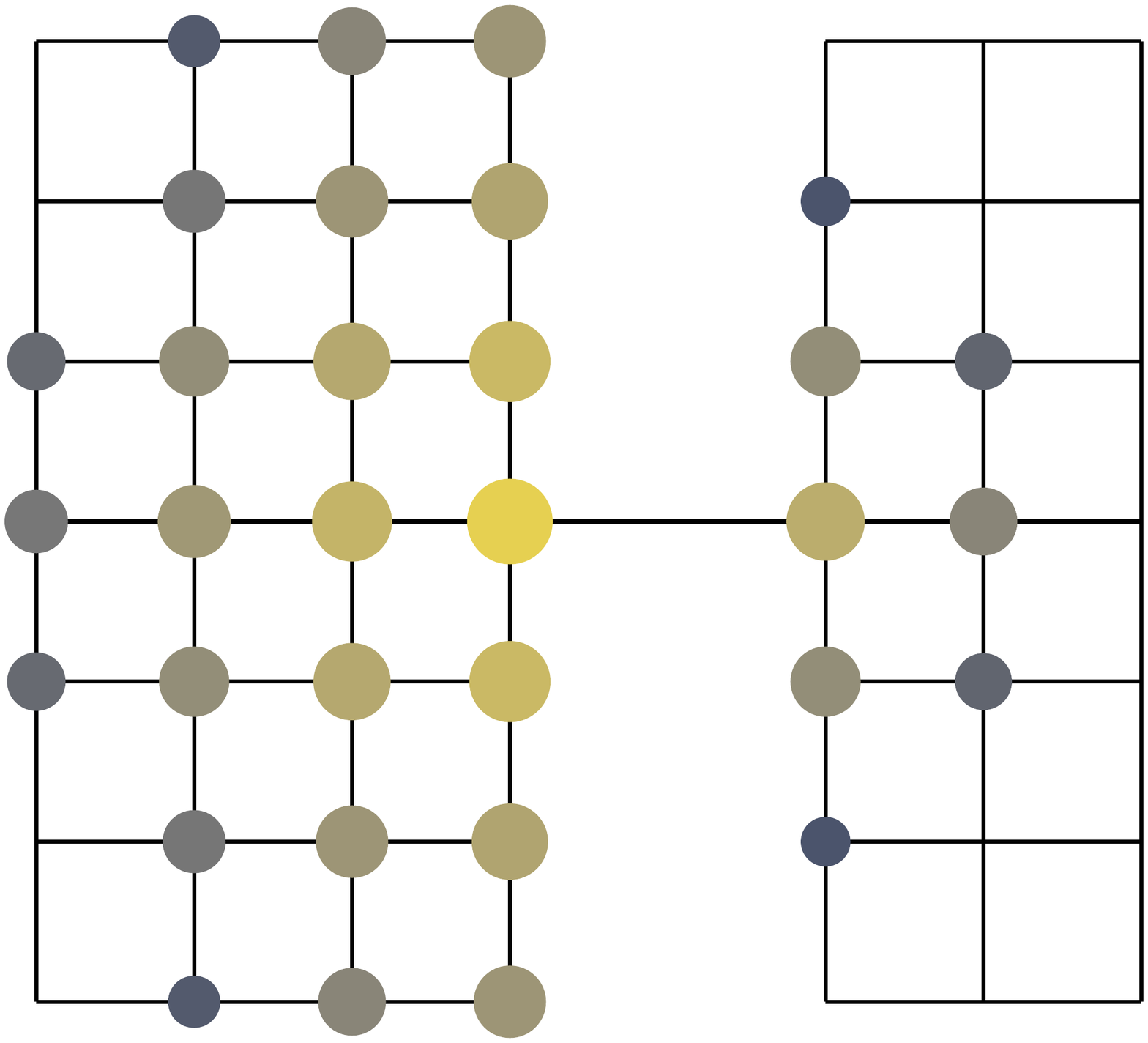}
		\caption{2-norm diffusion}
	\end{subfigure}%
	\begin{subfigure}{.33\textwidth}
  		\centering
  		\includegraphics[width=.75\textwidth]{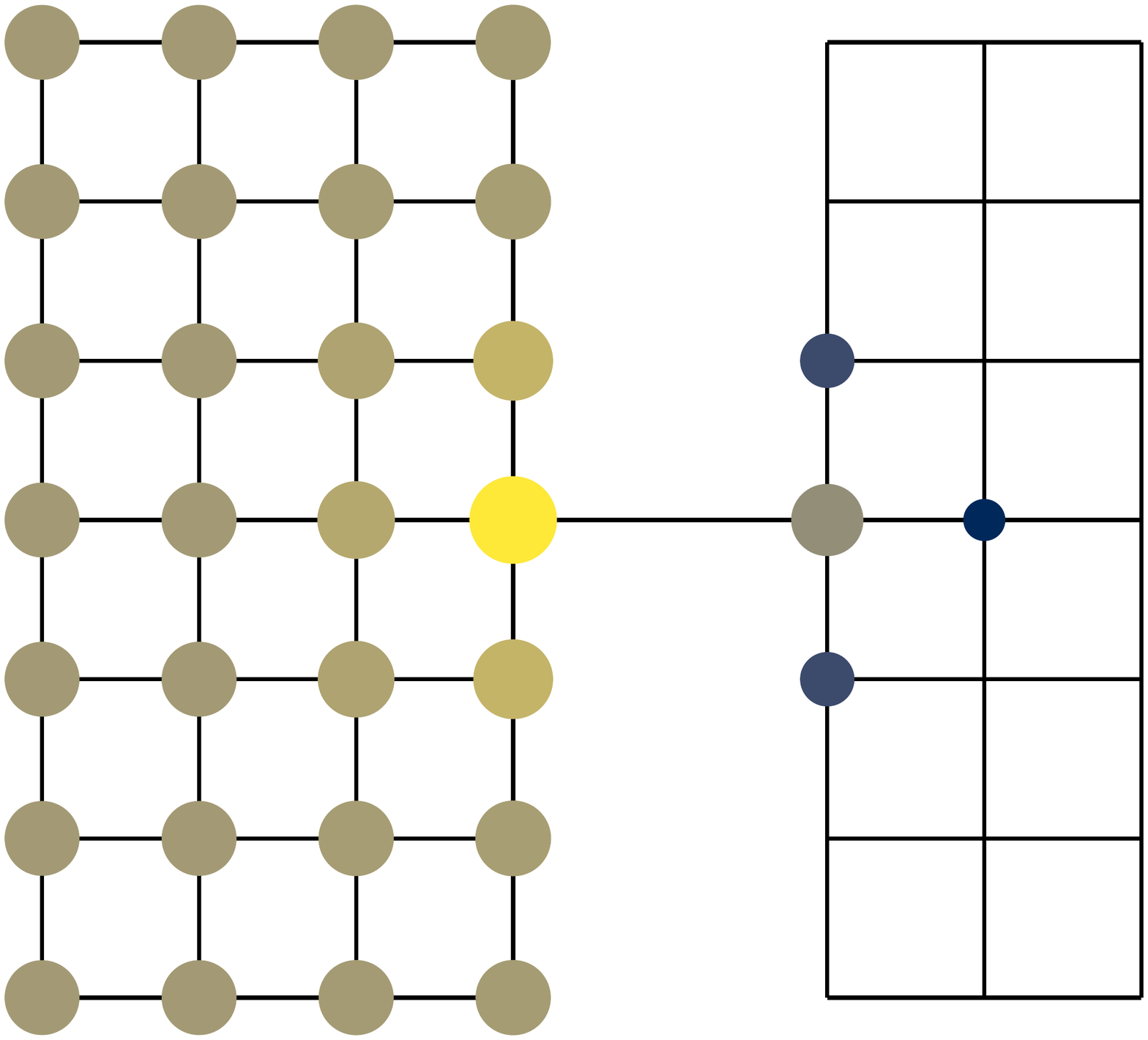}
		\caption{4-norm diffusion}
	\end{subfigure}%
	\begin{subfigure}{.33\textwidth}
  		\centering
  		\includegraphics[width=.75\textwidth]{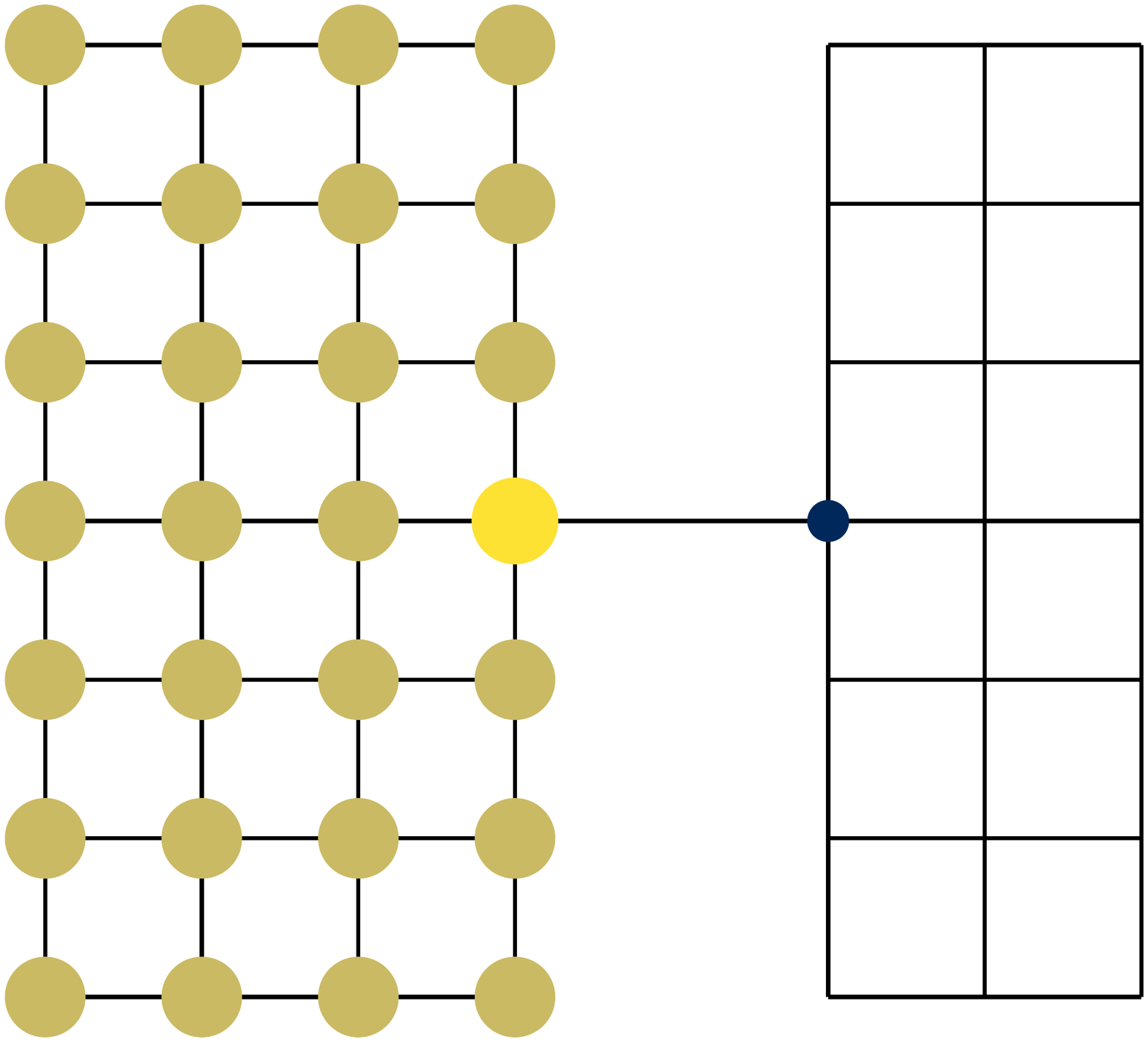}
		\caption{8-norm diffusion}
	\end{subfigure}
	\caption{Diffusion on a dumbbell: color intensities and circle sizes are chosen to reflect relative magnitude of optimal dual variables}
    	\label{fig:dumbbell}
\end{figure}

Recall that a dual variable is nonzero only if the corresponding node is saturated (i.e., the mass it holds equals its degree). Observe that 2-norm diffusion leaks a lot of mass to the other side of dumbbell, whereas 4-norm and 8-norm diffusions saturate entire left-hand side, without leaking much mass to the right. The reason that this happens is because $p=4$ and $p=8$ put significantly larger penalty on the flow that passes through the ``bridge'', making it difficult to send mass over to the other side. 

Note that, if we were to perform the sweep cut procedure described in Section~\ref{sec:clustering}, then 2-norm diffusion would fail to recover the ``correct'' cluster, while both 4-norm and 8-norm diffusions return the entire left-hand side of dumbbell, which is the best possible result in terms of low conductance. Although this example is overly simplistic, it does demonstrate that $p$-norm flow diffusion with higher $p$ values are more sensitive to bottlenecks when routing mass around a seed node, and, on the other hand, it is possible to overcome such bottleneck even when $p$ is slightly larger than 2, say $p = 4$ and $p = 8$. Indeed, our subsequent experiments show that there is already a significant improvement in the context of local graph clustering, when raising $p=2$ to $p=4$.

\subsection{Experiments}

The goal of our experiments is two-fold. First, we carry out experiments on various LFR synthetic graphs~\cite{LFR08} and demonstrate that the theoretical guarantees of $p$-norm flow diffusion are well reflected in practice. Second, we show the advantage of $p$-norm diffusion for local graph clustering tasks on four real datasets consisting of both social and biological networks. We compare the performance of $p$-norm flow diffusion with $\ell_1$-regularized PageRank~\cite{FKSCM2017} and nonlinear diffusion under power transfer~\cite{ID19}. 
Although our theoretical analysis holds for $p\in (1,\infty)$ and Algorithm~\ref{alg:coordinatedescent} converges linearly for any $p \ge 2$, we think in practice the most interesting regime is when $p$ is a small constant, e.g. $p\in [2,8]$, as our theory suggests that the marginal gain in terms of conductance guarantee diminishes as $p$ grows large. We elaborate this in the experiment on LFR synthetic graphs by comparing marginal improvements in the performance by raising $p$ from 2 to 4, and from 4 to 8.

\subsubsection{Datasets}

The LFR model~\cite{LFR08} is a widely used benchmark for evaluating community detection algorithms. It is essentially a stochastic block model with the additional property that nodes' degrees follow the power law distribution, and there is a parameter $\mu$ controlling what fraction of a node's neighbours is outside the node's block. We use synthetic LFR graphs generated under the following parameter setting: number of nodes is 1000, average degree is 10, maximum degree is 50, minimum community size is 20, maximum community size is 100, power law exponent for degree distribution is 2, power law exponent for community size distribution is 1. We consider different $\mu$ values between 0.1 and 0.4 with step 0.02. This range means that the synthetic graphs contain reasonable noisy (but not completely noise) clusters. 

The four real datasets include both social and biological networks. The graphs that we consider are all unweighted and undirected. Table~\ref{tab:sumdata} shows some basic characteristics of these graphs. 

\begin{table}[h]
	\caption{Summary of real-world graphs}
	\label{tab:sumdata}
	\vspace{2mm}
	\centering
	\begin{adjustbox}{max width=\textwidth}
	\begin{tabular}{cccl}
	dataset & number of nodes & number of edges &  \multicolumn{1}{c}{\rotatebox[origin=c]{0}{description}}\\
	\midrule
	FB-Johns55 & $5157$ & $186572$ & Facebook social network for Johns Hopkins University\\   
	Colgate88 & $3482$ & $155043$ & Facebook social network for Colgate University\\   
	Sfld & $232$ & $15570$ & Pairwise similarities of blasted sequences of proteins\\    
	Orkut & $3072441$ & $117185083$ & Large-scale on-line social network\\   
	\midrule
	\end{tabular}
	\end{adjustbox}
\end{table}

The two Facebook graphs are chosen from the Facebook100 dataset based on their assortativity values in the first column of Table A.2 in~\cite{TMP2012}, where the data were first introduced and analyzed. Each of these comes with some features, e.g., gender, dorm, major index and year. We consider a set of nodes with the same feature as a ground truth cluster, and filter the ``ground truth'' clusters by setting a 0.6 threshold on maximum conductance. We also omit clusters whose volume is larger than one third of the volume of the entire graph, since discovering clusters whose sizes are close to the entire graph is not the purpose of local clustering.

The biology dataset Sfld contains pairwise similarities of blasted sequences of 232 proteins belonging to the amidohydrolase superfamily~\cite{brown2006gold}. A gold standard is provided describing families within the given superfamily. According to the gold standard the amidrohydrolase superfamily contains 29 families. We consider each family as a ground truth cluster, and filter them by setting a 0.9 threshold on the maximum conductance. 

The last dataset Orkut is a free on-line social network where users form friendship each other. It can be downloaded from~\cite{snapnets}. This network comes with 5000 ground truth communities, which we filter by setting minimum community size 50, maximum cut conductance 0.5, and minimum ratio between internal connectivity (i.e., the smallest nonzero eigenvalue of the normalized Laplacian of the subgraph defined by the cluster) and cut conductance to 0.85. This resulted in 11 reasonably noisy clusters, having conductances between 0.4 and 0.5.

We include the statistics of all ground truth clusters that we used in the experiments in Table~\ref{tab:groundtruthclusters}.

\begin{table}[h]
    \caption{Filtered ``ground truth'' clusters for real-world graphs}
    \label{tab:groundtruthclusters}
    \vspace{2mm}
    \centering
    \begin{tabular}{ccccc}
    dataset & feature & volume & nodes  & conductance\\
     \midrule
     \multirow{5}{*}{\rotatebox[origin=c]{0}{FB-Johns55}} & year $2006$& $81893$ & $845$& $0.54$\\
										    & year $2007$& $89021$  & $842$ & $0.49$\\     										    
										    & year $2008$& $82934$  & $926$& $0.39$\\
     										    & year $2009$&  $33059$ & $910$& $0.21$\\
										    & major index $217$ & $10697$ & $201$ & $0.26$ \\
     \midrule
     \multirow{6}{*}{\rotatebox[origin=c]{0}{Colgate88}}  & year $2004$ & $14888$  &$230$ & 0.54\\
     										    & year $2005$ & $50643$  & $501$ & $0.50$\\
     										    & year $2006$ & $62065$  & $557$ & $0.48$\\
     										    & year $2007$ & $68382$  & $589$ & $0.41$\\
							                             & year $2008$ & $62430$  & $641$ & $0.29$\\
							                             & year $2009$ & $35379$  & $641$ & $0.11$\\						                             
    \midrule
    \multirow{6}{*}{\rotatebox[origin=c]{0}{Sfld}}		& urease 				& 31646 	& 100 	& 0.42 \\
     										& AMP 				& 3186 	& 28		& 0.53 \\
										& phosphotriesterase 	& 381 	& 7		& 0.78 \\
     										& adenosine 			& 1062	& 10		& 0.83 \\
										& dihydroorotase3 		& 494 	& 7 		& 0.83 \\						    
										& dihydroorotase2 		& 3119	& 13		& 0.90 \\
    \midrule
    \multirow{11}{*}{\rotatebox[origin=c]{0}{orkut}}	& A 	& 49767 	& 383 	& 0.42 \\
     										& B 	& 31912 	& 202 	& 0.45 \\
										& C 	& 16022 	& 141 	& 0.45 \\
										& D 	& 11698 	& 113 	& 0.46 \\
     										& E	& 26248 	& 194 	& 0.47 \\							    
										& F 	& 4617 	& 64 		& 0.47 \\	
										& G 	& 13786 	& 128 	& 0.47 \\
										& H	& 14109 	& 107 	& 0.48 \\
										& I 	& 18652 	& 195 	& 0.49 \\
     										& J 	& 41612 	& 318 	& 0.50 \\
										& K 	& 20204 	& 223 	& 0.50 \\
    \midrule
    \end{tabular}
\end{table}

\subsubsection{Methods and parameter setting}

We compare the performance of $p$-norm flow diffusion with $\ell_1$-regularized PageRank~\cite{FKSCM2017} and nonlinear diffusion under power transfer~\cite{ID19}. Given a starting node $v_s$, teleportation probability $\alpha$, and tolerance parameter $\rho$, the $\ell_1$-regularized PageRank is an optimization problem whose optimal solution is an approximate personalized PageRank (APPR) vector~\cite{ACL06}. This $\ell_1$-regularized variational formulation allow us to apply coordinate method and obtain an APPR vector in linear and strongly local running time. The nonlinear diffusion model iteratively applies a point-wise nonlinear activation to node values after each matrix-vector product between the Laplacian matrix and the incumbent node vector. Since it is demonstrated in~\cite{ID19} that a nonlinear transfer defined by the power function $u \mapsto u^{0.5}$ has the best overall performance when compared to other nonlinear functions like $\tanh$ or heat kernel, we only compare $p$-norm flow diffusion with nonlinear diffusion governed by this power function. We choose $p \in \{2,4\}$ for $p$-norm diffusion and demonstrate the advantage of our method even when $p$ is a small constant.

Our goal here is to compare the behaviour of different algorithms under a unified setting, and not to fine tune any particular model. Therefore, besides an additional experiment on the LFR synthetic graphs that examines the effect of having a larger overlap between an initial set of seed nodes and the target cluster,  we always start $p$-norm flow diffusion from a single seed node, and set $|\Delta| = t \cdot \vol(C)$ for some constant factor $t$ and some target cluster $C$ (recall from Assumption~\ref{assum:source} this is WLOG). In particular, on the Facebook datasets we set $t = 3$ because the target clusters already have a large volume, and for both the LFR and Orkut datasets we set $t=5$. On the other hand, because the clusters in Sfld are very noisy, we vary $t \in \{1,2,\ldots,10\}$ and pick the cluster with the lowest conductance. In all cases, we make sure the choice of $t$ is such that $|\Delta|$ is less than the volume of the whole graph. For the nonlinear diffusion model with power transfer, we use the same parameter setting as what the authors suggested in~\cite{ID19}. The $\ell_1$-regularized PageRank is the only linear model among all methods that we compare, so we allow it to use ground truth information to choose the teleportation parameter $\alpha$ giving the best conductance result. We tune $\alpha$ by picking $\alpha \in \{\lambda/8, \lambda/4, \lambda/2, \lambda, 2\lambda\}$, where $\lambda$ is the smallest nonzero eigenvalue of the normalized Laplacian for the subgraph that corresponds to the target cluster. Since the support size of APPR vector is bounded by $1/\rho$~\cite{FKSCM2017}, and the support size of dual variables for $p$-norm diffusion is bounded by $|\Delta|$, for comparison purposes, we set the tolerance parameter $\rho$ for $\ell_1$-regularized PageRank so that $\rho = 1/|\Delta|$.

\subsubsection{Results on LFR synthetic graphs}

Our theory indicates (not surprisingly) that better overlap of the input seed set and a target cluster will result in output cluster having better conductance and F1 measure, and we demonstrate this empirically in our first experiment on an LFR synthetic graph with $\mu = 0.3$. Note that the parameter $\mu = 0.3$ means that 30\% of all edges of a node from some ground truth cluster links to the outside of that cluster. Because of this noise level, the goal is not to recover the ground truth exactly, but to obtain a cluster that overlaps well with the target (e.g., has a good F1 score). We have chosen $\mu = 0.3$ because it represents reasonably noisy clusters that are not completely noise. For this experiment, we randomly pick a set of seed nodes $S$ from some target cluster $C$ in the graph and vary the percentage overlap of $S$ in $C$, i.e., $|S|/|C|$. Then for each $p \in \{2,4,8\}$, we run $p$-norm flow diffusion and use the Sweep Cut procedure to find a smallest conductance cluster. Figure~\ref{fig:p_vs_overlap} shows the mean and the variance for the conductance and the F1 measure while varying the ratio $|S|/|C|$. As expected, as the percentage overlap $|S|/|C|$ increases, which also means that the tightest $\alpha$ in Theorem~\ref{thm:main} decreases, we recover clusters with lower conductances and higher F1 scores. Observe that while there is a large gap in both the conductance and the F1 measure between the results for $p=2$ and $p=4$, the gain in raising from $p=4$ to $p=8$ is comparatively marginal.

\begin{figure}[ht]
	\centering
	\begin{subfigure}{.5\textwidth}
  		\centering
  		\includegraphics[width=.9\textwidth]{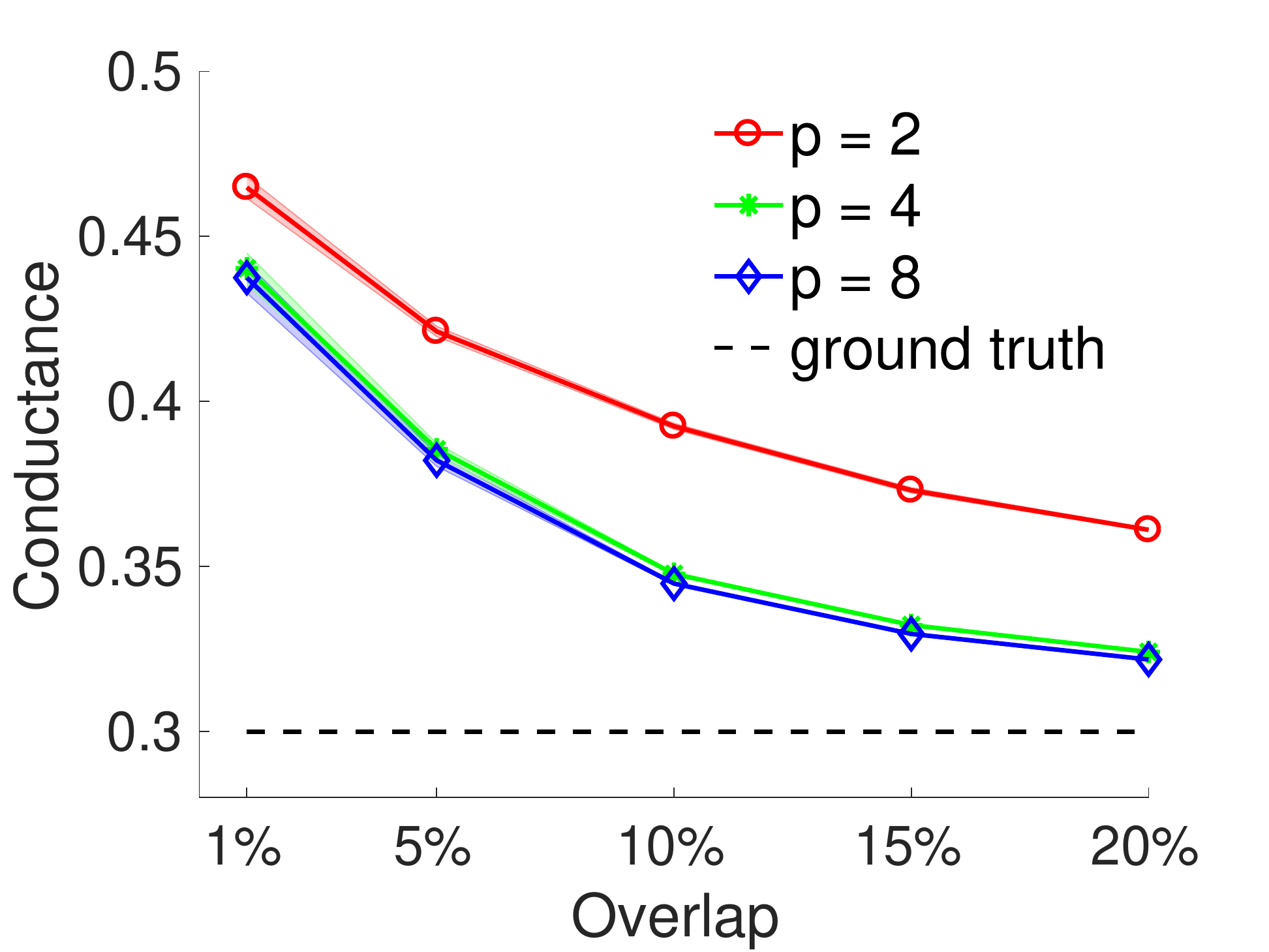}
	\end{subfigure}%
	\begin{subfigure}{.5\textwidth}
  		\centering
  		\includegraphics[width=.9\textwidth]{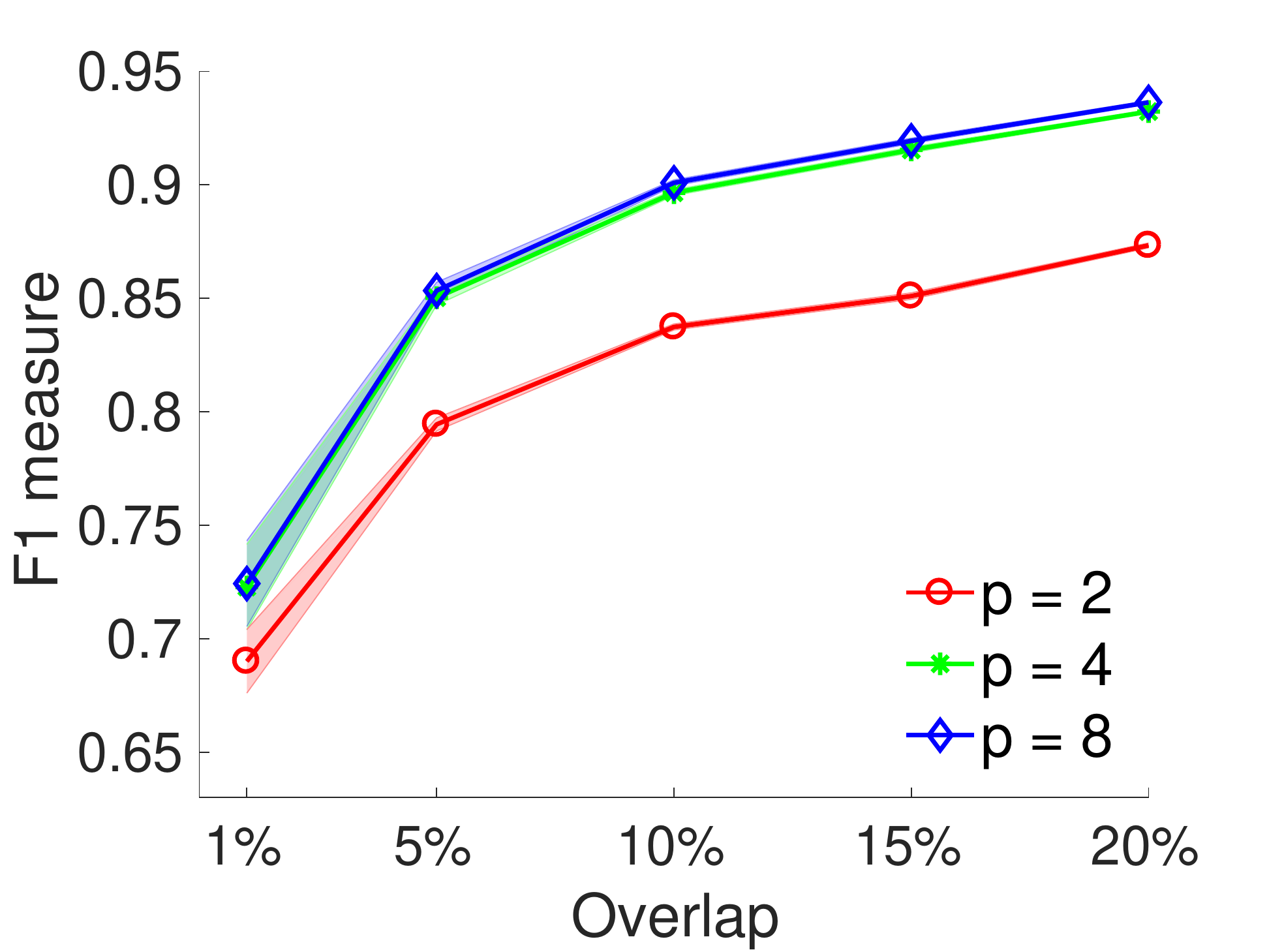}
	\end{subfigure}
	\caption{Conductance and F1 measure from different levels of initial overlap on an LFR synthetic graph. The black dashed line show ground truth conductance. The bands show the variation over 100 trials.}
	\label{fig:p_vs_overlap}
\end{figure}

In all subsequent experiments on both synthetic and real datasets when we compare recovery results with $\ell_1$-regularized PageRank and nonlinear power diffusion, we always start the diffusion process from one seed node, as this is the most common practice for semi-supervised local clustering tasks.

Our second experiment considers all LFR synthetic graphs. For each graph, we start from a random seed node and we repeat the experiment 100 times. Figure~\ref{fig:LFR} shows the mean and the variance for the conductance and the F1 measure while varying $\mu$. Notice that $p$-norm flow diffusion behaves similarly to the fine tuned $\ell_1$-regularized PageRank in both the conductance and the F1 measure when $p=2$, whereas it significantly outperforms other methods when $p=4$ and $p=8$. Observe that there is a slight gain in terms of conductance by raising $p=4$ to $p=8$, but such improvement is marginal. This is not really surprising, since qualitatively the $4$-norm unit ball is already very close to the $\infty$-norm unit ball (i.e. the box).

\begin{figure}[ht]
	\centering
	\begin{subfigure}{.5\textwidth}
  		\centering
  		\includegraphics[width=.9\textwidth]{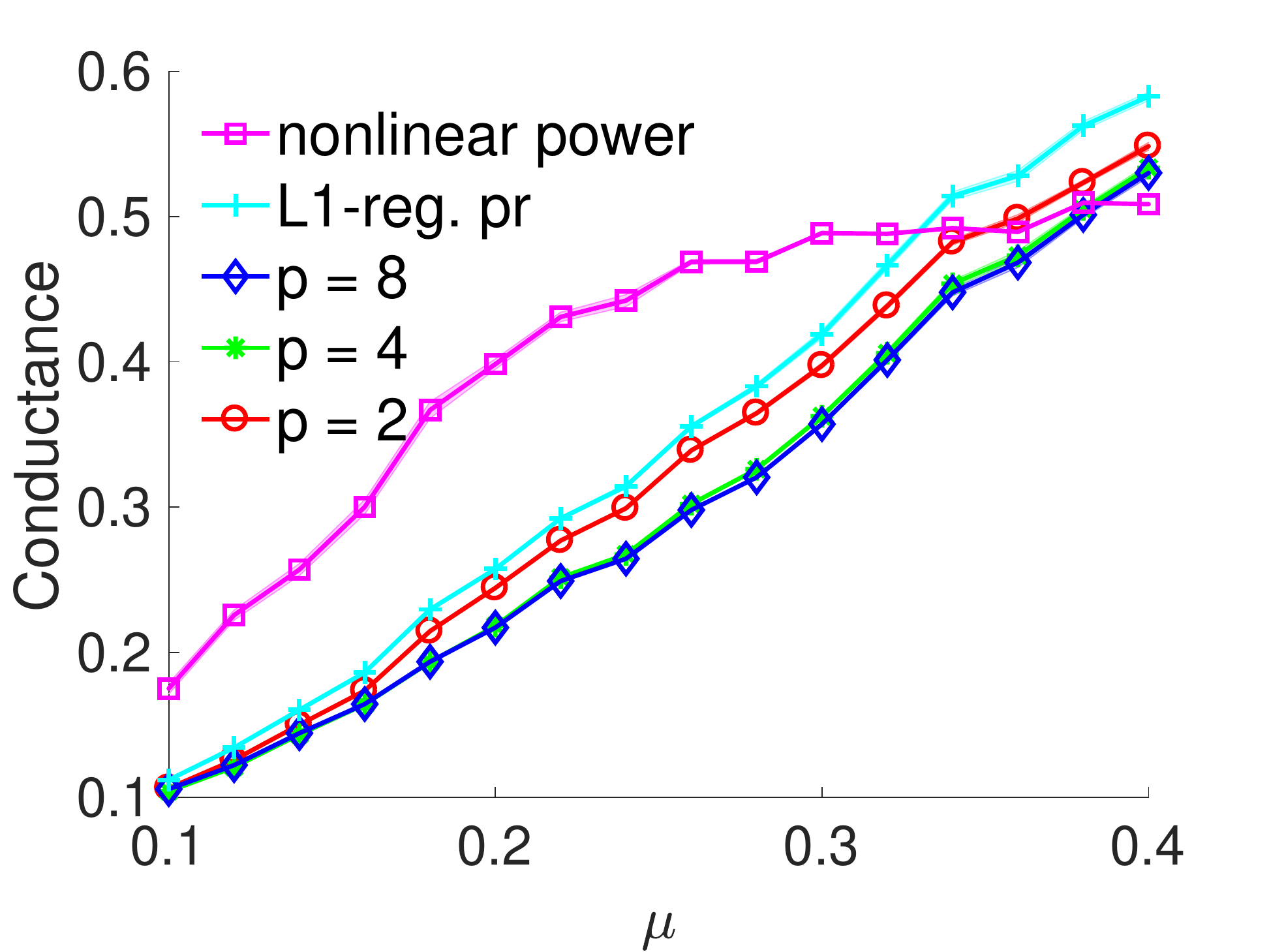}
	\end{subfigure}%
	\begin{subfigure}{.5\textwidth}
  		\centering
  		\includegraphics[width=.9\textwidth]{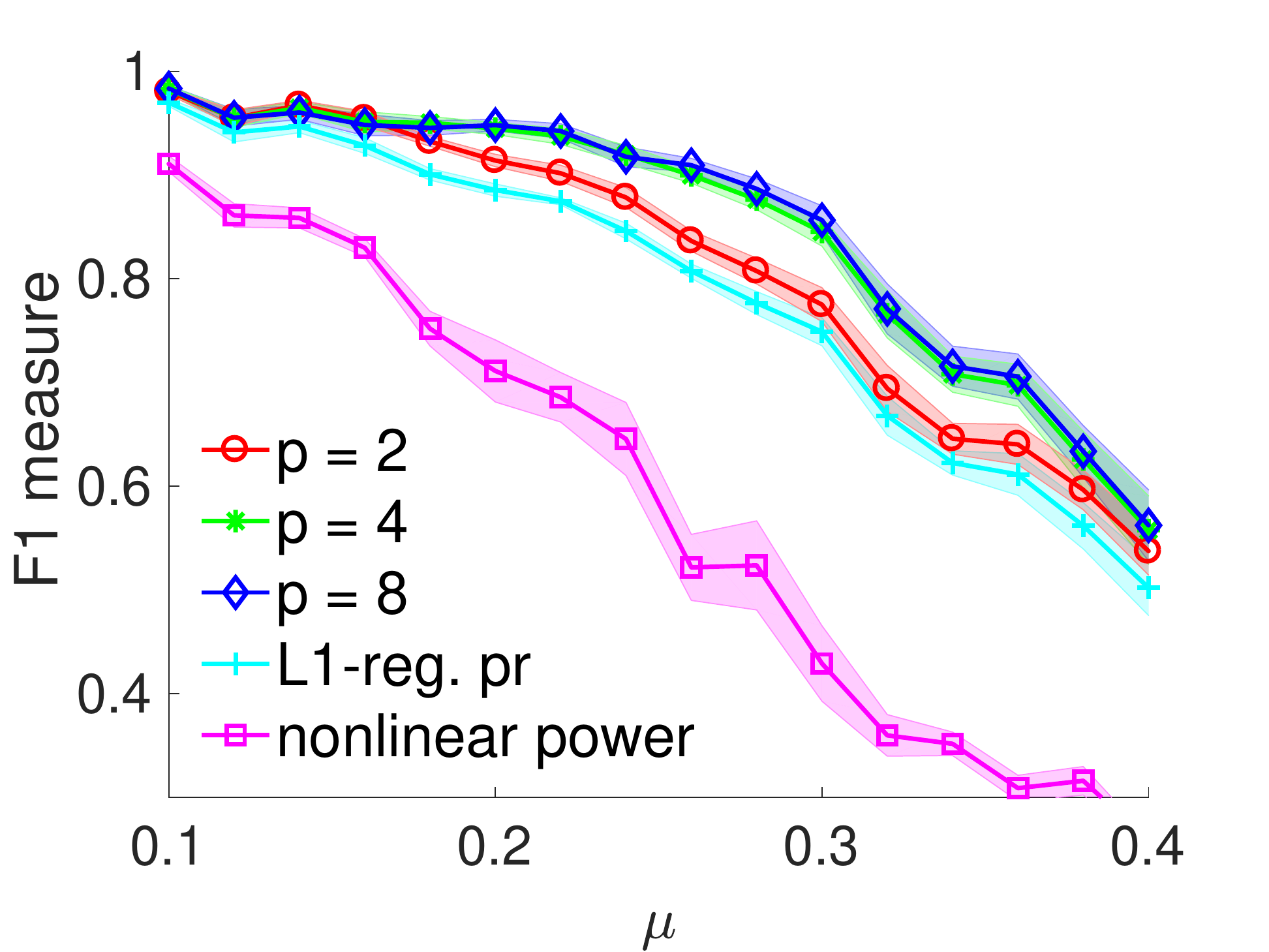}
	\end{subfigure}
	\caption{Conductance and F1 measure of various models on LFR synthetic datasets. The bands show the variation over 100 trials.}
    \label{fig:LFR}
\end{figure}

\subsubsection{Results on real social and biological networks}

For each dataset, we run algorithms starting from each node in each cluster and report the average conductance and F1 measure. We omit nonlinear power diffusion for Orkut, as it does not scale well to large graphs.
The Facebook datasets contain ground truth clusters ranging from low to medium conductance. For almost all clusters, $p$-norm flow diffusion methods have both the best F1 measure and conductance result. The six clusters in the biological dataset Sfld are very noisy, having median conductance around 0.8. Hence it is highly likely that a ground truth cluster is contained in some larger clusters (but not one of the 29 true families) that have much lower conductances. This partially explains why nonlinear power diffusion returns low conductance clusters but has poor recovery performance in terms of F1 measures. Note that the nonlinear diffusion model is the only global method that we compare with. Both $p$-norm flow diffusions and $\ell_1$-regularized PageRank are local methods, and they take advantage of locality to find local clusters that align well with the ground truth. Therefore, the results for the Sfld dataset demonstrate the advantage of local methods at recovering relatively small ground truth clusters. On Orkut dataset, $4$-norm flow diffusion gives best result on all clusters.

 \vspace{-2mm}
\begin{table}[htb!]
    \caption{Results for real-world graphs}
\vspace{2mm}
  \centering
    \begin{adjustbox}{max width=0.7\textwidth}
    \begin{tabular}{ccccccc}
    dataset & feature & measure & $p=2$ & $p=4$ & $\ell_1$-reg. pr & nonlinear \\
    \midrule
     \multirow{9}{*}{\rotatebox[origin=c]{90}{FB-Johns55}} 	& \multirow{2}{*}{year 2006}  	& f1 		& \bf 0.36 	& 0.35 	& 0.31	& 0.31 \\	
     											&						& cond 	& \bf 0.34	& \bf 0.34 	& 0.50	& 0.40  \\\cdashline{2-7}
											& \multirow{2}{*}{year 2007}  	& f1 		& \bf 0.39 	& \bf 0.39 	& 0.38	& 0.36 \\	
     											&						& cond 	& 0.31	& \bf 0.30 	& 0.45	& 0.41  \\\cdashline{2-7}
											& \multirow{2}{*}{year 2008}  	& f1 		& \bf 0.51 	& \bf 0.51 	& \bf 0.51	& 0.37 \\	
     											&						& cond 	& \bf 0.34	& \bf 0.34 	& 0.44	& 0.41  \\\cdashline{2-7}
											& \multirow{2}{*}{year 2009}  	& f1 		& 0.84 	& \bf 0.85 	& 0.83	& 0.49 \\	
     											&						& cond 	& 0.23	& \bf 0.22 	& 0.24	& 0.40  \\\cdashline{2-7}
											& \multirow{2}{*}{major index 217}  & f1 	& 0.85 	& \bf 0.87 	& 0.83	& 0.75 \\	
     											&						& cond 	& 0.23	& \bf 0.22 	& 0.25	& 0.29  \\\hdashline
     \multirow{12}{*}{\rotatebox[origin=c]{90}{Colgate88}}	& \multirow{2}{*}{year 2004}  	& f1 		& 0.50 	& \bf 0.51 	& 0.43	& 0.25 \\	
     											&						& cond 	& 0.66	& 0.66 	& 0.71	& \bf 0.36  \\\cdashline{2-7}
											& \multirow{2}{*}{year 2005}  	& f1 		& \bf 0.45 	& \bf 0.45 	& 0.41	& 0.37 \\	
     											&						& cond 	& 0.51	& 0.51 	& 0.53	& \bf 0.37  \\\cdashline{2-7}
											& \multirow{2}{*}{year 2006}  	& f1 		& \bf 0.45 	& \bf 0.45 	& 0.43	& 0.39 \\	
     											&						& cond 	& \bf 0.37	& \bf 0.36 	& 0.50	& 0.38  \\\cdashline{2-7}
											& \multirow{2}{*}{year 2007}  	& f1 		& 0.49 	& 0.49 	& \bf 0.51	& 0.45 \\	
     											&						& cond 	&\bf  0.34	& \bf 0.34 	& 0.45	& 0.39  \\\cdashline{2-7}
											& \multirow{2}{*}{year 2008}  	& f1 		& 0.76 	& \bf 0.80 	& 0.74	& 0.55 \\	
     											&						& cond 	& 0.31	& \bf 0.30 	& 0.35	& 0.40  \\\cdashline{2-7}
											& \multirow{2}{*}{year 2009}  	& f1 		& 0.96 	& \bf 0.97 	& 0.96	& 0.82 \\
     											&						& cond 	& 0.13	& \bf 0.12 	& 0.13	& 0.24  \\
    \midrule
    \multirow{12}{*}{\rotatebox[origin=c]{90}{Sfld}} 	& \multirow{2}{*}{urease}			& f1 		& 0.74	& \bf 0.76	& 0.72	& 0.63 \\	
     										&							& cond 	& \bf 0.44	& 0.45	& \bf 0.44	& 0.48 \\
										\cdashline{2-7}
										& \multirow{2}{*}{AMP 	}  		& f1 		& \bf 0.83	& \bf 0.83	& \bf 0.83	& \bf 0.83 \\	
     										&							& cond 	& \bf 0.41	& \bf 0.41	& 0.42	& 0.43 \\
										\cdashline{2-7}
										& \multirow{2}{*}{phosphotriesterase}& f1 		& 0.93	& 0.93	& \bf 1.00	& 0.13 \\	
     										&							& cond 	& 0.81	& 0.81	& 0.81	& \bf 0.49 \\
										\cdashline{2-7}
     										& \multirow{2}{*}{adenosine}  		& f1 		& \bf 0.44	& \bf 0.44	& \bf 0.44	& 0.34 \\	
     										&							& cond 	& 0.46	& 0.46	& 0.46	& \bf 0.44  \\
										\cdashline{2-7}
										& \multirow{2}{*}{dihydroorotase3}  	& f1 		& \bf 0.96	& \bf 0.96	& \bf 0.96	& 0.07 \\	
     										&							& cond 	& 0.84	& 0.84	& 0.84	& \bf 0.44 \\
										\cdashline{2-7}
										& \multirow{2}{*}{dihydroorotase2}  	& f1 		& \bf 0.39	& \bf 0.39	& 0.20	& 0.20 \\	
     										&							& cond 	& 0.77	& 0.78	& 0.85	& \bf 0.48\\
    \midrule
    \multirow{20}{*}{\rotatebox[origin=c]{90}{Orkut}} 	& \multirow{2}{*}{A}  	& f1 		& 0.56 	& \bf 0.58 	& 0.49 \\
     										&				& cond 	& 0.48	& \bf 0.47 	& 0.51 \\\cdashline{2-6}
     										& \multirow{2}{*}{B}  	& f1 		& 0.71 	& \bf 0.73 	& 0.66 \\
     										&				& cond 	& 0.35	& \bf 0.33 	& 0.37 \\\cdashline{2-6}
										& \multirow{2}{*}{C}  	& f1 		& 0.63 	& \bf 0.64 	& 0.57 \\
     										&				& cond 	& 0.33	& \bf 0.32 	& 0.35 \\\cdashline{2-6}
										& \multirow{2}{*}{D}  	& f1 		& 0.73 	& \bf 0.76 	& 0.72 \\
     										&				& cond 	& 0.49	& \bf 0.48 	& 0.51 \\\cdashline{2-6}
     										& \multirow{2}{*}{E}  	& f1 		& 0.61 	& \bf 0.62 	& 0.56 \\
     										&				& cond 	& 0.52	& \bf 0.51 	& 0.54 \\\cdashline{2-6}
										& \multirow{2}{*}{F}  	& f1 		& 0.79 	& \bf 0.81 	& 0.76 \\
     										&				& cond 	& 0.52	& \bf 0.51 	& 0.54 \\\cdashline{2-6}
										& \multirow{2}{*}{G}  	& f1 		& 0.72 	& \bf 0.73 	& 0.68 \\
     										&				& cond 	& 0.52	& \bf 0.50 	& 0.53 \\\cdashline{2-6}
										& \multirow{2}{*}{H}  	& f1 		& 0.68 	& \bf 0.70 	& 0.67 \\
     										&				& cond 	& 0.52	& \bf 0.51 	& 0.54 \\\cdashline{2-6}
										& \multirow{2}{*}{I}  	& f1 		& 0.60 	& \bf 0.62 	& 0.56 \\
     										&				& cond 	& 0.49	& \bf 0.48 	& 0.52 \\\cdashline{2-6}
										& \multirow{2}{*}{J}  	& f1 		& 0.52 	& \bf 0.54 	& 0.47 \\
     										&				& cond 	& 0.53	& \bf 0.52 	& 0.56 \\\cdashline{2-6}
										& \multirow{2}{*}{K}  	& f1 		& 0.54 	& \bf 0.56 	& 0.51 \\
     										&				& cond 	& 0.57	& \bf 0.56 	& 0.60 \\									
    \midrule
    \end{tabular}
    \end{adjustbox}
    \label{tab:realdataresult}
\end{table}

\section{Conclusion}
\label{sec:conclusion}

In this work we draw inspiration from spectral and combinatorial  methods for local graph clustering. We propose a new method that is naturally non-linear and strongly local, and offers a spectrum of clustering guarantees ranging from quadratic approximation error (typically obtained by spectral diffusions) to constant approximation error (typically obtained by combinatorial diffusions). We note that the proposed diffusion is different from previous methods, since we analyze the optimal solution of a clearly defined family of optimization problems, rather than the output of an algorithmic procedure with no optimization objective, e.g.\ methods based on random walk. Furthermore, we provide a strongly local algorithm that solves the optimization formulation efficiently, which enables our method to scale to real-world large graphs with billions of nodes and edges. 

We point out that a major advantage of this work is the simplicity of the proposed optimization formulation, clustering analysis, and algorithm design. Our model requires very few parameter tuning, and hence, it is extremely easy to use in practice and delivers consistent results. The algorithm is simple and has a very intuitive combinatorial interpretation, which facilitates possible future extension of $p$-norm flow diffusion to other applications and in much broader contexts, e.g., in community detection and graph semi-supervised learning~\citep{FLGM2020}, in defining localized network centrality measures~\citep{YSWBF2020}.

\newpage
\bibliography{ref}
\bibliographystyle{plain}

\end{document}